\newtheorem{theorem}{Theorem}
\newtheorem{lemma}[theorem]{Lemma}
\newtheorem{claim}{Claim}
\newtheorem{observation}{Observation}
\newtheorem{definition}{Definition}
\newcommand{\alg}{\mathtt{ALG}}
\newcommand{\algprime}{\mathtt{ALG'}}
\newcommand{\algtwo}{\mathtt{\overline{ALG}}}
\newcommand{\opt}{\mathtt{OPT}}
\newcommand{\rr}{\mathtt{RR}}
\newcommand{\rew}{\mathtt{Rew}}
\newcommand{\slopeOC}{m_{\scriptscriptstyle OE}}
\newcommand{\slopeIO}{m_{i0}}
\newcommand{\IMAB}{IMAB }
\newcommand{\imab}{IMAB}
\newcommand{\compRatio}{\mathtt{CR}}
\newcommand{\instance}{\langle k, (f_i)_{i\in [k]}\rangle}
\newcommand{\Ar}{\mathsf{Ar}}
	\newcommand{\vis}[1]{\textcolor{OliveGreen}{#1}}
	\newcommand{\vin}[1]{\textcolor{NavyBlue}{#1}}
	\newcommand{\gan}[1]{\textcolor{purple}{#1}}
	\def\rem#1{{\marginpar{\raggedright\scriptsize #1}}}
	\newcommand{\arir}[1]{\rem{\textcolor{Red}{$\bullet$ #1}}}
	\newcommand{\vish}[1]{\rem{\textcolor{OliveGreen}{$\bullet$ #1}}}
	\newcommand{\vine}[1]{\rem{\textcolor{NavyBlue}{$\bullet$ #1}}}
	\newcommand{\gane}[1]{\rem{\textcolor{purple}{$\bullet$ #1}}}
	\newcommand{\vis}[1]{#1}
	\newcommand{\vin}[1]{#1}
	\newcommand{\gan}[1]{#1}
	\newcommand{\arir}[1]{}
	\newcommand{\vish}[1]{}
	\newcommand{\vine}[1]{}
	\newcommand{\gane}[1]{}
\title{Mitigating Disparity while Maximizing Reward: \\Tight Anytime Guarantee for Improving Bandits}
\author{%
    Vishakha Patil \thanks{Equal Contribution, random order within}\\
    Indian Institute of Science, Bangalore\\
    \texttt{patilv@iisc.ac.in}\\
    \And
    Vineet Nair $^*$ \\
    Arithmic Labs\\
    \texttt{vineet@arithmic.com}\\
    \And
    Ganesh Ghalme\\
    Indian Institute of Technology, Hyderabad\\
    \texttt{ganeshghalme@ai.iith.ac.in}
    \And
    Arindam Khan\\
    Indian Institute of Science, Bangalore\\
    \texttt{arindamkhan@iisc.ac.in}
}
\begin{document}

\maketitle

\begin{abstract}
We study the Improving Multi-Armed Bandit (\imab) problem, where the reward obtained from an arm increases with the number of pulls it receives.
\vis{This model provides an elegant abstraction for many real-world problems in domains such as education and employment, where decisions about the distribution of opportunities can affect the future capabilities of communities and the disparity between them.}
A decision-maker in such settings must consider the impact of her decisions on future rewards in addition to the standard objective of maximizing her cumulative reward at any time.
In many of these applications, the time horizon is unknown to the decision-maker beforehand, which motivates the study of the \IMAB problem in the technically more challenging horizon-unaware setting.
We study the tension that arises between two seemingly conflicting objectives in the horizon-unaware setting: 
a) maximizing the cumulative reward at any time based on current rewards of the arms, and b) ensuring that arms with better long-term rewards get sufficient opportunities even if they initially have low rewards.
We show that, surprisingly, the two objectives are aligned with each other in this setting.
Our main contribution is an {\em anytime} algorithm for the \IMAB problem that achieves the \emph{best possible cumulative reward} while ensuring that the \emph{arms reach their true potential} given sufficient time.
\vis{Our algorithm mitigates the initial disparity due to lack of opportunity and continues pulling an arm till it stops improving.}
We prove the optimality of our algorithm by showing that a) any algorithm for the \IMAB problem, no matter how utilitarian, must suffer $\Omega(T)$ policy regret and $\Omega(k)$ competitive ratio with respect to the optimal offline policy, and b) the competitive ratio of our algorithm is $O(k)$.  
\end{abstract}

\section{Introduction}
\label{sec: introduction}
Machine Learning (ML) algorithms are increasingly being used to make or assist critical decisions that affect people in areas such as education \citep{MARCINKOWSKI2020}, employment \cite{SANCHEZ2020}, and loan lending \cite{DOBBIE2021}.
In these domains, the decisions concerning the distribution of opportunities can affect the future capabilities of individuals or communities that are impacted by these decisions. Further, these decisions may exacerbate or mitigate the disparity that exists between their capabilities.
However, most of the existing literature focuses on static settings without considering the long-term impact of algorithmic decisions \citep{CHOULDECHOVA2020,DWORK2012,HARDT2016}.
Recent work has highlighted the need to study the impact of algorithmic decisions made over multiple time steps \cite{HEIDARI2019,LIU2018,LINDNER2021}.
\vis{In this work, we model such scenarios as a variant of the multi-armed bandit problem, called Improving Multi-armed Bandits, and study the long-term impact of algorithmic decisions when the arms of the bandit \emph{evolve} over time.}

Multi-Armed Bandits (MAB) is a classic framework used to capture decision-making over multiple time steps.
We study a variant of the \vis{non-stationary} MAB problem \cite{GARIVIER2011,AUER2019}, called the Improving Multi-Armed Bandit (\imab) problem, which models scenarios where the capabilities of individuals or communities can improve based on the opportunities they receive. 
In the \IMAB problem, the decision-maker has access to $k$ arms.
Each arm has a reward function associated with it, which is unknown to the decision-maker beforehand.
At each time step, the decision-maker pulls an arm and receives a reward.
The reward obtained by pulling an arm increases with the number of pulls it receives.
The goal of the decision-maker is to pull arms in a manner that maximizes her cumulative reward.
In our examples, 
the individuals or communities  correspond to the arms of the bandit, their capabilities to the rewards received on pulling the arms, and opportunities to the number of pulls that the arms receive.

The \IMAB problem has been previously studied in the horizon-aware setting \vis{with asymptotic regret guarantees} \cite{HEIDARI2016,LINDNER2021} \vis{(see Section \ref{subsec: related work})}.
However, in many practical applications, the time horizon is not known to the algorithm beforehand.
We initiate the study of the \IMAB problem in the horizon-unaware (or {\em anytime}) setting.
In contrast to horizon-aware algorithms, anytime algorithms do not know the time horizon beforehand and hence cannot tailor their decisions to the given time horizon.
Thus, an anytime algorithm must perform well for any finite time horizon without having prior knowledge of it, which poses interesting technical challenges.
Due to its theoretical and practical significance, the design of anytime algorithms for variants of the MAB problem has been of prime interest to the MAB research community (e.g., the popular UCB1 algorithm for stochastic MAB \cite{AUER2002}).

The \IMAB model is also well-motivated in the domain of algorithmic fairness. Fairness through awareness \citep{DWORK2012} is a well-accepted notion of fairness that requires that \emph{similar} individuals or communities be treated similarly.
In the \IMAB model, this could mean quantifying similarity \vis{(or equivalently, disparity)} based on the current rewards (capability) of the arms.
However, we argue that such a blind comparison may be fallacious.
For instance, historical marginalization could lead to differences in the abilities of different individuals or communities to perform a given task, for example, the racial gap observed in SAT scores \cite{REEVES2017}.
One way of mitigating such differences that has also been studied in the MAB literature \cite{LI2019,PATIL2020,CHEN2020}, is through \emph{affirmative action}, where the decision-maker allocates some opportunities to individuals based on attributes such as their race, gender, caste, etc.
In some parts of the world, such policies have been in place for decades (see reservation system in India \cite{SAHOO2009}), while they are banned in several US states \cite{BAKER2019}.
Another popular notion of fairness in the MAB literature is meritocratic fairness \citep{JOSEPH2016}, where arms are compared solely based on their current rewards.
However, in the \IMAB model, meritocracy would identify individuals that are gifted early and provide them more opportunities which would suppress the growth of \emph{late bloomers}, i.e., the individuals that would go on to perform well had they been given more opportunities.
This detrimental effect of meritocracy has also been observed in the real world; for example, the education system in Singapore \cite{GLOBAL2018}.

\begin{wrapfigure}{r}{0.4\textwidth}
  \begin{center}
    \includegraphics[width=0.32\textwidth]{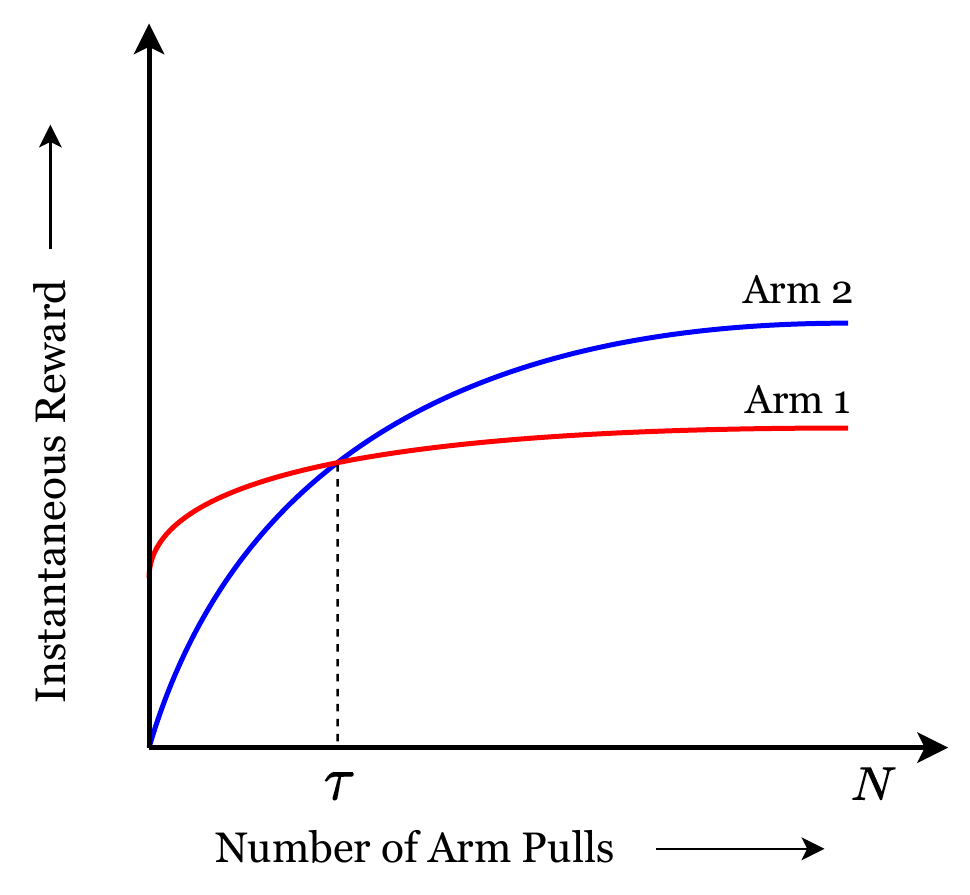}
  \end{center}
  \caption{Two-armed \IMAB Instance}
  \label{fig: intro example}
\end{wrapfigure}
We give a simple example to demonstrate the challenges in the anytime \IMAB problem.
Figure \ref{fig: intro example} shows a two-armed \IMAB instance.
We emphasize here that the rewards of arms change only when an arm is pulled.
In particular, the x-axis denotes the number of arm pulls of an arm and not the time horizon.
As the figure shows, arm 1 is an early gifted arm, and arm 2 is a late bloomer.
Here, a myopic decision-maker that pulls arms based only on the instantaneous rewards will almost never pull arm $2$.
An algorithm that majorly plays arm $1$ may obtain good cumulative reward for a horizon that is less than say $\tau$.
However, for a horizon slightly larger than $\tau$ this algorithm could perform poorly.
Additionally, an algorithm that mostly plays arm 1 will increase the disparity between the two arms.
This highlights the challenges faced by an anytime algorithm in balancing exploitation (pulling arm 1) with exploration (enabling arm 2) in a way that gives good cumulative reward.

\textbf{Our Results: }Our contributions in this paper are twofold. First, we contribute strong theoretical results to the long line of literature on non-stationary bandits, in particular, rested bandits, where the rewards obtained from an arm can change when pulled \cite{TEKIN2012,LEVINE2017}.
Second, we make an important conceptual and technical contribution to the study of fairness in the \IMAB problem.
We study the \IMAB problem in the horizon-unaware setting with the following objective: how does a decision-maker maximize her reward while ensuring the participants (arms) are provided with sufficient opportunities to improve and reach their true potential?
Inspired by the motivating examples and numerous studies on human learning \cite{ANDERSON1991,SON2006}, the reward functions \vis{are assumed to be bounded, monotonically increasing, and having decreasing marginal returns (diminishing returns).}

Our first result shows that any algorithm, how much ever utilitarian, for the \IMAB problem suffers $\Omega(T)$ regret and has competitive ratio $\Omega(k)$ (Theorem \ref{thm: regret lower bound}).
Our main contribution \vin{is an efficient} anytime algorithm (Algorithm \ref{algo: horizon unaware improving bandits}) which has a competitive ratio of $O(k)$ for the \IMAB problem in the horizon-unaware case (Theorem \ref{theorem: k competitive algo proof}).\footnote{Informally, the competitive ratio of an algorithm is the worst-case ratio of its reward to that of the offline optimal (see Definition \ref{def: competitive ratio}).}
\vis{An interesting and important property of our proposed algorithm is that it continues pulling an arm until it reaches its true potential (Theorem \ref{theorem: fairness of optimal algo}), thus mitigating the disparity that existed due to lack of opportunity.}
We note that this is not accomplished by imposing any fairness constraints but establishing that it is in the best interest of the decision-maker to enable arms to achieve its true potential.
\vin{The proofs of Theorems \ref{theorem: k competitive algo proof} and \ref{theorem: fairness of optimal algo} require novel techniques and intricate analysis (see Sections \ref{subsec: proof sketch of cr} and \ref{subsec: proof sketch fairness}).
The analysis of the performance of our algorithm rests crucially on several important and non-trivial properties that we show it satisfies (e.g., see Lemmas \ref{lemms: first arm to cross N pulls}, \ref{lemma: rew N by opt T for arm i},  and \ref{lemma: fairness of optimal algo}) and are the key technical contributions of our paper. }
We also analyse the performance of the round-robin ($\rr$) algorithm. 
We show that while $\rr$ gives equal opportunity to all arms its competitive ratio is $\Theta(k^2)$ and hence, is sub-optimal for the decision maker (Theorem \ref{thm: round robin performance}). 

\subsection{Related Work}
\label{subsec: related work}
The \IMAB problem was introduced by \citet{HEIDARI2016}, who study the horizon-aware \IMAB problem.
Their work differs from ours in two key aspects.
First, they study the horizon-aware setting.
In particular, their algorithm uses the knowledge of $T$ at every time step. 
In contrast, our algorithm does not have prior knowledge of $T$ and must work well for any stopping time T.
Second, they provide an \emph{asymptotically} sub-linear regret bound in terms of instance-dependent parameters.\footnote{Asymptotically sub-linear regret bound implies that as the time horizon tends to infinity the ratio of the regret of algorithm to time horizon is zero.}
On the other hand, our results hold for \emph{any finite time horizon} $T$ and not just asymptotically.
We note here that our results are with respect to two standard performance metrics in the MAB literature, i.e., policy regret \cite{ARORA2012,HEIDARI2016,LINDNER2021} and competitive ratio \cite{IMMORLICA2019,KESSELHEIM2020,ANDREW2013,BASU2021,DANIELY2019} (see Section \ref{sec: model} for the definitions).

The area of fairness in ML has received tremendous attention in recent years \cite{BAROCAS2019}. 
However, much of this attention has been focused on fairness in static and one-shot settings such as classification \cite{KLEINBERG2016,HARDT2016,DWORK2012}. 
Recent work has also started studying the fairness aspects in models that capture sequential decision-making such as MABs \cite{JOSEPH2016,PATIL2020,LI2019,WEN2021,HOSSAIN2021} and Markov Decision Processes (MDPs) \cite{JABBARI2017,WANG2021,GHALME2022}. 
However, these works do not consider the impact of the decisions on the population on which they operate.
With a motivation similar to ours, \citet{LINDNER2021} recently studied a problem called the single-peaked bandit model, where the rewards of the arms are first non-decreasing but can then start decreasing after a point.
This class of reward functions subsumes the class of reward functions considered in \cite{HEIDARI2016}.
The results in \cite{LINDNER2021}, which are again for the horizon-aware case only, match the results in \cite{HEIDARI2016} for the class of 
reward functions considered in \imab.

\section{Model and Preliminaries}
\label{sec: model}
Throughout we use $\mathbb{R}$ and $\mathbb{N}$ to denote the set of real and natural numbers, respectively, and $[k]$ to denote the set $\{1, 2, \ldots, k\}$ for $k\in\mathbb{N}$.

\paragraph{Model and Problem Definition:} The \IMAB model studied in our work was introduced in \cite{HEIDARI2016}.
Formally, an instance $I$ of the \IMAB problem is defined by a tuple $\instance$ where $k$ is the number of arms.
Each arm $i\in[k]$ is associated with a fixed underlying reward function denoted by $f_i(\cdot)$.
When the decision-maker pulls arm $i$ for the $n$-th time, it obtains an instantaneous reward $f_i(n)$.
Further, $\rew_i(N)$ denotes the cumulative reward obtained from arm $i$ after it has been pulled $N$ times, i.e., $\rew_i(N) = f_i(1) + f_i(2) + \ldots + f_i(N)$.
We assume that the reward functions $f_i$, $i\in[k]$ are bounded in $[0,1]$, i.e., $f_i: \mathbb{N} \rightarrow [0,1]$.
In our motivating examples, the reward functions $f_i$ correspond to the ability of individuals to improve with more opportunity.
In the \IMAB model, $f_i$'s are assumed to be \vis{monotonically increasing} with decreasing marginal returns (aka diminishing returns).\footnote{We can think of $f_i$'s as being continuous functions, in which case monotonically increasing and decreasing marginal returns imply concavity.}
This assumption about the progression of human abilities is well-supported by literature in areas such as cognitive sciences \cite{SON2006} and microeconomics \cite{JOVANOVIC1995}.
The decreasing marginal returns property for $f_i$ states that, for all $i\in [k]$ 
\[
f_i(n+1) - f_i(n) \leq f_i(n) - f_i(n-1) \hspace{0.5cm} \text{for all } n\geq 1.
\]
Next, let $a_i$ denote the asymptote of $f_i(\cdot)$, i.e., $a_i = \lim_{n\rightarrow\infty} f_i(n).$
Since $f_i(\cdot)$ is monotonically increasing and bounded, this asymptote exists and is finite.
In the context of our motivating examples, we refer to $a_i$ as the true potential of the corresponding individual or community.
That is, how well they can perform a task given enough opportunity.

Let $\alg$ be a deterministic algorithm for the \IMAB problem and $T$ be the time horizon that is unknown to $\alg$.
Let $i_t \in [k]$ denote the arm pulled by $\alg$ at time step $t \in [T]$.
We use $N_i(t)$ to denote the number of pulls of arm $i$ made by $\alg$ until (not including) time step $t$, and $\alg(I,T)$ to denote the cumulative reward of $\alg$ \vis{on instance $I$} at the end of $T$ time steps.
Then,
$\alg(I, T) = \sum_{t=1}^T f_{i_t}(N_{i_t}(t)+1).$
We further note that the cumulative reward of $\alg$ after $T$ time steps only depends on the number of arm pulls of each arm and not on the order of arm pulls.
Hence, we can write $\alg(I, T) = \sum_{i\in[k]} \rew_i(N_i(T+1))$.
For brevity, we write $N_i(T+1)$ as $N_i$.
Hence, $\alg(I, T) = \sum_{i\in[k]} \rew_i(N_i)$.
\vis{When $I$ is clear from context, we use $\alg(T)$ to denote $\alg(I,T)$.}

\paragraph{Offline Optimal Algorithm for \imab:} Let $I = \instance$ be an \IMAB instance.
\vis{We use $\opt(I, T)$ to denote the offline algorithm maximizing the cumulative reward   for instance $I$ and horizon $T$.
Here, offline means that $\opt(I, T)$ knows the \IMAB instance $I$ and the time horizon $T$ beforehand.
With slight abuse of notation, we also denote the cumulative reward of this algorithm by $\opt(I, T)$.
When $I$ is clear from context, we use $\opt(T)$ instead of $\opt(I, T)$. 
The following proposition shows that for the \IMAB problem, $\opt(I, T)$ corresponds to pulling a single arm for all the $T$ rounds.}
\vis{We give an alternate proof of this proposition in Appendix \ref{app: proof of offline optimal strategy}.}
%

\begin{restatable}{proposition}{PropOne}[\cite{HEIDARI2016}]
\label{prop: single arm optimal policy}
Suppose $I = \instance$ is an instance of the \IMAB problem and $T$ is the time horizon. 
Then there exists an arm $j^*_T$ such that the optimal offline algorithm consists of pulling arm $j^*_T$ for $T$ time steps.
\end{restatable}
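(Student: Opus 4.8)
The plan is to prove this by an exchange argument: start from an arbitrary allocation of the $T$ pulls among the $k$ arms and show that we can always consolidate all pulls onto a single arm without decreasing the total reward. Concretely, suppose the optimal offline strategy pulls arms according to counts $(N_1, \dots, N_k)$ with $\sum_i N_i = T$ and at least two of the $N_i$ positive, say $N_a, N_b \geq 1$. I would like to move one pull from $b$ to $a$ (or vice versa) and argue this does not hurt. The net change in reward from moving one pull from $b$ to $a$ is $f_a(N_a+1) - f_b(N_b)$. This is nonnegative whenever $f_a(N_a+1) \geq f_b(N_b)$; otherwise the reverse move, shifting a pull from $a$ to $b$, changes reward by $f_b(N_b+1) - f_a(N_a)$, and I would need at least one of these two to be nonnegative.

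The key observation making the exchange work is the diminishing returns (concavity) property: for each arm $i$, the marginal rewards $f_i(1) \geq f_i(2) - f_i(1) + \dots$ — more precisely, $f_i(n+1) - f_i(n)$ is nonincreasing in $n$, and since $f_i$ is monotonically increasing these increments are nonnegative but shrinking. I would order the arms so that the ``next marginal'' values $f_i(N_i + 1)$ are comparable, and repeatedly move a pull onto the arm with the largest current marginal return. Because marginals only decrease as an arm accumulates pulls, greedily assigning each of the $T$ pulls to the arm with the currently-highest marginal reward is optimal — this is the standard argument that a greedy allocation maximizes a sum of concave functions subject to a budget constraint. The first step, then, is to show the greedy (marginal-maximizing) allocation is optimal via the exchange argument above; the second step is to observe that under this greedy rule, once an arm is selected it continues to have been selected (we can break ties consistently so that the same arm keeps winning), hence the greedy allocation pulls exactly one arm. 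Define $j_T^*$ to be that arm, or equivalently the arm $j$ maximizing $\rew_j(T) = \sum_{n=1}^T f_j(n)$ over $j \in [k]$; I would then verify directly that $\rew_{j_T^*}(T) \geq \sum_i \rew_i(N_i)$ for any feasible $(N_i)$ by a telescoping/concavity comparison.

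The main obstacle is making the exchange argument fully rigorous when marginals across arms are interleaved in a complicated way: a single swap between two arms need not immediately improve things, so one has to be careful to choose which pull to move and to argue termination (that the process of consolidation halts after finitely many steps at a single-arm allocation without ever decreasing reward). The clean way around this is to avoid pairwise swaps entirely and instead prove the greedy statement: sort all marginal increments $\{f_i(n) : i \in [k],\ n \in [T]\}$ in nonincreasing order; by concavity of each $f_i$, the top $T$ of these (with ties broken to respect the within-arm order $f_i(1) \geq f_i(2) \geq \dots$) is exactly a feasible allocation, it dominates every other feasible allocation term by term, and — crucially — the multiset of increments chosen for a fixed arm is always a prefix $f_i(1), \dots, f_i(m_i)$. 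The remaining point is that with a consistent tie-breaking rule the greedy picks all $T$ increments from one arm; this needs the (easy) lemma that if $f_a(1) \geq f_b(1)$ then $f_a(n) \geq$ is not literally true, so instead one argues: among all single-arm strategies, pick $j_T^* \in \arg\max_j \rew_j(T)$, and show no multi-arm strategy beats it. That last comparison is where concavity does the real work and is the step I would spend the most care on.
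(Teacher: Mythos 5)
There is a genuine gap, and it sits exactly where you admit you would ``spend the most care'': the comparison between the best single-arm strategy and an arbitrary multi-arm allocation is the entire content of the proposition, and the mechanism you propose for it points in the wrong direction. You cast the problem as maximizing a sum of concave functions of the allocation counts and invoke the standard ``greedy on marginals is optimal'' argument. But the marginal gain from the $n$-th pull of arm $i$ is $f_i(n)$, and since each $f_i$ is \emph{monotonically increasing}, these marginals \emph{increase} with $n$: the per-arm cumulative reward $\rew_i(N)$ is a \emph{convex} (not concave) function of the number of pulls $N$. The diminishing-returns hypothesis is concavity of $f_i$ itself, not of $\rew_i$. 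Consequently (i) your sorted-marginals argument breaks --- the $T$ largest increments of a single arm form a suffix $f_i(T), f_i(T-1), \ldots$, not a feasible prefix, and you even write the within-arm ordering backwards as $f_i(1)\geq f_i(2)\geq\dots$; and (ii) the greedy rule ``give the next pull to the arm with the largest current marginal'' is provably suboptimal in this model: with $f_1(n)\equiv 2/T$ and $f_2(n)=n/T$, greedy never leaves arm $1$ (its marginal $2/T$ always exceeds arm $2$'s stagnant first marginal $1/T$) and earns $2$, while pulling arm $2$ throughout earns $(T+1)/2$. The proposition is really a statement about maximizing a sum of convex functions over the simplex $\{\sum_i N_i = T\}$, whose maximum sits at a vertex, i.e., a single-arm allocation --- the opposite of the ``spread-out'' optimum that concavity would give.

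Your opening exchange observation is sound and is essentially the paper's route: if both $f_a(N_a+1) < f_b(N_b)$ and $f_b(N_b+1) < f_a(N_a)$ held, monotonicity would give $f_a(N_a+1) < f_b(N_b) \le f_b(N_b+1) < f_a(N_a) \le f_a(N_a+1)$, a contradiction, so some single-pull move is always weakly improving. But weakly improving moves alone can cycle, and you never supply the termination/progress argument. The paper avoids this by fixing $j^*_T \in \arg\max_j \rew_j(T)$ for the two-arm case and inducting on the number $n$ of pulls given to the other arm: if diverting the $(n+1)$-th pull away from $j^*_T$ were strictly better, monotonicity would force the all-other-arm strategy to beat the all-$j^*_T$ strategy, contradicting the choice of $j^*_T$; the $k$-arm case then follows by repeatedly merging pairs of arms. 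To repair your proof, drop the greedy/concavity step entirely and carry out this induction (or, in the continuous relaxation, simply cite convexity of $\sum_i \rew_i(N_i)$ to place the maximum at a vertex of the simplex). Note also that the argument needs only monotonicity of the $f_i$, not diminishing returns.
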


We emphasize that $j^*_T$ in Proposition \ref{prop: single arm optimal policy} depends on the time horizon $T$, and may be different for different values of $T$.
\vin{We compare the performance of an online algorithm at any time $T$ with $\opt(T)$ using the performance metrics defined next}.

\paragraph{Performance Metrics:} In this work, our objective is to minimize the stronger notion of regret, viz. policy regret, as opposed to external regret, which is another commonly studied objective in the MAB literature \cite{ARORA2012}.
We refer the reader to Example 1 in \cite{HEIDARI2016} for insight into how the two regret notions differ in the \IMAB model. We also provide this example in Appendix \ref{app: example from heidari} for completeness.
We next define the policy regret in the case of \IMAB problem. Henceforth, we use regret to mean policy regret unless stated otherwise.
\begin{definition}
\label{def: policy regret}
Let $\mathcal{I}$ denote the set of all problem instances for the \IMAB problem with $k$ arms. 
The policy regret of an algorithm $\alg$ for time horizon $T$, is defined as
\begin{equation}
    \label{eq: def policy regret}
    \mathtt{Regret}_{\alg}(T) = \sup_{I\in\mathcal{I}} \big[\opt(I, T) - \mathbb{E}[\alg(I, T)]\big]
\end{equation}
where the expectation is over any randomness in $\alg$.
\end{definition}

In Section \ref{sec: lower bound}, we show that any algorithm for the \IMAB problem must suffer regret that is linear in $T$. 
This motivates our choice to study the competitive ratio of an algorithm with respect to the offline optimal algorithm. 
We note that competitive ratio is a well-studied notion used to evaluate performance of online algorithms \cite{BORODIN2005,BUCHBINDER2012} and is also studied in the MAB literature \cite{IMMORLICA2019,KESSELHEIM2020,ANDREW2013,BASU2021,DANIELY2019}.
\begin{definition}
\label{def: competitive ratio}
Let $\mathcal{I}$ denote the set of all problem instances for the \IMAB problem with $k$ arms. 
and $\alg$ be an algorithm for the \IMAB problem. 
Then the (strict) competitive ratio of $\alg$ for time horizon $T$ is defined as
\begin{equation}
    \compRatio_{\alg}(T) = \inf_{\alpha \in \mathbb{R}} \{ \forall I\in\mathcal{I},~~ \alpha \cdot \alg(I,T) \geq  \opt(I,T) \}
\end{equation}
\end{definition}
\vis{We will henceforth refer to this as the competitive ratio of an algorithm. To lower bound the competitive ratio of an algorithm $\alg$ by $\alpha$, it is sufficient to provide an instance $I$ such that $\frac{\opt(I,T)}{\alg(I,T)} \geq \alpha$.
Similarly, to upper bound the competitive ratio of an algorithm $\alg$ by $\alpha$, it is sufficient upper bound $\frac{\opt(I,T)}{\alg(I,T)}\leq \alpha$ for all $I\in \mathcal{I}$.}
Naturally, the goal of the decision-maker is to design an algorithm with a small competitive ratio.
Finally, we note that although we have defined $\rew_i(N)$ and $\alg(T)$ as a discrete sum, in some of our proofs, we use definite integrals (area under the curves  defined by $f_i$) to approximate the value of the discrete sum. This approximation does not affect our results. Please refer to Appendix \ref{app: approximation of sum by integral} for a detailed justification.


\section{Lower Bound and Sub-Optimality of Round Robin}

\label{sec: lower bound}
In this section, we begin by proving the hardness of the \IMAB problem.
In particular, we show that for any time horizon $T$, there is an instance such that any algorithm for the \IMAB problem suffers a regret that is linear in $T$ and, in fact, has competitive ratio $\Omega(k)$. 
This implies that, even an algorithm that solely wants to maximize its cumulative reward, without any fairness consideration towards the arms, must suffer linear regret. 
We also show that the competitive ratio of the simple round-robin algorithm ($\rr$) is $\Theta(k^2)$, and hence $\rr$, even though it equally distributes pulls towards the arms is sub-optimal for the decision-maker.

\begin{wrapfigure}[9]{r}{0.4\textwidth}
  \begin{center}
    \includegraphics[width=\linewidth]{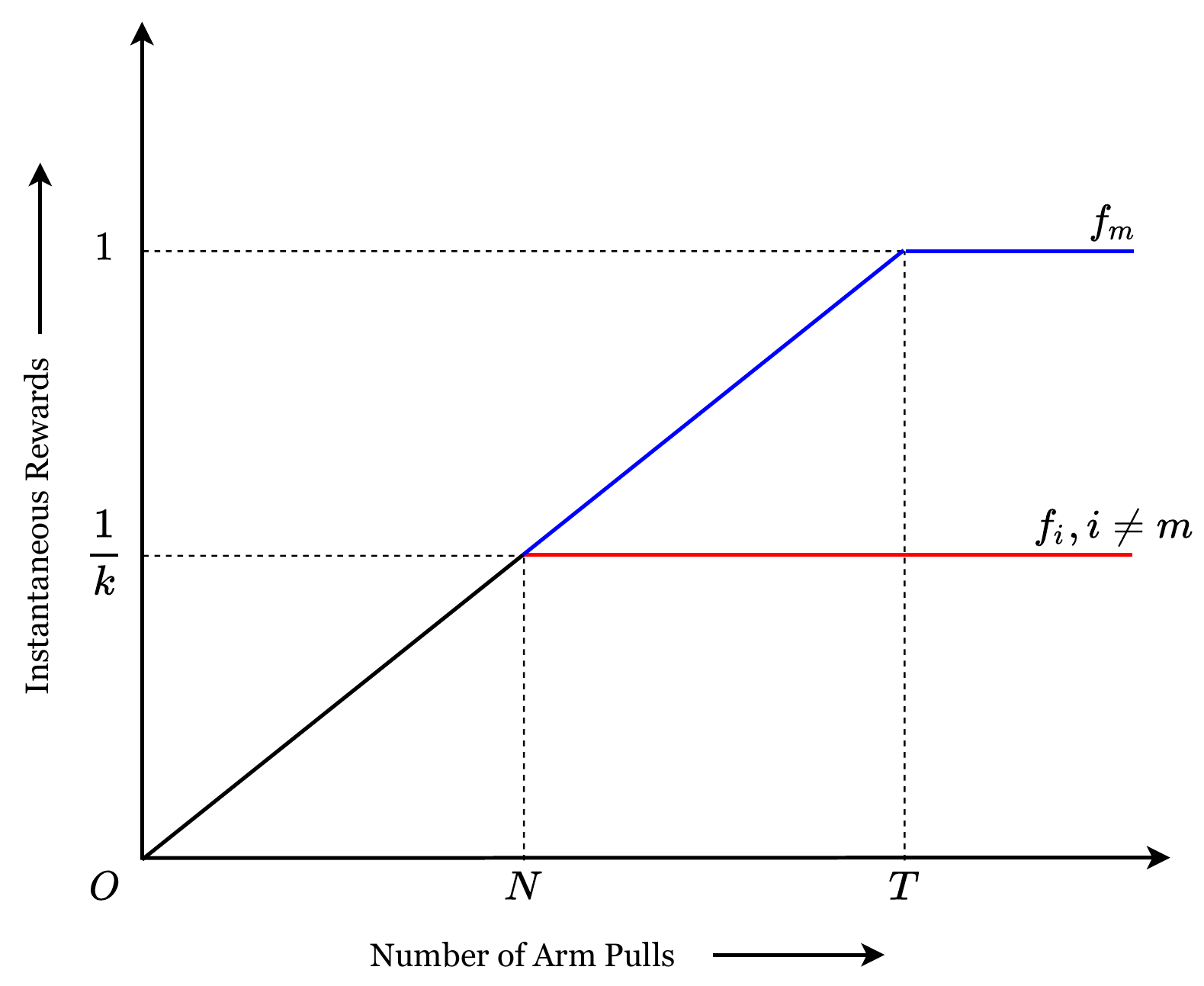}
  \end{center}
  \caption{Instance $I_m$ for Lower Bound}
  \label{fig: lower bound ex main}
\end{wrapfigure}

\paragraph{Lower Bound:} In Theorem \ref{thm: regret lower bound}, we show that for any algorithm $\alg$ and any time horizon $T$, there exists a problem instance $I = \instance$, such that the competitive ratio $\compRatio_\alg(T) \geq \frac{k}{2}$. 
The lower bound shows that we cannot hope to give anytime guarantees with competitive ratio $o(k)$ for the \IMAB problem.

\begin{restatable}{theorem}{RegretLowerBound}
\label{thm: regret lower bound}
Let $\alg$ be an algorithm for the \IMAB problem with $k$ arms. Then, for any time horizon $T$, there exists a problem instance defined by the reward functions $(f_1,f_2,\ldots,f_k)$ such that
\begin{enumerate}[label=(\alph*)]
    \item $\mathtt{Regret}_\alg(T) \geq c\cdot T$, for some constant $c$,
    \item $\compRatio_\alg(T) \geq \dfrac{k}{2}$. 
\end{enumerate}
\end{restatable}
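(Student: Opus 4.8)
The plan is an adversarial indistinguishability argument combined with a pigeonhole step. Fix the horizon $T$, set $q=\floor{T/k}$, and define two reward functions: a \emph{low-ceiling} arm $g(n)=\min\{n/T,\ 1/k\}$ and a \emph{slow-ramp} arm $h(n)=\min\{n/T,\ 1\}$ (this is essentially the family depicted in Figure~\ref{fig: lower bound ex main}). Both are non-decreasing, take values in $[0,1]$, and are piecewise-linear with a single concave kink, so they satisfy the diminishing-returns assumption; crucially $g(n)=h(n)=n/T$ for every $n\le q$, so the two functions are \emph{identical on the first $q$ pulls}. For $j\in[k]$ let $I_j$ be the $k$-armed instance in which arm $j$ has reward function $h$ and every other arm has reward function $g$, and let $I_0$ be the instance in which all $k$ arms have reward function $g$; the instance we output will be one of the $I_j$.

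First I would run the (deterministic) algorithm $\alg$ on $I_0$ for $T$ steps and let $N_j$ denote the number of pulls of arm $j$. Since $\sum_j N_j=T$, some arm $j^*$ satisfies $N_{j^*}\le T/k$, hence $N_{j^*}\le q$ as $N_{j^*}$ is an integer. I claim $\alg$ produces the very same pull sequence when run on $I_{j^*}$: by induction on the time step, whenever the play/observe histories agree so far, $\alg$ (being deterministic) pulls the same arm next, and the reward it then receives is identical on $I_0$ and on $I_{j^*}$ — the only arm whose reward function differs between these two instances is $j^*$, but each time $j^*$ is pulled it has so far been pulled fewer than $N_{j^*}\le q$ times, where $g$ and $h$ agree. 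Consequently $\alg$ makes the identical pull counts $(N_j)_j$ on $I_{j^*}$.

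Next I would bound both quantities on $I_{j^*}$. Each arm $j\neq j^*$ has reward function $g\le 1/k$, so $\rew_j(N_j)\le N_j/k$; and arm $j^*$ is pulled only $N_{j^*}\le q$ times, a range in which $h(n)=n/T\le 1/k$, so $\rew_{j^*}(N_{j^*})\le N_{j^*}/k$ as well. Summing, $\alg(I_{j^*},T)\le \tfrac1k\sum_j N_j=T/k$. For the benchmark, Proposition~\ref{prop: single arm optimal policy} gives that $\opt(I_{j^*},T)$ is the best single-arm reward, attained by arm $j^*$, namely $\opt(I_{j^*},T)=\sum_{n=1}^{T}h(n)=\sum_{n=1}^{T}n/T=(T+1)/2$. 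Therefore
\[
\compRatio_\alg(T)\ \ge\ \frac{\opt(I_{j^*},T)}{\alg(I_{j^*},T)}\ \ge\ \frac{(T+1)/2}{T/k}\ >\ \frac{k}{2},
\]
which is part (b), and
\[
\mathtt{Regret}_{\alg}(T)\ \ge\ \opt(I_{j^*},T)-\alg(I_{j^*},T)\ \ge\ \frac{T+1}{2}-\frac{T}{k}\ \ge\ \Big(\tfrac12-\tfrac1k\Big)T,
\]
which is $\Omega(T)$ (the constant $\tfrac12-\tfrac1k$ being positive for $k\ge 3$), giving part (a).

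The step I expect to require the most care is calibrating the three parameters so that they are mutually consistent: the ``indistinguishable prefix'' length $q=\floor{T/k}$ must be at least as large as the pull count of the pigeonhole-chosen arm (which is why the low ceiling has to be exactly $1/k$ and the ramp length exactly $T$, so that $q$ matches the only available pigeonhole bound $\min_j N_j\le T/k$), yet the ceiling must be small enough and the ramp long enough that $\opt/\alg$ comes out to essentially $k/2$ rather than a weaker constant. The rest — verifying $g$ and $h$ obey all the structural assumptions, and handling the floor in $q$ — is routine bookkeeping.
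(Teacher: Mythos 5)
Your proposal is correct and follows essentially the same route as the paper: the same family of low-ceiling versus slow-ramp instances, the same pigeonhole selection of an under-pulled arm, and the same $T/k$ versus $(T+1)/2$ comparison yielding both bounds (with the same caveat that the regret constant degenerates at $k=2$). The only difference is cosmetic but welcome: your reference instance $I_0$ and the induction on the shared prefix make the indistinguishability step explicit, and your $\min\{n/T,1/k\}$ formulation sidesteps a tiny monotonicity glitch at the kink of the paper's piecewise definition when $k \nmid T$.
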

The proof of our lower bound also shows that even when $T$ is known to the algorithm, an instance-independent sublinear regret is not possible. 
Note that \citet{HEIDARI2016} give an instance-dependent asymptotically sublinear regret.  
In contrast, we seek an instance-independent anytime regret guarantee (i.e., one that does not depend on the parameters of problem instance). 
We prove this lower bound by constructing a family of $k$ \IMAB instances and showing that no algorithm can achieve sub-linear regret and $o(k)$ competitive ratio on all $k$ instances. 
We briefly provide intuition about the construction of these problem instances and defer the detailed proof to Appendix \ref{app subsec: lower bound}.
Let $N = \lceil T/k \rceil$. The $k$ problem instances are as follows: For $m \in [k]$, instance $I_m$ is such that for all arms $i\neq m$,

\begin{minipage}{.45\textwidth}
  \begin{align*}
    f_i(n) = \begin{cases}
    \vspace{2mm}\dfrac{n}{kN} & \text{If} ~~n \leq N\\ 
    \dfrac{1}{k} & \text{If} ~~n > N \\ 
    \end{cases}
\end{align*}
\end{minipage}%
\begin{minipage}{0.1\textwidth}
  and
\end{minipage}%
\begin{minipage}{.4\textwidth}
  \begin{align*}
    f_m(n) = \begin{cases}
    \vspace{2mm}\dfrac{n}{kN} & \text{If} ~~n \leq kN\\ 
    1 & \text{If} ~~n > kN \\ 
    \end{cases}
\end{align*}
\end{minipage}

See Figure \ref{fig: lower bound ex main} for a depiction of instance $I_m$. An algorithm cannot differentiate between the arms in instance $I_m$ until the optimal arm $m$ has been pulled at least $N$ times. Given the construction of the reward functions, at least one arm, say arm $j$, would be pulled $< N$ times by any algorithm. Consequently, the algorithm will suffer linear regret on instance $I_j$.

\paragraph{Round Robin:}The $\rr$ algorithm sequentially pulls arms $1$ to $k$, and at the end of $T$ rounds, for any $T\in \mathbb{N}$ ensures that each arm is pulled at least $\lfloor T/k\rfloor$ times irrespective of the reward obtained from the arm.
Although, this ensures equal distribution of arm pulls,
in Theorem \ref{thm: round robin performance}, we show that $\rr$ is sub-optimal in terms of its competitive ratio.

\begin{restatable}{theorem}{RoundRobinAnalysis}
\label{thm: round robin performance}
Let $\rr$ denote the round robin algorithm. 
Then, $8k^2 \geq \compRatio_\rr(T) \geq \frac{k^2}{2}.$
\end{restatable}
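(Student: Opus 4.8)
The plan is to prove the two bounds separately, in each case comparing $\rr$ against the single-arm offline optimum guaranteed by Proposition \ref{prop: single arm optimal policy}. Fix a time horizon $T$, write $N = \lfloor T/k \rfloor$, and let $j^*_T$ be the optimal offline arm, so $\opt(T) = \rew_{j^*_T}(T)$. Under $\rr$ every arm is pulled either $N$ or $N+1$ times; in particular every arm, including $j^*_T$, is pulled at least $N$ times, so $\rr(T) \ge \rew_{j^*_T}(N)$. Thus it suffices to bound $\rew_{j^*_T}(T) / \rew_{j^*_T}(N)$, i.e.\ to understand how much a single monotone, concave (diminishing-returns) reward function can grow when the number of pulls is multiplied by (roughly) $k$.

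\textbf{Upper bound $\compRatio_\rr(T) \le 8k^2$.} The key structural fact is: for a nonnegative, monotonically increasing function $f$ with decreasing marginal returns, $\rew(N)$ grows at most quadratically in $N$ — more precisely $\rew_j(cN) \le c^2 \rew_j(N)$ for integer $c \ge 1$, since by concavity the increments $f_j(n)$ are nonincreasing-sloped and the worst case is the linear function $f_j(n) = \beta n$, for which $\rew_j(N) = \beta N(N+1)/2$ and the ratio $\rew_j(cN)/\rew_j(N) = c(cN+1)/(N+1) \le c^2$. (I would phrase this cleanly via the integral approximation sanctioned in Appendix \ref{app: approximation of sum by integral}: $\rew_j(M) \approx \int_0^M f_j$, and for concave increasing $f_j$ with $f_j(0)\ge 0$ one has $\int_0^{cM} f_j \le c^2 \int_0^M f_j$.) Since $T < k(N+1) \le 2kN$ (assuming $N \ge 1$; the degenerate case $T < k$ is handled trivially, as then $\opt(T)\le T \le k$ and $\rr$ pulls $T$ distinct arms once each giving $\rr(T)=\opt(T)$), we get $\opt(T) = \rew_{j^*_T}(T) \le \rew_{j^*_T}(2kN) \le (2k)^2 \rew_{j^*_T}(N) \le 4k^2 \cdot \rr(T)$. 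A slightly more careful accounting of ceilings/floors and the $N\le 1$ boundary gives the stated constant $8k^2$.

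\textbf{Lower bound $\compRatio_\rr(T) \ge k^2/2$.} Here I exhibit a bad instance: take one special arm with a linear-then-flat profile that keeps increasing for about $T$ pulls, and $k-1$ "dummy" arms whose reward is essentially $0$ (or a tiny constant), mirroring the lower-bound construction of Theorem \ref{thm: regret lower bound}. Concretely, set $f_1(n) = n/T$ for $n \le T$ and $f_1(n) = 1$ afterward, and $f_i \equiv 0$ for $i \ne 1$. Then $\opt(T) = \rew_1(T) = \sum_{n=1}^T n/T = (T+1)/2 \approx T/2$. But $\rr$ pulls arm $1$ only $N \le T/k$ times and gets $\rew_1(N) = N(N+1)/(2T) \le (T/k)(T/k+1)/(2T) \approx T/(2k^2)$ from it, and $0$ from all other arms, so $\rr(T) \approx T/(2k^2)$. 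The ratio is $\opt(T)/\rr(T) \approx k^2$; tracking the constants (and choosing $T$ a multiple of $k$ so the floors are clean, as is standard and as done in Theorem \ref{thm: regret lower bound}'s construction) yields $\compRatio_\rr(T) \ge k^2/2$. Note the nonzero dummy arms can be used instead to keep strictly within the problem's assumptions while contributing a negligible $O(T/k)$ to $\rr(T)$, which does not change the $\Theta(k^2)$ conclusion.

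\textbf{Main obstacle.} The crux is the quadratic-growth lemma $\rew_j(cN) \le c^2\, \rew_j(N)$ and making the floor/ceiling bookkeeping airtight — in particular handling small $T$ (where $\lfloor T/k\rfloor$ may be $0$ or $1$) so that the clean bound $\rr(T)\ge \rew_{j^*_T}(\lfloor T/k\rfloor)$ still suffices, and verifying that the slack between $2kN$ and the true $T$, together with the integral-vs-sum approximation, only costs a constant factor, landing at $8k^2$ rather than $4k^2$. The lower bound is comparatively routine given the template already established for Theorem \ref{thm: regret lower bound}.
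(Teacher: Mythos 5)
Your proposal is correct and follows essentially the same route as the paper: the lower bound uses the identical one-good-arm/zero-dummy-arms instance with the same $(T+1)/2$ versus $\approx T/(2k^2)$ computation, and the upper bound rests on the same structural fact that a bounded, increasing, diminishing-returns reward grows at most quadratically in the number of pulls, which the paper establishes geometrically (Claim~\ref{claim: similar triangle claim}, comparing the area under the curve to the area under the extended chord through $(0,f(0))$ and $(T/k,f(T/k))$) and which you package as the scaling inequality $\rew_j(cN)\le c^2\,\rew_j(N)$. The floor/ceiling and sum-versus-integral bookkeeping you flag at the end is exactly the slack the paper absorbs into the constant $8$.
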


The proof is in Appendix \ref{app subsec: round robin}. The first inequality in the above theorem says that the competitive ratio of $\rr$ is at most $8k^2$, whereas the second inequality shows that our analysis for $\rr$ is tight (up to constants). 
%
In the next section, we propose an algorithm whose competitive ratio is optimal up to constants, and which allocates the pulls to arms in a manner that ensures each arm attains its true potential given sufficient time.

\section{Optimal Algorithm for Improving Bandits}
\label{sec: optimal algorithm horizon unaware}
\vis{In this section, we propose an algorithm that \emph{mitigates disparity} due to lack of opportunities while achieving the \emph{best possible cumulative reward} at any time.}
We first give an intuitive idea about how our algorithm works.
Then, we formally state the two-way guarantee that our algorithm provides; first, the tight guarantee for the cumulative reward, and second, the mitigation of disparity by helping arms reach their true potential.
In Section \ref{subsec: algo and guarantee}, we state our algorithm along with the main results (Theorems \ref{theorem: k competitive algo proof} and \ref{theorem: fairness of optimal algo}).
In Section \ref{subsec: proof sketch of cr}, we provide a proof sketch of Theorem \ref{theorem: k competitive algo proof} along with the supporting Lemmas.
Finally, in Section \ref{subsec: proof sketch fairness}, we give a proof sketch of Theorem \ref{theorem: fairness of optimal algo}.

\subsection{Algorithm and its Guarantees}
\label{subsec: algo and guarantee}
Our proposed algorithm is in Algorithm \ref{algo: horizon unaware improving bandits}.
\vis{Throughout this section, we use $\alg$ to denote Algorithm \ref{algo: horizon unaware improving bandits} unless stated otherwise.
We remark that, in addition to having strong performance guarantees, $\alg$ is a simple (in terms of the operations used) and efficient (in terms of time complexity) algorithm.}
Before stating the theoretical guarantees of our algorithm we provide an intuitive explanation of how it works. 
For each arm $i\in [k]$, recall $N_i(t)$ denotes the number of times arm $i$ has been pulled until (not including) time step $t$ and for notational convenience we use $N_i$ to denote the number of times $\alg$ pulls arm $i$ in $T$ time steps. 
Initialization is done by pulling each arm twice (Step 3). 
This lets us compute the rate of change of the reward function between the first and second pulls of each arm, i.e., $\Delta_i(2) = f_i(2) - f_i(1)$ (as defined at Step 7).
This takes $2k$ time steps.
\vis{At each time step $t> 2k$, we let $i^*_t$ denote the arm that has been pulled maximum number of times so far, i.e., $i^*_t \in \arg\max_{i\in[k]} N_i(t)$.}
\vis{Then, for every arm $i\in[k]$, we compute an optimistic estimate of its cumulative reward had it been pulled $N_{i^*_t}(t)$ times, denoted by $p_i(t)$.}
\vis{The optimistic estimate, $p_i(t)$, is computed by adding the actual cumulative reward obtained from arm $i$ in $N_i(t)$ pulls, 
denoted $\rew_i(N_i(t))$, and the maximum cumulative reward that can be obtained from the arm in additional $N_{i^*_t}(t) - N_i(t)$ pulls if it continues to increase at the current rate, $\Delta_i(N_i(t))$.
}
We then pull an arm with the largest value of $p_i(t)$. 
Ties are first broken based on the minimum value of $N_i(t)$ and further ties can be broken arbitrarily.

\begin{algorithm}[ht!]
  \caption{Horizon-Unaware Improving Bandits}
  \label{algo: horizon unaware improving bandits}
 \SetAlgoLined
 \textbf{Initialize:} \\ 
   $N_i(0) = 0$ for all arms $i\in [k]$ \hspace{4cm}\textcolor{gray}{Number of pulls of arm $i\in[k]$}\\
   Pull each arm twice\\
   $t = 2k+1$ \hspace{6cm}\textcolor{gray}{Current time step after $2k$ arm pulls}\\
   $N_i(t) = 2$ for all arms $i\in [k]$\\
   \For{$t = 2k+1, \ldots$,T} { 
     $\Delta_i(N_i(t)) = f_i(N_i(t)) - f_i(N_i(t) - 1)$\\
     $i^*_t \in \arg\max_{i\in[k]} N_i(t)$\\
     \For{$i=1,2,\ldots,k$}{
        $p_i(t) = \rew_i(N_i(t)) + \sum_{n=1}^{{N_{i^*}(t)} - N_i(t)} \left[f_i(N_i(t)) + n\cdot\Delta_i\left(N_i(t)\right)\right]$\\
     }
     $C = \arg\max_{i\in [k]} p_i(t)$\\
     Pull arm $i_t = \arg\min_{i\in C} N_i(t)$\\
     \For{$i=1,2,\ldots,k$}{
        $N_i(t+1) = N_i(t) + \mathbbm{1}\{i_t = i\}$\\
        
    }
    }
\end{algorithm}

Our goal is to provide an upper bound on $\compRatio_{\alg}(T)$. 
\vin{The following theorem, which is one of the key technical contributions of our work, proves that Algorithm \ref{algo: horizon unaware improving bandits} has $O(k)$ competitive ratio, and from Theorem \ref{thm: regret lower bound} in Section \ref{sec: lower bound} it follows that the competitive ratio of our algorithm is optimal (up to constants).}

\begin{restatable}{theorem}{OptAlgoCompetitiveRatio}
\label{theorem: k competitive algo proof}
Competitive Ratio of $\alg$ is $O(k)$. In particular,
$
\compRatio_{\alg}(T) \leq 32k.
$
Further, the time complexity of $\alg$ is $O(k\log k)$ per time step.
\end{restatable}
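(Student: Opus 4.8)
The plan is to bound $\opt(I,T)/\alg(I,T)$ uniformly over all instances $I$ and all horizons $T$. Fix $T$ and let $N_i = N_i(T+1)$ denote the final pull counts of $\alg$, and let $j^*_T$ be the optimal offline arm from Proposition~\ref{prop: single arm optimal policy}, so $\opt(I,T) = \rew_{j^*_T}(T)$. Since $\sum_i N_i = T$, by an averaging argument there is an arm $\ell$ with $N_\ell \geq T/k$; intuitively $\ell$ is the arm $\alg$ plays most (or close to it). The first step is to relate $\alg(I,T) \geq \rew_\ell(N_\ell)$ to the optimum, i.e.\ to show that pulling the most-pulled arm $\ell$ already $N_\ell \geq T/k$ times gives reward within an $O(k)$ factor of $\rew_{j^*_T}(T)$. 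This has two sub-parts: (i) comparing $\rew_\ell(N_\ell)$ with $\rew_\ell(T)$ — here concavity/diminishing returns is essential, since $\rew_\ell$ is a concave-like sum, and I expect $\rew_\ell(T) \leq k\cdot \rew_\ell(T/k)$ up to constants by a standard concavity inequality (the average slope on $[0,T]$ is at most the average slope on $[0,T/k]$); and (ii) comparing $\rew_\ell(T)$ with $\rew_{j^*_T}(T)$, which is where the optimistic-estimate selection rule of $\alg$ must be used.

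The heart of the argument is part (ii), and this is where Lemmas~\ref{lemms: first arm to cross N pulls} and \ref{lemma: rew N by opt T for arm i} (referenced in the introduction as the key technical tools) come in. The idea is that whenever $\alg$ chose to pull $i^*_t$ again rather than some other arm $i$, it was because $p_{i^*_t}(t) \geq p_i(t)$, and $p_i(t)$ is by construction an \emph{over}-estimate of $\rew_i(N_{i^*_t}(t))$ — this is exactly the role of concavity: the linear extrapolation with slope $\Delta_i(N_i(t))$ lies above the true concave reward curve, so $p_i(t) \geq \rew_i(N_{i^*_t}(t))$. Hence the arm $\ell$ that $\alg$ ends up pulling a $1/k$-fraction of the time has an optimistic estimate that dominates that of $j^*_T$ at comparable pull counts, and one transfers this to a genuine reward comparison. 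Concretely, I would argue: at the last time $\alg$ advanced the leader past some threshold $m$, the leading arm's $p$-value beat $j^*_T$'s; combined with the over-estimation bound and a lower bound on how large $N_\ell$ is, this yields $\rew_\ell(N_\ell) \geq \tfrac{1}{c k}\rew_{j^*_T}(T)$ for an absolute constant $c$. Chaining with part (i) gives $\alg(I,T) \geq \rew_\ell(N_\ell) \geq \tfrac{1}{32k}\opt(I,T)$, i.e.\ $\compRatio_\alg(T) \leq 32k$. The precise constant $32$ is just bookkeeping of the factors from the averaging step, the concavity step, and the optimism slack.

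The main obstacle I anticipate is controlling the slack in the optimistic estimates $p_i(t)$: the linear extrapolation overshoots the true concave curve, and one must show this overshoot is not too large — at most a constant factor — relative to the actual accumulated reward. This requires using diminishing returns quantitatively (the marginal increments $\Delta_i$ are nonincreasing), so that extrapolating $N_{i^*_t}(t) - N_i(t) \leq N_{i^*_t}(t)$ steps at the current slope inflates the reward by at most a bounded multiplicative amount once one accounts for $\rew_i(N_i(t))$ already being on the order of $N_i(t)\cdot f_i(N_i(t))/2$. A secondary subtlety is that the most-pulled arm need not be unique or stable over time, so the tie-breaking rule (pull the least-pulled arm in the argmax set) matters for ensuring the leader count grows in a controlled, near-balanced way among near-leaders; I would handle this with a monotonicity/invariant argument on $\max_i N_i(t) - \min$ over the contending set. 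Finally, the time-complexity claim is immediate: each step recomputes $k$ values $p_i(t)$, each in $O(1)$ amortized time given running sums $\rew_i$, and the two $\arg\max/\arg\min$ operations cost $O(\log k)$ with a heap, giving $O(k\log k)$ per step (in fact $O(k)$ without a heap, but $O(k\log k)$ suffices as stated).
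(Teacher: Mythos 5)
There is a genuine gap, and it is fatal to the single-arm strategy you propose. Your plan is to isolate the most-pulled arm $\ell$ (with $N_\ell \geq T/k$) and show $\rew_\ell(N_\ell) \geq \frac{1}{32k}\opt(I,T)$ via the chain $\rew_\ell(N_\ell) \to \rew_\ell(T) \to \rew_{j^*_T}(T)$. The first link already fails: you claim $\rew_\ell(T) \leq k\cdot\rew_\ell(T/k)$ up to constants ``by a standard concavity inequality,'' but $\rew_\ell(N)=\sum_{n\le N}f_\ell(n)$ is a sum of \emph{increasing} increments, hence convex, not concave, so the average-slope inequality goes the wrong way. The true worst case is quadratic: for $f(n)=n/T$ one has $\rew(T)\approx T/2$ while $\rew(T/k)\approx T/(2k^2)$, so $\rew(T)/\rew(T/k)=\Theta(k^2)$. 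This is exactly the paper's Lemma \ref{lem: rew N over rew T for arm i}(b) (the $k^2$ sits in the denominator), and it is precisely why round robin — which also pulls the best arm $T/k$ times — has competitive ratio $\Theta(k^2)$ (Theorem \ref{thm: round robin performance}). Any argument that charges all of $\alg$'s reward to a single arm pulled $\Theta(T/k)$ times can therefore prove at best $O(k^2)$, never $O(k)$.

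The missing idea is that one must account for the reward of \emph{every} arm and exploit the quadratic form. The paper proves a per-arm guarantee (Lemma \ref{lemma: rew N by opt T for arm i}): when $N_1\le T/2$, each arm $i$ satisfies $\rew_i(N_i)\ge \frac{N_i^2}{200T^2}\opt(I,T)$ — this uses the optimistic-estimate comparison at the last time arm $i$ was pulled against the current leader, whose reward equals $\opt(I,N)$ for its pull count $N$ by Lemma \ref{lemms: first arm to cross N pulls}. Summing over arms and applying Cauchy--Schwarz, $\sum_i N_i^2 \ge T^2/k$, turns the per-arm quadratic loss into an overall loss of only $k$: $\alg(I,T)=\sum_i\rew_i(N_i)\ge\frac{1}{200k}\opt(I,T)$. (The complementary case $N_1>T/2$ is handled separately via Lemma \ref{lemms: first arm to cross N pulls} and Corollary \ref{corr: opt N by opt T for arm i}(a).) Your intuitions about Lemma \ref{lemms: first arm to cross N pulls} and the role of the optimistic over-estimates are pointed in the right direction, but without the all-arms summation and Cauchy--Schwarz step the bound cannot drop below $k^2$. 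Your time-complexity argument is fine.
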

\vis{The per round time complexity of $\alg$ follows from the $\arg\max$ operation (steps 8) performed at each time step, which is standard in MAB literature.
This shows that our algorithm, in addition to being simple, is also efficient.
}
The proof of the above theorem relies on some neat attributes of our algorithm and the class of reward functions.
We discuss some of these in Section \ref{subsec: proof sketch of cr}. 
We next show in Theorem \ref{theorem: fairness of optimal algo} that $\alg$ ensures that each arm reaches its true potential given sufficient time.
\begin{restatable}{theorem}{OptAlgoFairness}
\label{theorem: fairness of optimal algo}
For an arm $i\in [k]$, let $a_i = \lim_{N\rightarrow \infty} f_i(N)$. 
\vis{Then, for every $\varepsilon\in (0,a_i]$, }there exists $T\in \mathbb{N}$ such that $\alg$ ensures that $a_i - f_i(N_{i}(T)) \leq \varepsilon \,.$
\end{restatable}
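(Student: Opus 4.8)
\textbf{Proof proposal for Theorem \ref{theorem: fairness of optimal algo}.}

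The plan is to argue by contradiction: suppose there is an arm $i$ and an $\varepsilon \in (0, a_i]$ such that $a_i - f_i(N_i(T)) > \varepsilon$ for \emph{every} horizon $T$. Since $f_i$ is monotonically increasing and bounded, this is equivalent to saying that $N_i(T)$ stays bounded as $T \to \infty$, i.e.\ $\alg$ pulls arm $i$ only finitely often, say at most $M$ times total. The goal is to derive a contradiction by showing that the optimistic estimate $p_i(t)$ of this ``starved'' arm eventually dominates the optimistic estimates of all other arms, forcing $\alg$ to pull it again.

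First I would fix the (finite) set $S$ of arms that are pulled infinitely often; $S$ is nonempty because the total number of pulls grows without bound, and arm $i \notin S$. After some finite time $t_0$, no arm outside $S$ is ever pulled again, so $i^*_t$ is always an arm in $S$ and $N_{i^*_t}(t) \to \infty$. The key computation is to compare, for $t > t_0$, the quantity $p_i(t)$ against $p_j(t)$ for the arm $j = i_t$ actually pulled (which lies in $S$). Since arm $i$'s number of pulls is frozen at $N_i := N_i(t) \le M$, and $\Delta_i(N_i) = f_i(N_i) - f_i(N_i - 1) > 0$ is a fixed strictly positive constant (strict because $f_i$ has not yet reached its asymptote, so it is still strictly increasing at $N_i$ — here I would invoke that diminishing returns plus $a_i - f_i(N_i) > \varepsilon > 0$ forces $\Delta_i(N_i) > 0$; more carefully, if $\Delta_i(N_i)=0$ then by diminishing returns all later increments vanish and $f_i(N_i)=a_i$, contradicting $a_i - f_i(N_i) > \varepsilon$). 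Writing $N^* = N_{i^*_t}(t)$, the definition at Step 10 gives
\[
p_i(t) = \rew_i(N_i) + (N^* - N_i)\, f_i(N_i) + \Delta_i(N_i)\sum_{n=1}^{N^*-N_i} n = \Theta\!\big(\Delta_i(N_i)\,(N^*)^2\big),
\]
a \emph{quadratic} growth in $N^*$. On the other hand, for any arm $j \in S$, its reward function is bounded by $a_j \le 1$, so the \emph{true} cumulative reward $\rew_j(N^*) \le N^*$, and since $\alg$'s optimistic estimate is only ever used to predict up to $N^*$ pulls, I would show that $p_j(t) \le \rew_j(N_j(t)) + (N^* - N_j(t)) \le N^*$ grows only \emph{linearly} in $N^*$ (the optimistic slope $\Delta_j(N_j(t))$ is itself bounded, and the predicted increments, when capped against the true bounded function, cannot exceed $N^* \cdot 1$). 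Hence for $N^*$ large enough — which happens at some finite time since $N^* \to \infty$ — we get $p_i(t) > p_j(t)$ for every $j \ne i$, so $\alg$ must pull arm $i$ at that step, contradicting that arm $i$ is never pulled after time $t_0$.

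The main obstacle I anticipate is making the linear-versus-quadratic comparison fully rigorous: specifically, bounding $p_j(t)$ for arms $j \in S$. The subtlety is that $p_j(t)$ is an \emph{optimistic} extrapolation using the slope $\Delta_j(N_j(t))$, and a priori one must ensure this extrapolation does not also grow quadratically in $N^* - N_j(t)$. The resolution is that as $j$ is pulled more and more (which it is, being in $S$), diminishing returns forces $\Delta_j(N_j(t)) \to 0$, and moreover for the specific step where $j = i_t$ is chosen we can compare it to the arm $i^*_t$ itself, whose optimistic estimate equals its true reward $\rew_{i^*_t}(N^*) \le N^* \le N^* \cdot 1$; since $p_j(t) \ge p_{i^*_t}(t)$ would be needed for $j$ to be selected over $i^*_t$, and we separately need $p_i(t)$ to beat all of them, the cleanest route is to show directly that $\max_{j} p_j(t) \le N^* + O(1)$ by using $f_j \le 1$ and the fact that any optimistic over-estimate is still dominated termwise by extrapolating a concave function, whose extrapolated values past $N_j(t)$ cannot all exceed $1$ by more than the linear tangent correction — and even the crude bound ``each of the $N^*$ terms is at most $1 + (N^*)\Delta_j$'' combined with $\Delta_j \to 0$ along $S$ suffices to keep the total $o((N^*)^2)$. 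Once that bound is in hand, the contradiction is immediate, and since $\varepsilon \in (0,a_i]$ and $i$ were arbitrary the theorem follows.
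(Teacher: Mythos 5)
Your proposal is correct and follows essentially the same route as the paper's proof (which isolates the key step as Lemma \ref{lemma: fairness of optimal algo}): a starved arm retains a fixed strictly positive slope $\Delta_i(N_i)>0$, the slopes of the infinitely-pulled arms tend to $0$ by boundedness plus diminishing returns, and comparing the quadratic terms of the optimistic estimates yields the contradiction. The only slip is at the very end: you can only guarantee $p_i(t)>p_j(t)$ for arms $j$ pulled infinitely often (another starved arm could have an even larger frozen slope than arm $i$), so the correct conclusion is that \emph{some} finitely-pulled arm gets pulled after $t_0$ rather than arm $i$ specifically --- which is still the desired contradiction, and is exactly how the paper resolves its two cases.
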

Theorem \ref{theorem: fairness of optimal algo} shows that all arms reach arbitrarily close to their true potential given sufficient time. 
In particular, this shows that our algorithm \vin{mitigates the initial disparities in the arms due to lack of opportunities} by enabling the arms to reach their true potential given sufficient time.
We give a proof sketch of the above theorem in Section \ref{subsec: proof sketch fairness}.

\subsection{Proof Sketch of Theorem \ref{theorem: k competitive algo proof}}\label{subsec: proof sketch of cr}
The proof of the theorem hinges upon Lemmas  \ref{lemms: first arm to cross N pulls}, \ref{lem: rew N over rew T for arm i}, and \ref{lemma: rew N by opt T for arm i} and Corollary \ref{corr: opt N by opt T for arm i} stated below. 
We elaborate upon these lemmas along with the proof sketches for a few of them and then explain how the proof is completed using these lemmas. 
The complete proof of Theorem \ref{theorem: k competitive algo proof} along with the proofs of the lemmas and corollaries is in Appendix \ref{app: missing proofs optimal algo}.

Let $I = \instance$ be an arbitrary instance of the \IMAB problem and let $T$ be the time horizon.
To upper bound the competitive ratio of our algorithm at $T$, it is sufficient to upper bound $\frac{\opt(I,T)}{\alg(I,T)}$ (since instance $I$ has been chosen arbitrarily). 
Throughout, and without loss of generality, assume $N_1 \geq N_2 \geq \ldots \geq N_k$. 
We begin with a crucial lemma which captures an important feature of our algorithm: the first arm to cross $N$ pulls has to be the optimal arm for the time horizon $N$, that is, as per Proposition \ref{prop: single arm optimal policy}, it has to be the arm that maximizes the cumulative reward for horizon $N$.
This property is of key importance in proving the optimality of our algorithm with respect to the competitive ratio. 
\begin{restatable}{lemma}{FirstArmToCrossNPulls}
\label{lemms: first arm to cross N pulls}
If arm $i \in [k]$ is the first arm to cross $N$ pulls, i.e., to be pulled $N+1$-th time, then $$\rew_i(N) = \opt(I,N).$$
\end{restatable}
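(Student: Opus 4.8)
The plan is to zoom in on the single time step $t^{*}$ at which $\alg$ pulls arm $i$ for the $(N+1)$-th time, and to read off what the selection rule of Algorithm~\ref{algo: horizon unaware improving bandits} forces there. At $t^{*}$ we have $N_i(t^{*}) = N$, and since $i$ is by hypothesis the \emph{first} arm ever to receive an $(N+1)$-th pull, every arm $j$ has $N_j(t^{*}) \le N$. Hence the arm with the most pulls at $t^{*}$ also sits at exactly $N$ pulls, i.e.\ $N_{i^{*}_{t^{*}}}(t^{*}) = N$ (here I only use that this claim is applied with $N \ge 2$, which always holds since initialization pulls every arm twice). Establishing this bookkeeping fact is the linchpin: it pins the ``target'' pull count used inside the optimistic estimate $p_j(t^{*})$ to be exactly $N$.

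Granting that, the estimate for the pulled arm is \emph{exact}: the sum defining $p_i(t^{*})$ in Step~10 is empty, so $p_i(t^{*}) = \rew_i(N)$. And because $\alg$ chooses arm $i$ at $t^{*}$, arm $i$ lies in $\arg\max_{j} p_j(t^{*})$, giving $\rew_i(N) = p_i(t^{*}) \ge p_j(t^{*})$ for every $j \in [k]$. It then remains to show that each $p_j(t^{*})$ is a genuine over-estimate of $\rew_j(N)$; chaining this with the previous inequality yields $\rew_i(N) \ge \rew_j(N)$ for all $j$, and Proposition~\ref{prop: single arm optimal policy} identifies $\opt(I,N)$ with $\max_{j \in [k]} \rew_j(N)$, which closes the argument.

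The one step that needs actual work — and the only place the diminishing-returns hypothesis enters — is the optimism claim $p_j(t^{*}) \ge \rew_j(N)$. Writing $m = N_j(t^{*})$, I would bound every future increment of $f_j$ by the current slope: since the increments $f_j(m+\ell) - f_j(m+\ell-1)$ are non-increasing in $\ell$, they are all at most $\Delta_j(m) = f_j(m) - f_j(m-1)$, so telescoping gives $f_j(m+n) \le f_j(m) + n\,\Delta_j(m)$ for each $n \ge 0$. Summing this over $n = 1, \dots, N-m$ and adding $\rew_j(m)$ reproduces, on the right-hand side, exactly the definition of $p_j(t^{*})$, while the left-hand side is $\rew_j(N)$; hence $\rew_j(N) \le p_j(t^{*})$. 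Everything else is direct substitution into the definitions in Algorithm~\ref{algo: horizon unaware improving bandits} together with an appeal to Proposition~\ref{prop: single arm optimal policy}, so I do not anticipate further obstacles.
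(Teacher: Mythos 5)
Your proposal is correct and follows essentially the same route as the paper's proof: both rest on the observations that $p_i(t^*)=\rew_i(N)$ because no arm has exceeded $N$ pulls, that the selection rule gives $p_i(t^*)\ge p_j(t^*)$, and that diminishing returns makes each $p_j(t^*)$ an over-estimate of $\rew_j(N)$, with Proposition~\ref{prop: single arm optimal policy} identifying $\opt(I,N)$ with $\max_j\rew_j(N)$. The only cosmetic difference is that the paper phrases the final step as a contradiction while you argue directly, and you spell out the telescoping behind the optimism inequality that the paper leaves implicit.
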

We note that if our algorithm runs for $T$ time steps then the above lemma holds for any $N$ between $1$ and $T$.
The proof of the above lemma relies on how we compute $p_i(t)$, the optimistic estimate of the cumulative reward of arm $i$, at each time step.
We remark here that previous works that study the \IMAB problem assume that the horizon $T$ is known to the algorithm beforehand.
This significantly simplifies the problem of estimating the optimistic estimate of the cumulative reward of any arm using a linear extrapolation.
This also allows certain arms to be eliminated based on these estimates.
However, such an approach is not possible in the anytime setting.
In fact, in Theorem \ref{theorem: fairness of optimal algo}, we show that our algorithm keeps pulling an arm till it is improving.

Next, we state Lemma \ref{lem: rew N over rew T for arm i} that lower bounds the ratio ${\rew_i(N)}/{\rew_i(T)}$, which is the ratio of the cumulative reward of pulling arm $i$ for $N$ pulls to that of pulling it for $T$ pulls, for each arm $i\in[k]$. 
Part $(a)$ of the following lemma considers the case when $N > T/2$ and part $(b)$ looks at the case when $N \leq T/2$.
\begin{restatable}{lemma}{RewNOverRewTForArmi}
\label{lem: rew N over rew T for arm i}
For each arm $i \in [k]$,
\begin{enumerate}[label=(\alph*)]
    \item $\dfrac{\rew_i(\alpha T)}{\rew_i(T)} \geq \dfrac{1}{5}$ ~~~~~~~~~for $\alpha \geq \dfrac{1}{2}$ ,
    \item $\dfrac{\rew_i(\alpha T/k)}{\rew_i(T)} \geq \dfrac{16\alpha^2}{25k^2}$ ~~for $0 \leq \alpha \leq \dfrac{k}{2}$.
\end{enumerate}
\end{restatable}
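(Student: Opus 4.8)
\textbf{Proof proposal for Lemma \ref{lem: rew N over rew T for arm i}.}

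The plan is to exploit concavity (diminishing returns) together with monotonicity of $f_i$. The key observation is that $\rew_i(N)/N$ is the average of the first $N$ instantaneous rewards, and since $f_i$ is concave and nonnegative, the function $n \mapsto f_i(n)$ lies below its chord on $[1,N]$ but above the chord extended past $N$; dually, $\rew_i$ itself is concave as a function of the number of pulls (second difference $f_i(n+1)-f_i(n) - (f_i(n)-f_i(n-1)) \le 0$), so $\rew_i(cN) \ge c\cdot\rew_i(N)$ for $c \le 1$ only in one direction — I need to be careful about which direction of the concavity inequality I use. Concretely, for a concave function $g$ with $g(0)=0$ (here $g(n)=\rew_i(n)$, and $\rew_i(0)=0$), superadditivity-type bounds give $g(\alpha T) \ge \alpha\, g(T)$ for $\alpha\in[0,1]$. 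That already handles part (a) with the constant $1/2$ rather than $1/5$ — so the slack ($1/5$ instead of $1/2$) is presumably there to absorb the integral-versus-sum approximation mentioned in Appendix \ref{app: approximation of sum by integral}, or an off-by-one in $\rew_i(N)$ versus $\int_0^N f_i$.

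For part (a): since $\rew_i$ is concave with $\rew_i(0) = 0$, for any $\alpha \in [\tfrac12, 1]$ we have $\rew_i(\alpha T) \ge \alpha \rew_i(T) \ge \tfrac12 \rew_i(T)$, where the first inequality is the standard concavity bound $g(\alpha x + (1-\alpha)\cdot 0) \ge \alpha g(x) + (1-\alpha) g(0)$. For $\alpha \ge 1$, $\rew_i(\alpha T) \ge \rew_i(T)$ by monotonicity. The $1/5$ bound then follows a fortiori (after accounting for the discrete-to-continuous approximation, which can only cost a bounded multiplicative factor that the $1/5$ is chosen to dominate).

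For part (b): here $\alpha T/k$ can be much smaller than $T$, so the linear concavity bound $\rew_i(\beta T) \ge \beta \rew_i(T)$ with $\beta = \alpha/k$ gives only $\rew_i(\alpha T/k) \ge (\alpha/k)\rew_i(T)$, which is weaker than the claimed $\tfrac{16\alpha^2}{25 k^2}\rew_i(T)$ exactly when $\alpha/k$ is small — wait, $(\alpha/k) \ge (16/25)(\alpha/k)^2$ whenever $\alpha/k \le 25/16$, which holds since $\alpha \le k/2$. So in fact the linear bound is \emph{stronger} than what is claimed, and part (b) also follows from $\rew_i(\alpha T/k)\ge (\alpha/k)\rew_i(T) \ge \tfrac{16\alpha^2}{25k^2}\rew_i(T)$. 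The quadratic form is presumably stated because it is the convenient shape for plugging into the competitive-ratio argument (matching the quadratic growth $n/(kN)$ in the lower-bound instances), and again the constant $16/25 = (4/5)^2$ leaves room for the integral approximation. The main obstacle, then, is not the inequality itself but (i) getting the discrete concavity statement "$\rew_i$ is concave with $\rew_i(0)=0 \Rightarrow \rew_i(\alpha T)\ge \alpha\rew_i(T)$" exactly right for integer arguments (one must interpolate $\rew_i$ linearly between integers, or phrase it via $f_i(n) \ge f_i(T)$ won't hold — rather use that $f_i$ nonincreasing in slope means early pulls contribute at least the average), and (ii) tracking the constant through whatever sum-to-integral approximation Appendix \ref{app: approximation of sum by integral} licenses so that $1/2 \rightsquigarrow 1/5$ and $(\alpha/k) \rightsquigarrow \tfrac{16}{25}(\alpha/k)^2$ are both safely dominated.
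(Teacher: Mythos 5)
There is a genuine gap, and it is fatal to both parts: you have the convexity of $\rew_i$ backwards. Since $f_i$ is monotonically \emph{increasing}, the increments of the cumulative reward $\rew_i(N)=\sum_{n=1}^N f_i(n)$ are increasing; the second difference of $\rew_i$ is $\rew_i(n+1)-2\rew_i(n)+\rew_i(n-1)=f_i(n+1)-f_i(n)\ge 0$. (You computed the second difference of $f_i$, which is indeed $\le 0$, and then attributed that concavity to $\rew_i$.) So $\rew_i$ is \emph{convex} with $\rew_i(0)=0$, and the standard inequality goes the wrong way: $\rew_i(\alpha T)\le \alpha\,\rew_i(T)$ for $\alpha\in[0,1]$, not $\ge$. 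A concrete counterexample to your key step is $f_i(n)=n/T$ for $n\le T$: then $\rew_i(N)=N(N+1)/(2T)$, so $\rew_i(T/2)/\rew_i(T)\approx 1/4<1/2$, and more generally $\rew_i(\beta T)/\rew_i(T)\approx\beta^2$. This breaks part (a) as you argue it (the lemma survives only because $1/4\ge 1/5$, not because of your inequality), and it completely invalidates part (b): your conclusion that the linear bound $\rew_i(\alpha T/k)\ge(\alpha/k)\rew_i(T)$ is ``stronger than what is claimed'' is exactly backwards --- the true ratio can be as small as $\Theta(\alpha^2/k^2)$, which is precisely why the lemma asserts a quadratic bound.

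The correct argument has to use the concavity and boundedness of $f_i$ itself, not of $\rew_i$. The paper lower bounds $\rew_i(\alpha T)$ by the area of the triangle under the secant from the origin to $(\alpha T, f_i(\alpha T))$, giving $\tfrac{\alpha T}{2}f_i(\alpha T)$ (resp.\ $\tfrac{\alpha^2T^2}{2k^2}\slopeOC$ in part (b)), and upper bounds $\rew_i(T)$ by exploiting that a concave $f_i$ cannot rise above the extension of that secant: $\rew_i(T)\le\tfrac{5T}{4}f_i(\alpha T)$ when $\alpha\ge 1/2$, and $\rew_i(T)\le\tfrac{25T^2}{32}\slopeOC$ when $\alpha\le k/2$ (Claims \ref{claim: alpha greater than half, reward in T upper bound} and \ref{claim: alpha less than k by 2, rew T upper bound}). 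The constants $1/5$ and $16/25$ come from these two geometric comparisons, not from absorbing a discretization error as you conjecture. You would need to replace your central step with an argument of this kind; the discrete-to-integral approximation is a side issue and cannot repair a bound whose direction is wrong.
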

%
The proof of the above lemma relies on the properties of $f_i$, in particular, the properties that 
$f_i$'s are monotonically increasing, bounded in $[0,1]$, and have decreasing marginal returns.
To prove part ($a$), we first show that $\rew_i(\alpha T)$ can be lower bounded by the area of triangle defined by $O$, $E$, and $B$ in Figure \ref{fig: Lemma Case A} which is equal to $\frac{\alpha T f_i(\alpha T)}{2}$. 
Further, for $\alpha\geq1/2$ we show that $\rew_i(T) \leq \frac{5T}{4}f_i(\alpha T)$ (see Claim \ref{claim: alpha greater than half, reward in T upper bound} in Appendix \ref{app: missing proofs optimal algo}). 
This gives us part ($a$) of the lemma.
To prove part ($b$), we show that $\rew_i(\alpha T/k)$ is lower bounded by the area of the triangle defined by $O$, $E$, 
and $B$ in Figure \ref{fig: Lemma Case B} which is equal to $\frac{\alpha^2T^2\slopeOC}{2k^2}$, 
where $\slopeOC$ is the slope of the line segment passing through points $O$ and $E$ in Figure \ref{fig: Lemma Case B}. 
Using arguments leveraging certain geometric properties satisfied by $f_i$, we show that $\rew_i(T) \leq \frac{25T^2\slopeOC}{32}$ (see Claim \ref{claim: alpha less than k by 2, rew T upper bound} in Appendix \ref{app: missing proofs optimal algo}). 
This gives us part ($b$) of the lemma.
\begin{figure}
\centering
    \begin{minipage}{0.5\textwidth}
    \centering
    \includegraphics[scale=0.5]{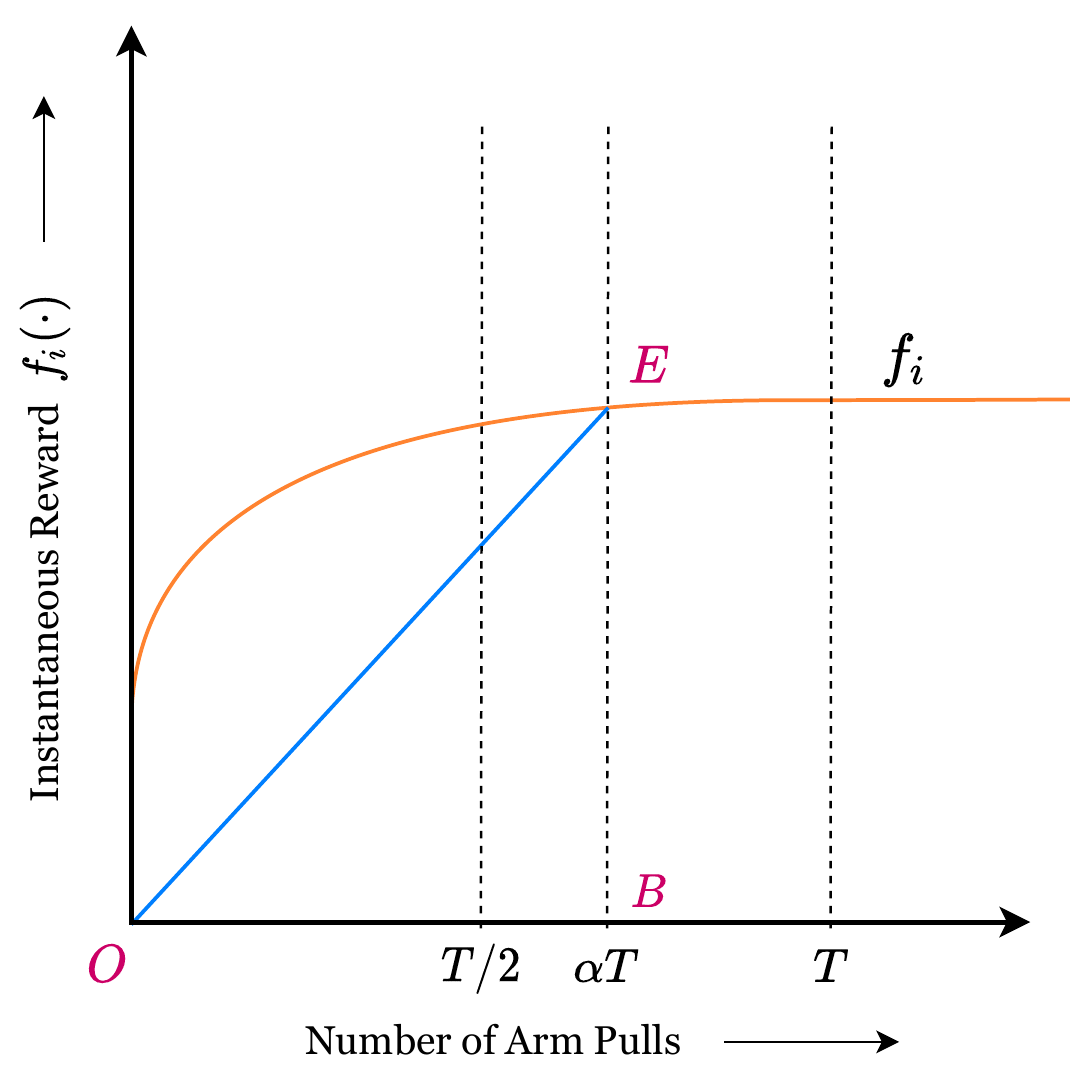}
    \captionof{figure}{$\alpha > 1/2$}
    \label{fig: Lemma Case A}
    \end{minipage}%
    \begin{minipage}{0.5\textwidth}
    \centering
    \includegraphics[scale=0.5]{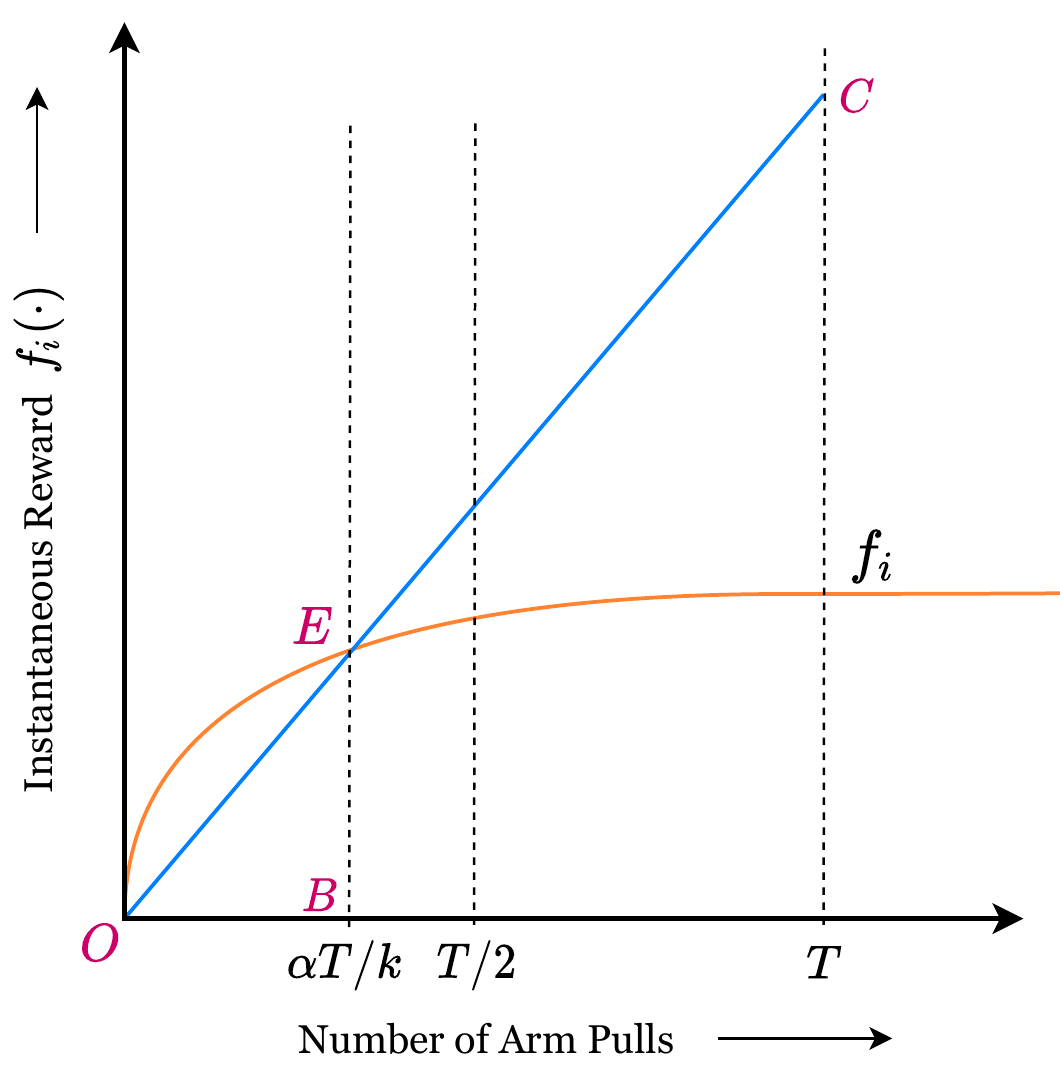}
    \captionof{figure}{$0 < \alpha \leq k/2$}
    \label{fig: Lemma Case B}
    \end{minipage}
\end{figure}
%
%
Next, we have the following interesting corollary to Lemma \ref{lem: rew N over rew T for arm i} which compares the optimal rewards at $T$ and $N$ where $N$ spans values in $\{1, \ldots, T\} $ depending on the value of $\alpha$. 
\begin{restatable}{corollary}{OptNoverOptTforArmi}
\label{corr: opt N by opt T for arm i}
For any finite time horizon $T$, we have
\begin{enumerate}[label=(\alph*)]
    \item $\dfrac{\opt(I,\alpha T)}{\opt(I,T)} \geq \dfrac{1}{5}$ ~for~ $\dfrac{1}{2} \leq \alpha \leq 1$ ,
    \item $\dfrac{\opt(I,\alpha T/k)}{\opt(I,T)} \geq \dfrac{16\alpha^2}{25k^2}$ ~for~ $0 < \alpha \leq \dfrac{k}{2}$. 
\end{enumerate}
\end{restatable}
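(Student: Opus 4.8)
The plan is to reduce the corollary to Lemma \ref{lem: rew N over rew T for arm i} by a single-arm comparison, exploiting the structure of the offline optimum given by Proposition \ref{prop: single arm optimal policy}. By that proposition there is an arm $j = j^*_T$ with $\opt(I,T) = \rew_j(T)$. The key observation I would use is that for any shorter horizon $T' \le T$, the (generally suboptimal) policy that pulls arm $j$ for all $T'$ steps is feasible, so $\opt(I,T') \ge \rew_j(T')$. Combining these two facts gives
\[
\frac{\opt(I,T')}{\opt(I,T)} \;\ge\; \frac{\rew_j(T')}{\rew_j(T)},
\]
and it then remains only to invoke the corresponding bound of Lemma \ref{lem: rew N over rew T for arm i} applied to this particular arm $j$.

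For part (a) I would take $T' = \alpha T$ with $1/2 \le \alpha \le 1$; since $\alpha \ge 1/2$, Lemma \ref{lem: rew N over rew T for arm i}(a) yields $\rew_j(\alpha T)/\rew_j(T) \ge 1/5$, and the displayed chain gives $\opt(I,\alpha T)/\opt(I,T) \ge 1/5$. For part (b) I would take $T' = \alpha T / k$ with $0 < \alpha \le k/2$, noting that then $T' \le T/2 \le T$ so the feasibility argument applies; Lemma \ref{lem: rew N over rew T for arm i}(b) gives $\rew_j(\alpha T/k)/\rew_j(T) \ge 16\alpha^2/(25k^2)$, and the same reasoning yields the claim.

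The only mild technical point is that $\alpha T$ and $\alpha T/k$ need not be integers; as elsewhere in the paper (cf. the appendix on approximating sums by integrals), I would interpret $\rew_i(\cdot)$ and $\opt(I,\cdot)$ via the area under $f_i$, or equivalently replace these arguments by their floors, which perturbs the bounds only by lower-order terms that are absorbed into the stated constants. I do not expect any real obstacle here: all the work is in Lemma \ref{lem: rew N over rew T for arm i}, and the corollary is essentially the remark that the arm witnessing optimality at horizon $T$ simultaneously certifies a lower bound on $\opt$ at every shorter horizon.
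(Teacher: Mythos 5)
Your proposal is correct and is essentially identical to the paper's own proof: both identify the single optimal arm $j^*_T$ via Proposition \ref{prop: single arm optimal policy}, use $\opt(I,T') \geq \rew_{j^*_T}(T')$ together with $\opt(I,T) = \rew_{j^*_T}(T)$, and then invoke the corresponding part of Lemma \ref{lem: rew N over rew T for arm i}. The integrality remark is handled the same way the paper does (via the integral approximation in the appendix), so there is nothing further to add.
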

The proof of the above corollary uses Proposition \ref{prop: single arm optimal policy} and Lemma \ref{lem: rew N over rew T for arm i}. From Proposition \ref{prop: single arm optimal policy}, we know that the optimal policy for time horizon $T$ pulls a single arm.
Let $j^*_T \in [k]$ denote this arm.
Further, note that $\opt(I,\alpha T) \geq \rew_{j^*_T}(\alpha T)$, by definition of $\opt(I,\alpha T)$.
Since, $\opt(I,T) = \rew_{j^*_T}(T)$, part ($a$) follows from part ($a$) of Lemma \ref{lem: rew N over rew T for arm i}.
Part ($b$) is also proved using a similar argument.
%


Now, observe that $\alg(I,T)$, i.e., the cumulative reward of our algorithm after $T$ time steps can be written as the sum of the rewards obtained from each arm.
In particular, $\alg(I,T) = \sum_{i\in[k]} \rew_i(N_i)$, where $N_i$ is the number of times arm $i$ has been pulled in $T$ time steps.
Recall that, our goal is to provide an upper bound on $ \frac{\opt(I,T)}{\alg(I,T)}$, or equivalently, $\frac{\opt(I,T)}{\sum_{i\in[k]} \rew_i(N_i)}$.
The following lemma provides an upper bound on $\frac{\opt(I,T)}{\rew_i(N_i)}$ in terms of only $N_i$ and the time horizon $T$, when $N_1 \leq T/2$. 
We handle the (easier) case of $N_1 > T/2$ separately (see case 1 of proof of Theorem \ref{theorem: k competitive algo proof} in Appendix \ref{app: missing proofs optimal algo}).

\begin{restatable}{lemma}{RewNOverOPTTForArmi}
\label{lemma: rew N by opt T for arm i}
If $N_1 \leq T/2$ then for any arm $i\in [k]$,
$$
\dfrac{\opt(I,T)}{\rew_i(N_i)} \leq \dfrac{200T^2}{N_i^2}.
$$
\end{restatable}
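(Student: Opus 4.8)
The goal is to bound $\opt(I,T)/\rew_i(N_i)$ when $N_1 \le T/2$, using only the quantities $N_i$ and $T$. The natural strategy is to chain together the results already established: relate $\opt(I,T)$ to $\opt(I, \text{something})$ via Corollary \ref{corr: opt N by opt T for arm i}, relate that optimal reward to the reward of arm $i$ via Lemma \ref{lemms: first arm to cross N pulls}, and then relate $\rew_i$ at two different pull counts via Lemma \ref{lem: rew N over rew T for arm i}.

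\textbf{Step 1: Identify a useful pull count.} Since $N_1 \ge N_2 \ge \cdots \ge N_k$ and $\sum_j N_j = T$, we have $N_1 \ge T/k$; combined with the hypothesis $N_1 \le T/2$, the largest arm has been pulled somewhere in the range $[T/k, T/2]$. I would apply Lemma \ref{lemms: first arm to cross N pulls} with $N = N_1$ (or $N = N_1 - 1$, to be careful about the "first to cross" wording — arm $1$ is the first arm to reach $N_1$ pulls since it has the most pulls and ties in the algorithm are broken toward smaller $N_i(t)$): this gives $\rew_1(N_1) = \opt(I, N_1)$, i.e. the reward arm $1$ has accumulated equals the offline optimum for horizon $N_1$.

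\textbf{Step 2: Transfer from the optimum at $N_1$ to the optimum at $T$.} Write $N_1 = \alpha T$ with $\alpha \in [1/k, 1/2]$. Apply part (a) of Corollary \ref{corr: opt N by opt T for arm i} if $\alpha \ge 1/2$ — but here $\alpha \le 1/2$, so more likely I need part (b), writing $N_1 = (\alpha' T)/k$ with $\alpha' = k\alpha \in [1, k/2]$, giving $\opt(I, N_1)/\opt(I,T) \ge 16\alpha'^2/(25k^2) = 16 N_1^2/(25 T^2)$. Hence $\opt(I,T) \le \frac{25 T^2}{16 N_1^2}\,\opt(I,N_1) = \frac{25 T^2}{16 N_1^2}\,\rew_1(N_1)$.

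\textbf{Step 3: Pass from arm $1$ to an arbitrary arm $i$.} Now I need $\rew_1(N_1) \le (\text{const})\cdot \rew_i(N_i)$, again in terms of $N_1, N_i, T$ only. This is where Lemma \ref{lem: rew N over rew T for arm i} comes in, applied to \emph{both} arms: $\rew_1(N_1) \le \rew_1(T)$ trivially (monotonicity), and then part (b) of Lemma \ref{lem: rew N over rew T for arm i} lower bounds $\rew_i(N_i)$ in terms of $\rew_i(T)$. The obstacle is that Lemma \ref{lem: rew N over rew T for arm i} relates $\rew_i$ at a fraction of $T$ to $\rew_i(T)$ for a \emph{single} arm — it does not directly compare arm $1$ to arm $i$. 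The bridge must be Lemma \ref{lemms: first arm to cross N pulls} again: for the arm $i$ under consideration, either $i=1$ (done), or $N_i < N_1$, and I can invoke the first-arm property at level $N_i + 1$ or use that $\opt(I,T) \ge \rew_i(T)$ together with $\rew_i(N_i) \ge \frac{16 N_i^2}{25 T^2}\rew_i(T) \ge \frac{16 N_i^2}{25 T^2}\,\rew_i(T)$, and separately $\opt(I,T) \le \frac{25T^2}{16 N_1^2}\opt(I,N_1)$. Combining: since $\rew_i(T) \le \opt(I,T)$ and $\rew_1(N_1) = \opt(I,N_1) \le \opt(I,T)$ — wait, that inequality goes the wrong way. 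The correct route is: $\opt(I,T)/\rew_i(N_i) = \frac{\opt(I,T)}{\rew_i(T)} \cdot \frac{\rew_i(T)}{\rew_i(N_i)} \le 1 \cdot \frac{25 T^2}{16 N_i^2}$ by part (b) of Lemma \ref{lem: rew N over rew T for arm i} applied to arm $i$ with $\alpha = k N_i / T$ (valid since $N_i \le N_1 \le T/2$ forces $\alpha \le k/2$, and $\alpha > 0$), \emph{provided} $\opt(I,T)/\rew_i(T) \le$ some constant. That last ratio is NOT bounded by $1$ in general. So I must instead bound $\opt(I,T)/\rew_i(T)$ using Step 2 (which bounds $\opt(I,T)$ by $\rew_1(N_1) \le \rew_1(T)$) and then bound $\rew_1(T)/\rew_i(T)$ — but both are at most $T$, so this ratio is at most... unbounded again. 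The genuine fix: use that arm $1$ being first to cross $N_1$ means arm $1$ is optimal for horizon $N_1$, and $N_i \le N_1$, so arm $i$'s reward cannot be "too far behind" — more precisely I expect one shows $\rew_i(N_i) \ge \frac{16 N_i^2}{25 T^2} \rew_i(T)$ and $\opt(I,T) \le \opt(I,N_1)\cdot\frac{25T^2}{16N_1^2} \le \rew_1(T)\cdot\frac{25T^2}{16N_1^2}$, then the only missing piece is $\rew_1(T) \le C\cdot\rew_i(T)$ which follows because if it failed, arm $i$ would be a strictly better single-arm choice than arm $1$ for horizon $N_1$ too (using concavity/diminishing returns to compare the ratio at $N_1$ to the ratio at $T$), contradicting Lemma \ref{lemms: first arm to cross N pulls}.

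\textbf{Main obstacle.} The crux — and the step I expect to be hardest and most delicate — is Step 3: showing $\rew_1(T)$ and $\rew_i(T)$ are within a constant factor (or directly bounding $\opt(I,T)/\rew_i(N_i)$), which requires combining the optimality-at-$N_1$ characterization from Lemma \ref{lemms: first arm to cross N pulls} with the diminishing-returns geometry to "transport" that optimality from horizon $N_1$ up to horizon $T$. Once the constant $C$ is pinned down, the final bound assembles as $\opt(I,T)/\rew_i(N_i) \le \frac{25 T^2}{16 N_1^2}\cdot C \cdot \frac{25 T^2}{16 N_i^2} \cdot (\text{lower-order factors})$, and since $N_i \le N_1$ this is at most $(\text{const})\cdot T^4/(N_i^2 N_1^2) \le (\text{const}) T^2/N_i^2$ only if one is more careful — so I would actually aim to avoid the double loss by bounding $\opt(I,T)$ against $\rew_i(T)$ directly in one shot and then using part (b) of Lemma \ref{lem: rew N over rew T for arm i} just once, arriving at the clean $200 T^2/N_i^2$ after tracking the constants $25/16$, factor $5$ from part (a) contingencies, and the $C$ from Step 3.
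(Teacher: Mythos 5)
There is a genuine gap, and it sits exactly where you flagged it: Step 3. You correctly assemble $\opt(I,T) \le \frac{25T^2}{16N_1^2}\rew_1(N_1)$ from Lemma \ref{lemms: first arm to cross N pulls} and Corollary \ref{corr: opt N by opt T for arm i}(b), but the bridge you propose to an arbitrary arm $i$ --- that $\rew_1(T) \le C\cdot \rew_i(T)$ for a universal constant $C$ --- is neither proved nor plausible as stated. Your justification (``if it failed, arm $i$ would be a strictly better single-arm choice than arm $1$ for horizon $N_1$'') runs backwards: $\rew_1(T) \gg \rew_i(T)$ makes arm $i$ \emph{worse}, which is entirely consistent with arm $1$ being optimal at horizon $N_1$; Lemma \ref{lemms: first arm to cross N pulls} only gives $\rew_1(N_1) \ge \rew_j(N_1)$ for all $j$, i.e.\ an upper bound on other arms' rewards, never a lower bound on $\rew_i$. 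Moreover, even granting such a $C$, your own assembly yields $\opt(I,T)/\rew_i(N_i) \lesssim T^4/(N_1^2 N_i^2)$, and since $N_1$ can be as small as $T/k$ this is only $O(k^2 T^2/N_i^2)$ --- which, fed into the Cauchy--Schwarz step of Theorem \ref{theorem: k competitive algo proof}, would give an $O(k^3)$ competitive ratio rather than $O(k)$. You acknowledge both problems but do not supply the missing mechanism.

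The ingredient the paper uses, and which your proposal never invokes, is the algorithm's \emph{selection rule at the time $t$ when arm $i$ receives its $N_i$-th pull}. At that moment $p_i(t) \ge p_\ell(t) \ge \rew_\ell(N'_\ell) = \opt(I,N'_\ell)$, where $\ell$ is the arm that was first to reach the current maximum pull count $N'_\ell+1$. The optimistic index $p_i(t)$ is at most $\rew_i(a) + 2b f_i(a) + 2b^2\Delta_i$ with $a = N_i - 1$, $b = N'_\ell - N_i + 2$, so $\opt(I,N'_\ell)$ is bounded by quantities intrinsic to arm $i$. Concavity then converts $f_i(a)$ and $\Delta_i$ back into lower bounds on $\rew_i(a)$ (via $\rew_i(a)\ge \tfrac{a}{2}f_i(a)$ and $\rew_i(a)\ge \tfrac{a^2}{2}\Delta_i$), and a two-way case split on whether $f_i(a)\ge b\Delta_i$ yields $\rew_i(N_i) \ge \tfrac{N_i}{32N'_\ell}\opt(I,N'_\ell)$ or $\rew_i(N_i)\ge \tfrac{N_i^2}{128 N'^2_\ell}\opt(I,N'_\ell)$; a single application of Corollary \ref{corr: opt N by opt T for arm i}(b) with $N'_\ell = \alpha T/k$ then gives $N_i^2/(200T^2)$ directly, with no double loss in $N_1$. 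Without using the index comparison in this way, the decomposition through $\rew_1(T)$ and $\rew_i(T)$ cannot close.
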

\vis{The proof of the above lemma is non-trivial and requires intricate case analysis using different properties of our algorithm and the reward functions.}
The proof of the above lemma uses Lemmas \ref{lem: rew N over rew T for arm i} and \ref{lemma: rew N by opt T for arm i}, and Corollary \ref{corr: opt N by opt T for arm i}. 
Finally, with all the components in place, we provide a brief proof sketch of Theorem \ref{theorem: k competitive algo proof}.

\begin{proof}[Proof Sketch of Theorem \ref{theorem: k competitive algo proof}]
We look at the following two cases: 1) $N_1 > T/2$, and 2) $N_1 \leq T/2$.

Case 1 implies that arm $1$ is the first, and hence the only arm to cross $T/2$ pulls. 
From Lemma \ref{lemms: first arm to cross N pulls}, we get, $\rew_1(T/2) = \opt(I,T/2)$.
Therefore, $\alg(I,T) \geq \rew_1(T/2) = \opt(I,T/2)$.
 Hence, we obtain, $\opt(I,T)/\alg(I,T)
\leq \opt(I,T)/\opt(I,T/2)
\leq 5 \leq 200k$.
Here, the second inequality follows from Corollary \ref{corr: opt N by opt T for arm i}. 
Since the above inequality holds for any instance $I$, we get $\compRatio_\alg(T) \leq 200k$.

For Case 2, we use Lemma \ref{lemma: rew N by opt T for arm i} and obtain
$$ \dfrac{\sum_{i\in[k]} \rew_i(N_i)}{\opt(I,T)} \geq \dfrac{\sum_{i\in[k]}N_i^2}{200T^2} \geq \frac{(\sum_{i \in [k]}  N_i/\sqrt{k})^2}{200 T^2}
=\frac{T^2}{200 k T^2}=\frac{1}{200k}.$$
Here, the second inequality follows from Cauchy-Schwarz inequality (see Observation \ref{obs: cauchy schwarz} in Appendix \ref{app: missing proofs optimal algo}).
This implies $\dfrac{\opt(I,T)}{\alg(I,T)} \leq 200k$. Since this holds for an arbitrary instance $I\in \mathcal{I}$, we have $\compRatio_\alg(T) \leq 200k$, i.e., it is $O(k)$.
\end{proof}

\subsection{Proof Sketch for Theorem \ref{theorem: fairness of optimal algo}}
\label{subsec: proof sketch fairness}
The theorem is proved using Lemma \ref{lemma: fairness of optimal algo} stated below. 
We give a proof sketch for the lemma and then explain how the proof of the theorem is completed using the lemma.
The detailed proof of Lemma \ref{lemma: fairness of optimal algo} and Theorem \ref{theorem: fairness of optimal algo} can be found in Appendix \ref{app: proof of fairness}.
In Lemma \ref{lemma: fairness of optimal algo}, we show that $\alg$ pulls an arm finitely many times only if the arm stops improving. 
\begin{restatable}{lemma}{FairnessLemma}
\label{lemma: fairness of optimal algo}
Let $L_i = \max_{t\in \mathbb{N}}\{N_{i}(t)\}$ for all $i\in [k]$. Then for any $i\in [k]$, $L_i$ is finite implies that $\Delta_i(L_i) = 0$.
\end{restatable}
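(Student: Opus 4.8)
The plan is to argue by contradiction. Suppose some arm $i$ with $L_i<\infty$ has $\Delta_i(L_i)=c>0$; note $L_i<\infty$ exactly means arm $i$ is pulled only finitely often, so there is a time past which $N_i(t)$ is frozen at $L_i$. I will show this forces \emph{every} arm to be pulled only finitely often, contradicting the fact that $\sum_{j\in[k]}N_j(t)=t-1\to\infty$. The engine of the argument is a growth-rate comparison of the optimistic estimates $p_j(\cdot)$ from Step 10.

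First I would record the elementary facts needed about the reward functions. Since each $f_j$ is monotone, bounded in $[0,1]$, and $N_j(t)\ge 2$ throughout the main loop, the increments satisfy $0\le \Delta_j(n)\le 1$; and since $\sum_{n=2}^{N}\Delta_j(n)=f_j(N)-f_j(1)\le 1$, the nonnegative series $\sum_n\Delta_j(n)$ converges, hence $\Delta_j(n)\to 0$ as $n\to\infty$. I would also note that $M_t:=N_{i^*_t}(t)=\max_j N_j(t)\ge (t-1)/k\to\infty$.

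Next, expanding the arithmetic sum in Step 10, for any arm $j$ and any $t$ (so $N_j(t)\le M_t$),
\[
p_j(t)=\rew_j(N_j(t))+(M_t-N_j(t))f_j(N_j(t))+\Delta_j(N_j(t))\cdot\tfrac{(M_t-N_j(t))(M_t-N_j(t)+1)}{2}.
\]
Using $\rew_j(N_j(t))\le N_j(t)\le M_t$ and $f_j\le 1$ gives the upper bound $p_j(t)\le 2M_t+\Delta_j(N_j(t))\,M_t^2$, whose super-linear part is controlled entirely by $\Delta_j(N_j(t))$. For the distinguished arm $i$, whose count is frozen at $L_i$, the same expansion with $\Delta_i(L_i)=c$ and nonnegativity of the other terms yields $p_i(t)\ge \tfrac{c}{2}(M_t-L_i)^2\ge \tfrac{c}{4}M_t^2$ once $M_t$ is large. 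Combining, for all sufficiently large $t$ the arm $i_t$ actually pulled satisfies $p_{i_t}(t)=\max_j p_j(t)\ge p_i(t)\ge \tfrac{c}{4}M_t^2$, so $2M_t+\Delta_{i_t}(N_{i_t}(t))M_t^2\ge \tfrac{c}{4}M_t^2$, i.e. $\Delta_{i_t}(N_{i_t}(t))\ge c/8$ for $t$ large. Finally, whenever an arm $j$ is pulled in the main loop its count $N_j(t)$ runs through the values $2,3,4,\dots$, each exactly once in increasing order; so if $j$ were pulled infinitely often, $N_j(t)\to\infty$ along its pull-times and, since $\Delta_j(n)\to 0$, we would have $\Delta_j(N_j(t))<c/8$ at all late pulls of $j$ — contradicting the previous sentence. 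Hence every arm is pulled only finitely often, which is impossible, so $\Delta_i(L_i)=0$.

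The main obstacle is the third paragraph: obtaining clean two-sided control of $p_j(t)$ that isolates $\Delta_j(N_j(t))$ as the \emph{only} possible source of quadratic-in-$M_t$ growth, combined with the bookkeeping that an arm's pre-pull counts are exactly $2,3,4,\dots$. A convenient feature is that the argument never needs to reason about the tie-breaking rule of the algorithm, since it constrains the pulled arm $i_t$ directly regardless of how ties are resolved.
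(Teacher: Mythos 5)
Your proof is correct. Every step checks out: the expansion of $p_j(t)$ matches Step 10 of the algorithm, the bound $p_j(t)\le 2M_t+\Delta_j(N_j(t))M_t^2$ is valid (using $f_j\le 1$, $N_j(t)\le M_t$, and $M_t\ge 1$), the lower bound $p_i(t)\ge\frac{c}{4}M_t^2$ holds once $M_t$ is large relative to $L_i$, and the final pigeonhole ($\sum_j N_j(t)=t-1\to\infty$ forces some arm to be pulled infinitely often) closes the contradiction.

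The core mechanism is the same as the paper's --- a frozen arm with $\Delta_i(L_i)=c>0$ has an optimistic estimate growing like $\frac{c}{2}M_t^2$, while the quadratic part of any other arm's estimate is governed by its current increment, and increments vanish along an arm's own pull sequence because $\sum_n\Delta_j(n)\le 1$. But your organization of the contradiction is genuinely different and cleaner. The paper partitions the arms into the set $C$ for which the lemma holds and its complement $\overline{C}$, picks a distinguished arm $m\in\overline{C}$, chooses a single time $T$ at which \emph{all} arms of $C$ simultaneously have increment below $\Delta/2$ (which requires a small case analysis on whether $L_i$ is finite), and then compares $p_m(T')$ against $p_i(T')$ for $i\in C$ at a later $T'$, using a monotonicity argument (the estimate computed from the count at time $T$ dominates the one at time $T'$) to freeze the comparison. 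You instead derive a single uniform statement --- ``for all large $t$, whatever arm is pulled satisfies $\Delta_{i_t}(N_{i_t}(t))\ge c/8$'' --- and conclude that every arm is pulled only finitely often, which is absurd. This dispenses with the $C$/$\overline{C}$ bookkeeping, the simultaneous choice of $T$, and the monotonicity-of-estimates step, and as you note it is manifestly insensitive to the tie-breaking rule. The paper's version does yield slightly more localized information (it exhibits a specific arm of $\overline{C}$ that must be re-pulled), but for establishing the lemma your route is shorter and at least as rigorous.
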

$L_i$ as defined in the above lemma captures the number of times the algorithm pulls arm $i$ as $T$ tends to infinity.
It is easy to see that there is at least one arm $i\in [k]$ such that $L_i$ is not finite, and hence, the property holds for this arm vacuously. 
The proof of the lemma argues via contradiction that the property has to be satisfied for all the arms.
Suppose there is an arm $j\in [k]$ such that $L_j$ is finite but $\Delta_j(L_j) \neq 0$.  
Then we consider a time horizon larger than when arm $j$ was pulled for the $L_j$-th time and use the definition of the optimistic estimate to show that such an arm is indeed pulled again,  contradicting the assumption. 

The theorem is proved using the above lemma as follows. Suppose $L_i$ as defined in Lemma \ref{lemma: fairness of optimal algo} is finite for an arm $i\in [k]$. 
Then the marginal decreasing property of the reward functions ensures that arm $i$ has reached its true potential,
i.e, $f_i(L_i) = a_i$. Further, if $L_i$ is not finite then the arm is pulled infinitely many times, and hence again
from the properties of the reward functions we have that for every $\varepsilon \in (0,a_i]$, there exists $T\in \mathbb{N}$ such that $\alg$ ensures the following:
$a_i - f_i(N_{i}(T)) \leq \varepsilon \,.$
\section{Conclusion and Future Work}\label{sec: conclusion}
We studied the \IMAB problem in horizon-unaware setting, and proposed an algorithm that achieves optimal competitive ratio at any time.
An interesting feature of our algorithm from the fairness perspective is that it keeps pulling an arm till it reaches its true potential.
This enables the arms to reach their true potential and mitigates the disparities that may exist between the potentials of the arms due to lack of opportunities. We further showed that the objective of maximizing cumulative reward is aligned

The \IMAB model assumes the reward functions are monotonically increasing, bounded, and have decreasing marginal returns property.
Our results leverage these properties to show the optimality of our algorithm.
\citet{LINDNER2021} study the single-peaked bandit model where the reward functions have a single peak under the assumption that the time horizon is known to the algorithm and provide asymptotic regret guarantees. It would be interesting to see if our ideas and techniques can be extended to the single-peaked bandits setting to obtain anytime guarantees on the regret and the competitive ratio.

\section*{Acknowledgements}
Vishakha Patil is grateful for the support of a Google PhD Fellowship.
Arindam's research is  supported by Pratiksha Trust Young Investigator Award,
Google India Research Award, and Google ExploreCS Award.
\bibliographystyle{plainnat}
\bibliography{references}

\newpage
\appendix
\section{Additional Preliminaries}
\label{app: additional prelims}

\subsection{Approximation of Discrete Sum by Definite Integral}
\label{app: approximation of sum by integral}
\begin{lemma}
\label{prop: approximation of sum by int}
Let $f_i$ be the reward function corresponding to some arm $i\in[k]$, satisfying the properties described in Section \ref{sec: model} (monotonically increasing, bounded in $[0,1]$, and decreasing marginal returns).
Then, $$\int_{t=0}^{T} f_i(t)\,dt \leq \rew_i(T) \leq \int_{t=0}^{T+1} f_i(t)\,dt.$$ 
\end{lemma}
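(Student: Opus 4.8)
The plan is to exploit monotonicity of $f_i$ alone; the diminishing-returns and boundedness assumptions enter only insofar as they guarantee $f_i \geq 0$. Since $f_i$ is a priori specified only on $\mathbb{N}$, I would first fix any nondecreasing continuous extension of $f_i$ to $[0,\infty)$ (for instance the piecewise-linear interpolation through the points $(n, f_i(n))$, or the concave interpolation already used implicitly elsewhere in the paper) and interpret every integral with respect to this extension. The inequalities proved below hold for any such nondecreasing extension, so no particular choice matters.

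For the lower bound, I would compare the sum term by term with the integral over the unit interval lying immediately to the \emph{left} of each integer: for every $n \in \{1, \dots, T\}$ and every $t \in [n-1, n]$, monotonicity gives $f_i(t) \leq f_i(n)$, hence $\int_{n-1}^{n} f_i(t)\,dt \leq f_i(n)$. Summing over $n = 1, \dots, T$ makes the integrals telescope, yielding $\int_{0}^{T} f_i(t)\,dt \leq \sum_{n=1}^{T} f_i(n) = \rew_i(T)$.

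For the upper bound, I would do the symmetric comparison with the unit interval immediately to the \emph{right} of each integer: for every $n \in \{1, \dots, T\}$ and every $t \in [n, n+1]$, monotonicity gives $f_i(t) \geq f_i(n)$, so $f_i(n) \leq \int_{n}^{n+1} f_i(t)\,dt$. Summing over $n = 1, \dots, T$ gives $\rew_i(T) \leq \int_{1}^{T+1} f_i(t)\,dt \leq \int_{0}^{T+1} f_i(t)\,dt$, where the final inequality uses $f_i \geq 0$ on $[0,1]$. Combining the two displays completes the proof.

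There is essentially no obstacle here: each half is a one-line Riemann-sum estimate driven purely by monotonicity. The only point deserving a sentence of care is that $f_i$ is natively a function on the integers, so one must either commit to a canonical continuous interpolation or, as the surrounding text does, phrase the statement as an approximation valid for the interpolation actually used in the later proofs; once that is settled, the argument is routine.
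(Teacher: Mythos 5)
Your proof is correct and uses essentially the same idea as the paper: a monotone Riemann-sum comparison, with the integer value $f_i(n)$ bounding the integral over the adjacent unit interval on the left (for the lower bound) and on the right (for the upper bound). Your upper-bound bookkeeping is slightly more direct than the paper's (which goes through the left Riemann sum over $[0,T]$ plus a correction term $f_i(T)\le\int_T^{T+1}f_i$), and your explicit remark about fixing a nondecreasing interpolation of $f_i$ to the reals addresses a point the paper leaves implicit; neither difference is substantive.
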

\begin{proof}
We prove this using simple results from calculus (see Chapter 5 of \cite{HUGHES2020}).
We know that $f_i: [0,T] \rightarrow [0,1]$ is a monotonically increasing function.
Let $P = \{[x_0, x_1],[x_1,x_2],\ldots,[x_{n-1},x_n]\}$ be an arbitrary partition of the interval $[0,T]$, where $$0 = x_0 < x_1 < x_2 < \ldots < x_n = T.$$

Then, the \emph{Left Riemann Sum} $L$ of $f_i$ over $[0,T]$ with partition $P$ is defined as
\begin{equation}
    \label{eq: general left riemann sum}
    L = \sum_{j=1}^n f_i(x_{j-1}) (x_j - x_{j-1})
\end{equation}
and the \emph{Right Riemann Sum} $R$ of $f$ over $[0,T]$ with partition $P$ is defined as
\begin{equation}
    \label{eq: general right riemann sum}
    R = \sum_{j=1}^n f_i(x_{j}) (x_j - x_{j-1})
\end{equation}
Now, let $P = \{[0,1], [1,2], [2,3],\ldots, [T-1, T]\}$. 
Substituting this in Eqs. \ref{eq: general left riemann sum} and \ref{eq: general right riemann sum}, we obtain,
\begin{equation}
    \label{eq: left riemann}
    L = \sum_{j=1}^T f_i(j-1)
\end{equation}
and,
\begin{equation}
    \label{eq: right riemann}
    R = \sum_{j=1}^T f_i(j)
\end{equation}
Given that $f_i$ corresponds to the reward function of arm $i$, $R$ in Eq. \ref{eq: right riemann} is the cumulative reward of pulling arm $i$ for $T$ times, i.e., $R = \rew_i(T)$. 


Since $f_i$ is monotonically increasing, we know that $$L \leq \int_{0}^{T} f_i(t) \,dt \leq R.$$
See Chapter 5 of \cite{HUGHES2020} for the above. Hence, from this we can conclude that 
$$\rew_i(T) - f_i(T) + f_i(0) = L \leq \int_{0}^{T} f_i(t) \,dt \leq R = \rew_i(T).$$

Further, note that
\begin{align*}
    \rew_i(T) &\leq \int_{t=0}^T f_i(t)\,dt + [f_i(T) - f_i(t)] \tag{From the first inequality above}\\
    & \leq \int_{t=0}^T f_i(t)\,dt + f_i(T) \tag{Since $f_i$'s are non-negative}\\
    & \leq \int_{t=0}^T f_i(t)\,dt + \int_{t=T}^{T+1} f_i(t)\,dt \tag{Using the Left Reimann Sum between $T$ and $T+1$}\\
    & = \int_{t=0}^{T+1} f_i(t)\,dt
\end{align*}
Hence, we can conclude that
$$\int_{t=0}^{T} f_i(t)\,dt \leq \rew_i(T) \leq \int_{t=0}^{T+1} f_i(t)\,dt.$$

\end{proof}

\subsection{Policy Regret vs. External Regret \cite{HEIDARI2016}}
\label{app: example from heidari}
Consider an instance $I = \instance$ of the \IMAB problem with $k=2$, and $f_1(n) = n/10$ and $f_2(n) = 0.1$ for all $n > 1$.
Now, let $\alg2$ be an arm that always pulls arm $2$.
The external regret of this algorithm is zero since, at every time step, it pulls the arm with the highest instantaneous reward.
However, the optimal offline algorithm for this problem always pulls arm $1$ at every time step.
Hence, the policy regret of $\alg2$ increases linearly with $T$.
\section{Proof of Proposition \ref{prop: single arm optimal policy}}
\label{app: proof of offline optimal strategy}
This proposition has been proved in \cite{HEIDARI2016}.
We provide an alternate proof here.
We first prove the following claim for the case when the number of arms $k =2$.
Using this, we then prove the proposition using an exchange argument and mathematical induction.

\begin{claim}
\label{claim: two arm optimal offline}
Consider an \IMAB instance $I = \instance$ with $k = 2$ and let $T$ be the time horizon.
Let $\alg$ be an algorithm that plays a single arm $j^*_T$ at each time step, where 
\begin{equation}
    j^*_T = \arg\max_{i\in[k]} \sum_{t=1}^T f_i(t)
\end{equation}
Let $\algprime$ be any other algorithm for the $\IMAB$ problem. Then,
$$ \alg(T) \geq  \algprime(T).$$
\end{claim}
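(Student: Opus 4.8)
The plan is to reduce the claim to a one-dimensional discrete optimization over the split of pulls between the two arms, and then exploit a convexity property of the resulting objective. Since the reward functions are deterministic, the cumulative reward of any algorithm after $T$ steps depends only on how many pulls it allocates to each arm and not on their order (as observed in Section~\ref{sec: model}). Hence $\algprime$ effectively allocates $n_1$ pulls to arm $1$ and $n_2 = T - n_1$ pulls to arm $2$ for some $n_1 \in \{0,1,\ldots,T\}$, so $\algprime(T) = \rew_1(n_1) + \rew_2(T-n_1)$. Define $g : \{0,1,\ldots,T\} \to \mathbb{R}$ by $g(n) = \rew_1(n) + \rew_2(T-n)$. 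Since $\rew_i(0) = 0$, we have $g(0) = \rew_2(T)$ and $g(T) = \rew_1(T)$, and because $k = 2$, $\alg(T) = \rew_{j^*_T}(T) = \max\{\rew_1(T), \rew_2(T)\} = \max\{g(0), g(T)\}$. It therefore suffices to show $g(n) \le \max\{g(0), g(T)\}$ for every $n$.

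First I would compute the forward difference of $g$: for $0 \le n \le T-1$,
\[
g(n+1) - g(n) = \big(\rew_1(n+1) - \rew_1(n)\big) - \big(\rew_2(T-n) - \rew_2(T-n-1)\big) = f_1(n+1) - f_2(T-n).
\]
Next I would show this difference is non-decreasing in $n$: for $1 \le n \le T-1$,
\[
\big(g(n+1) - g(n)\big) - \big(g(n) - g(n-1)\big) = \big(f_1(n+1) - f_1(n)\big) + \big(f_2(T-n+1) - f_2(T-n)\big) \ge 0,
\]
since both bracketed terms are non-negative by monotonicity of $f_1$ and $f_2$. Thus $g$ has non-decreasing increments, i.e.\ it is discretely convex, so it lies on or below the chord joining $(0,g(0))$ and $(T,g(T))$: for every $n$, $g(n) \le \tfrac{T-n}{T}\,g(0) + \tfrac{n}{T}\,g(T) \le \max\{g(0), g(T)\} = \alg(T)$. (The chord bound follows because, $d(j) := g(j) - g(j-1)$ being non-decreasing, the average of $d(1),\ldots,d(n)$ is at most the average of $d(1),\ldots,d(T)$.) Taking $n = n_1$ yields $\algprime(T) \le \alg(T)$, which proves the claim; if $\algprime$ is randomized, average this inequality over its internal randomness.

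I do not anticipate a genuine obstacle here — the argument is a short discrete-convexity computation. The only points requiring mild care are the bookkeeping at the boundary indices $n = 0$ and $n = T$ in the difference computation, and the observation that this $k = 2$ step uses only monotonicity of the reward functions, not the diminishing-returns property. The substantive work instead lies in lifting this base case to arbitrary $k$: as indicated in the excerpt, Proposition~\ref{prop: single arm optimal policy} is then obtained by an exchange argument and induction on $k$, repeatedly invoking this claim to collapse the pulls of two arms onto whichever of them yields the larger cumulative reward.
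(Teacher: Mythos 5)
Your proof is correct, but it takes a genuinely different route from the paper's. The paper argues by induction on the number of time steps at which $\alg$ and $\algprime$ differ: writing the gap as $D(n)=\sum_{s=T-n+1}^{T}f_1(s)-\sum_{s=1}^{n}f_2(s)$, it shows $D(n)\ge 0$ for every $n$ by assuming the contrary at step $n+1$ and deriving, via monotonicity, that the algorithm playing arm $2$ throughout would strictly beat $\alg$, contradicting the definition of $j^*_T$; this contradiction is re-invoked at the base case and at each inductive step. You instead observe that the allocation objective $g(n)=\rew_1(n)+\rew_2(T-n)$ has non-decreasing increments (its second difference is the sum of two monotonicity gaps, $f_1(n+1)-f_1(n)$ and $f_2(T-n+1)-f_2(T-n)$), hence is discretely convex and lies below the chord, so it is maximized at $n=0$ or $n=T$; the definition of $j^*_T$ is invoked exactly once, to identify the larger endpoint with $\alg(T)$. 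Your version is shorter, dispenses with the contradiction scaffolding, and makes explicit that only monotonicity --- not the diminishing-returns property --- is needed here, a fact the paper's proof also relies on but does not highlight. It also suggests a one-shot proof of Proposition \ref{prop: single arm optimal policy}: $\sum_{i}\rew_i(n_i)$ is a separable discretely convex function on $\{(n_1,\ldots,n_k): \sum_i n_i=T\}$ and is therefore maximized at a vertex, which would bypass the paper's exchange-and-induction argument over $k$. The only bookkeeping to watch is the index range $1\le n\le T-1$ in the second-difference computation, which you already flag.
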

\begin{proof}
Recall that the cumulative reward of any algorithm for the \IMAB problem depends only on the number arm pulls of each arm and not the order in which the arms are pulled.
We prove the claim by induction on the number of time steps at which $\alg$ and $\algprime$ differ.
Let $n$ denote this quantity.
Assume, without loss of generality, that $j^*_T = 1$.

\textbf{Base case: $n=1$}\\
Then, we know that $\algprime$ pulls arm $1$ for $T-1$ time steps, and arm $2$ only one time.
Then,
\begin{align*}
    \alg(T) - \algprime(T) &= \sum_{n=1}^T f_1(n) - \sum_{n=1}^{T-1} f_1(n) - f_2(1)\\
    & = f_1(T) - f_2(1)
\end{align*}
Now suppose $f_1(T) - f_2(T) < 0$. 
This implies that 
\begin{equation}
    \label{eq: prop one at T less than 2 at 1}
    f_1(T) < f_2(1)
\end{equation}
Hence, we get

\begin{equation}
    f_1(r)  \leq f_1(T)  
     < f_2(1)  
     \leq f_2(s)  \  \ \text{ for all } \ \ r, s \in [T].
     \label{propOne:EqOne}
\end{equation}
Now, let $\alg_2$ denote another algorithm that plays arm $2$ for all time steps $t\in[T]$. 
From above, we can conclude that
\begin{align*}
    \alg(T) = \sum_{s=1}^T f_1(s)  < \sum_{s=1}^T f_2(s) = \alg_2(T)
\end{align*}
But this contradicts the definition of $j^*_T (= 1)$. 

\textbf{Inductive Hypothesis: } Let $\algprime$ be an algorithm that differs from $\alg$ at exactly $n$ time steps, such that $\algprime(T) \leq \alg(T)$.
Note that $\alg(T) = \sum_{s=1}^T f_1(s)$ and $\algprime(T) = \sum_{s=1}^{T-n} f_1(s) + \sum_{s=1}^n f_2(s).$
From the inductive hypothesis, we get
\begin{align}
\label{eq: prop inductive hypothesis A}
    \alg(T) - \algprime(T) &= \sum_{s=1}^T f_1(s) - \sum_{s=1}^{T-n} f_1(s) - \sum_{s=1}^n f_2(s)\nonumber\\
    & = \sum_{s=T-n+1}^T f_1(s) - \sum_{s=1}^n f_2(s)\nonumber\\
    & \geq 0 
\end{align}
The last inequality follows from the induction hypothesis.
Now, let $\algtwo$ be another algorithm that differs from $\alg$ at exactly $n+1$ time steps.
Then, we can conclude that $\algtwo$ plays arm $1$ for $T-(n+1)$ pulls and arm $2$ for $n+1$ pulls.
From this, we obtain
\begin{align*}
    \alg(T) - \algtwo(T) & = \sum_{s=1}^T f_1(s) - \sum_{s=1}^{T-(n+1)} f_1(s) - \sum_{s=1}^{n+1} f_2(s) \tag{From the definition of $\alg$ and $\algtwo$}\\
    & = \sum_{s=T-n}^T f_1(s) - \sum_{s=1}^{n+1} f_2(s) \tag{Re-arranging the terms above}\\
    & = \overbrace{\underbrace{\sum_{s=T-n+1}^T f_1(s) - \sum_{s=1}^n f_2(s)}_\text{B} + f_1(T-n) - f_2(n+1)}^\text{A}
\end{align*}
From the inductive hypothesis (Eq. \ref{eq: prop inductive hypothesis A}) we know that $B \geq 0$.
Suppose $A < 0$.
These two statements together imply that $f_1(T-n) - f_2(n+1) < 0$, or equivalently, $f_1(T-n) < f_2(n+1)$.
Hence, we obtain
\begin{equation}
    \underbrace{(T-(n+1)) f_1(T-n)}_\text{C} < \underbrace{(T-(n+1)) f_2(n+1)}_\text{D} \tag{Since $T \geq 0$ and $n < T$}
\end{equation}
Further, $A < 0$ implies that
\begin{align*}
    \sum_{s=T-n+1} f_1(s) + f_1(T-n) & < \sum_{s=1}^n f_2(s) + f_2(n+1)\\
    \implies \sum_{s=T-n+1} f_1(s) + (T-n)f_1(T-n) & < \sum_{s=1}^n f_2(s) + (T-n) f_2(n+1) \tag{Adding $C$ on LHS and $D$ on RHS}
\end{align*}
Finally, we obtain
\begin{align*}
    \alg(T) &=\sum_{s=1}^T f_1(s) \\
    & \overset{(i)}{\leq} (T-n)f_1(T-n) + \sum_{s=T-n+1}^{T} f_1(s)\\
    & < \sum_{s=1}^n f_2(s) + (T-n) f_2(n+1)\\
    & \overset{(ii)}{\leq} \sum_{s=1}^T f_2(s)\\
    & = \alg_2(T)
\end{align*}
Inequalities $(i)$ and $(ii)$ follow from the monotonicity of $f_1$ and $f_2$. 
Recall that $\alg_2$ is an algorithm that always pulls arm $2$.
Hence, we obtain $\alg(T) < \alg_2(T)$ which is a contradiction to the definition of $j^*_T (= 1)$.

Hence, we conclude that $A = \alg(T) - \algtwo(T) \geq 0$.
This concludes the proof of Claim \ref{claim: two arm optimal offline}.
\end{proof}

Using the above claim, we now prove the proposition.

\begin{proof}[Proof of Proposition \ref{prop: single arm optimal policy}]
Let $j^*_T \in \arg\max_{i\in [k]} \sum_{s=1}^T f_i(s)$.
Assume, without loss of generality, that $j^*_T = 1$.
Let $\alg$ be an algorithm that pulls arm $j^*_T = 1$ for all time steps $t\in [T]$ and let $\algprime$ be any other algorithm for this problem.
Then, to prove the statement of the proposition, it is sufficient to prove that $\algprime(T) \leq \alg(T)$.

Suppose this is not true, i.e., suppose there is an algorithm $\algprime$ such that $\algprime(T) > \alg(T)$.
For each arm $i\in[k]$, let $M_i$ denote the number of times $\algprime$ pulls arm $i$ in $T$ time steps.
Then, we have $\sum_{i\in[k]} \sum_{s=1}^{M_i} f_i(s) > \sum_{s=1}^T f_1(s)$.
Now, consider arms $k-1$ and $k$.
Let $T_k = M_{k-1}+M_k$.
Further, let $i^*_k \in \arg\max_{i \in \{k-1,k\}} \sum_{s=1}^{T_k} f_i(s)$.
Assume, without loss of generality, that $i^*_{k-1} = k-1$.
Then, from Claim \ref{claim: two arm optimal offline}, we get
\begin{equation}
    \label{eq: exchange argument 1}
    \sum_{s=1}^{M_{k-1}} f_{k-1}(s) + \sum_{s=1}^{M_{k}} f_{k}(s) \leq \sum_{s=1}^{T_k} f_{k-1}(s)
\end{equation}
Now, we can construct another algorithm $\algprime_k$ from $\algprime$ by replacing every pull of arm $k$ with a pull of arm $k-1$.
Note that every arm $i\in [k]\setminus\{k-1,k\}$ is pulled the same number of times by $\algprime$ and $\algprime_k$.
And, $\algprime_k$ pulls arm $k-1$ for $M_{k-1} + M_k$ pulls and does not pull arm $k$ at all.
From Eq. \ref{eq: exchange argument 1}, we obtain $\algprime(T) \leq \algprime_k(T)$.

We can repeatedly apply the same process as described in the previous paragraph: 
In particular, for any $\ell \in \{1,2,\ldots,k\}$, we can define an algorithm $\algprime_\ell$ that pulls arms $1$ to $\ell-2$ for the same number of rounds as $\alg$, pulls arm $\ell - 1$ for $\sum_{s=\ell-1}^k M_s$ number of time steps, and does not pull arms $\ell$ to $k$.
Repeating this argument, which hinges on Claim \ref{claim: two arm optimal offline}, we can construct a sequence of algorithms $\algprime_{k-1}$, $\algprime_{k-2},\ldots,\algprime_3$ such that
\begin{equation}
    \label{eq: exchange argumetn algo sequence}
    \alg(T) < \algprime(T) \leq \algprime_k \leq \algprime_{k-1} \leq \ldots \leq \algprime_3.
\end{equation}

Note that $\algprime_3$ plays arm $1$ for $M_1$ pulls and arm $2$ for $T-M_1$ pulls.
Now, we need to apply the above argument one final time by considering arms $1$ and $2$ and $T_2 = T$.
Now, $i^*_2 \in \arg\max_{i\in\{1,2\}} \sum_{s=1}^{T} f_i(s)$.
We now have two cases: 1) the set $\arg\max_{i\in\{1,2\}} \sum_{s=1}^{T} f_i(s)$ contains a single arm, in which case it has to be arm $1$ (from the definition of $J^*_T$), or 2) the set contains both arms $1$ and $2$.

Consider case $1$.
As before, replacing all pulls of arm $2$ by arm $1$, we obtain
$$\alg(T) < \algprime(T) \leq \algprime_k(T) \leq \ldots \leq \algprime_3(T) \leq \alg(T),$$
which contradicts our assumption that there is an algorithm $\algprime$ such that $\algprime(T) > \alg(T)$.

Next, consider case $2$.
If $i^*_2 = 1$, then the argument in case $1$ provides the same contradiction.
Suppose $i^*_2 = 2$.
Then, as before, replacing all pulls of arm $1$ by arm $2$, we obtain
$$\alg(T) < \algprime(T) \leq \algprime_k(T) \leq \ldots \leq \algprime_3(T) \leq \algprime_2(T),$$
where $\algprime_2 = \alg_2$ is an algorithm that pulls arm $2$ for all time steps $T$.
This contradicts our definition of arm $j^*_T$ which in turn contradicts the assumption that there is an algorithm $\algprime$ such that $\algprime(T) > \alg(T)$.

This concludes the proof of the proposition.
\end{proof}

\section{Missing Proofs from Section \ref{sec: lower bound}}
\label{app: lower bound}
In this section, we prove Theorem \ref{thm: regret lower bound} in Section \ref{app subsec: lower bound} and Theorem \ref{thm: round robin performance} in Section \ref{app subsec: round robin}.
We denote the area of a figure with $n$ endpoints $\{A_1, \ldots, A_{n-1}, A_n\}$ as $\Ar(A_1 \ldots A_{n} A_1)$. 
Given a line segment $AB$, without loss of generality we use $AB$ to denote the length of the line segment $AB$, unless specified otherwise.
We use $\angle AEB$ to denote the angle between the edges $AE$ and $EB$ in a triangle $AEB$.

\subsection{Proof of Theorem \ref{thm: regret lower bound}: Lower Bound}
\label{app subsec: lower bound}

\RegretLowerBound*
\begin{proof}
Let $T$ be the time horizon. 
Fix a \vis{deterministic} algorithm $\alg$.
\vis{Note that we prove the lower bound for any deterministic algorithm.
Since a randomized algorithm can be thought of as a distribution over the set of all deterministic algorithms, our lower bound also extends to the randomized setting in expectation.}
Our task is to construct a problem instance on which $\alg$ suffers $\Omega(T)$ regret. 
Let $N = \lceil T/k \rceil$. 
Consider the set of $k$ instances $\{\mathcal{I}_1,\ldots,\mathcal{I}_k\}$ such that, for instance $\mathcal{I}_m$, for all arms $i \neq m$:
\begin{align*}
    f_i(n) = \begin{cases}
    \vspace{2mm}\dfrac{n}{T} & \text{If} ~~n \leq N\\ 
    \dfrac{1}{k} & \text{If} ~~n > N \\ 
    \end{cases}
\end{align*}
and for arm $m$,
\begin{align*}
    f_m(n) = \begin{cases}
    \vspace{2mm}\dfrac{n}{T} & \text{If} ~~n \leq T\\ 
    1 & \text{If} ~~n > T \\ 
    \end{cases}
\end{align*}

We note here that the instances above are slightly different from those in the main body ($f_i(n)$ is $n/T$ instead of $n/kN$). 
We emphasize that the guarantee in the main body still holds. 
However, the slight change in the instance definition here gives us a tighter guarantee.
Observe that after $T$ time steps have elapsed, there exists an arm $\ell\in[k]$ such that $N_\ell \leq T/k.$ 
We then show that $\alg$ suffers $\Omega(T)$ regret on instance $\mathcal{I}_\ell$.
Without loss of generality, let arm $k$ be such an arm, i.e., arm $k$ is such that $N_k < N.$
Now, we consider instance $\mathcal{I}_k$.
Observe that the maximum reward that can be obtained from each of the $T$ pulls of $\alg$ is not more than $\frac{1}{k}$. 
This is because, any arm $i\neq k$ cannot give instantaneous reward $> 1/k$ and arm $k$ is not pulled more than $N$ times. 
Then we get,
\begin{align}
    \alg(T) & \leq T\cdot \frac{1}{k} \nonumber\\
    & = T/k \label{eq: lower bound alg}
\end{align}
Further, from Proposition \ref{prop: single arm optimal policy}, we know that the optimal policy for horizon $T$, that knows the reward functions beforehand would always pull arm $k$ and obtain reward
\begin{align}
    \opt(T) & = \sum_{n=1}^{T} \frac{n}{T} \nonumber\\
    & = \frac{T+1}{2} \label{eq: lower bound opt}
\end{align}
From this we obtain,
\begin{align*}
    \opt(T) - \alg(T) &\geq \frac{T+1}{2} - \frac{T}{k} \tag{From Equations \eqref{eq: lower bound alg} and \eqref{eq: lower bound opt}}\\
    & = \Big(\frac{1}{2} - \frac{1}{k}\Big) T + \frac{1}{2}\\
    & \geq \frac{k-2}{2k}\cdot T\\
    & \geq \frac{T}{6} \tag{For $k > 2$}
\end{align*}
This proves the first part of our theorem. Next, we have
\begin{align*}
    \dfrac{\opt(T)}{\alg(T)} & \geq \dfrac{k}{2}\cdot\dfrac{T+1}{T}\\
    & \geq \dfrac{k}{2}
\end{align*}
This concludes our proof.
\end{proof}

\begin{figure}[t]
\centering
\begin{minipage}{.5\textwidth}
  \centering
  \includegraphics[width=\linewidth]{Figures/LowerBoundEx.pdf}
  \captionof{figure}{Lower Bound Instance}
  \label{fig: lower bound ex}
\end{minipage}%
\begin{minipage}{.5\textwidth}
  \centering
  \includegraphics[width=\linewidth]{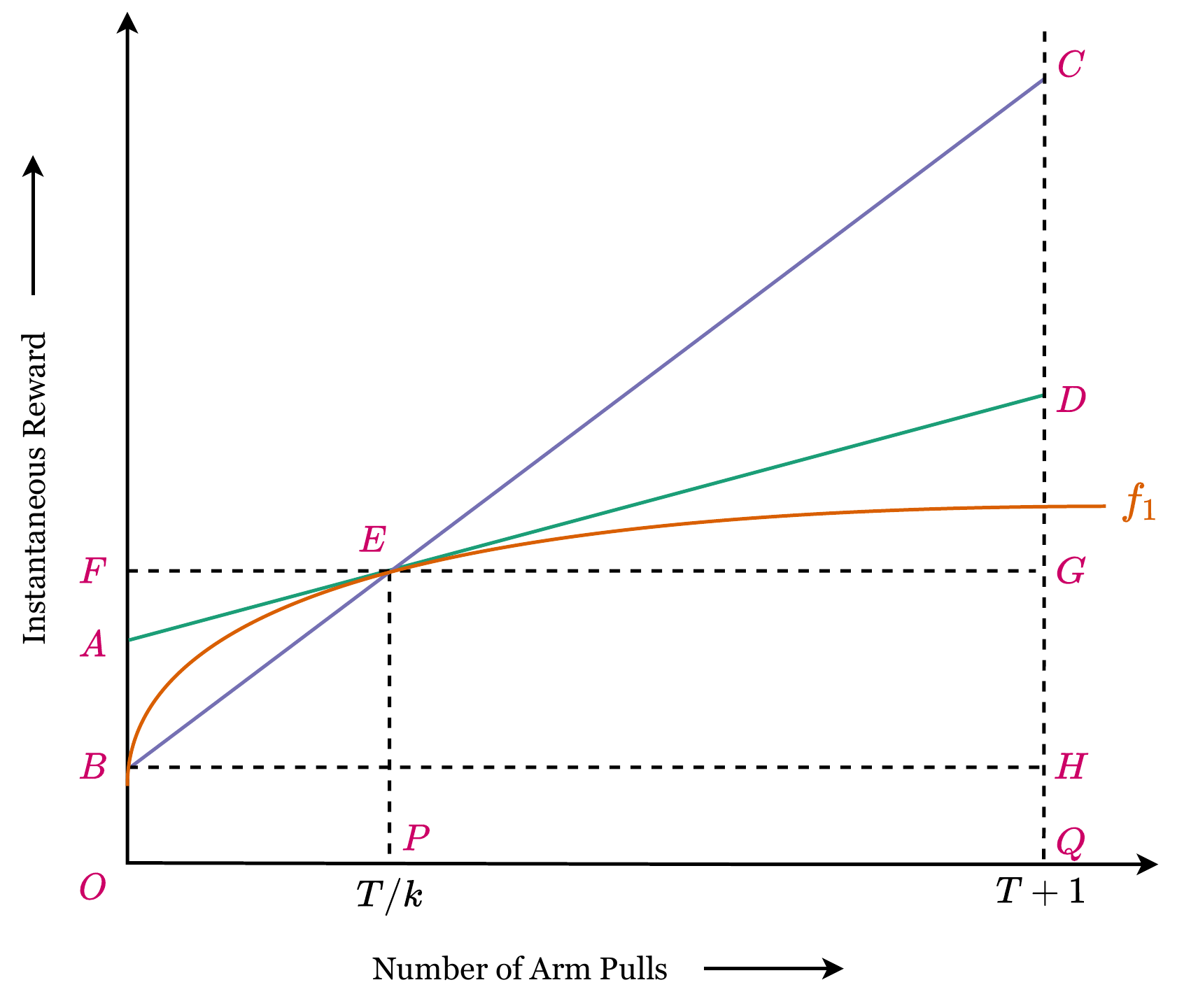}
  \captionof{figure}{Reward Function of Arm $1$ ($f_1$)}
  \label{fig: round robin}
\end{minipage}
\end{figure}

\subsection{Proof of Theorem \ref{thm: round robin performance}: Round Robin ($\rr$) Analysis}
\label{app subsec: round robin}
\RoundRobinAnalysis*
\begin{proof}
We analyse the competitive ratio of $\rr$ for $T\geq 2k$.
First, we show that $\compRatio_{\rr}(T)\geq \frac{k^2}{2}$. 
We show this by describing an instance $I$ of the \IMAB problem for which $\frac{\opt(I,T)}{\rr(I,T)} \geq \frac{k^2}{2}$. 
The instance $\instance$ is given as follows:
\begin{align*}
    f_1(n) = \begin{cases}
    \vspace{2mm}\dfrac{n}{T} & \text{If} ~~n \leq T\\ 
    1 & \text{If} ~~n > T \\ 
    \end{cases}
\end{align*}
and for arms $2,\ldots,k$,
$    f_i(n) =     0  ~~~\text{for} ~~n \geq 1$.
It is easy to see that $\opt(I,T)$ pulls arm $1$ for all $T$ time steps and obtains reward $$\opt(I,T) = \sum_{t=1}^T \frac{n}{T} = \frac{T+1}{2}.$$
Let $\rr(I,T)$ denote the total reward of the round robin algorithm on instance $I$ in $T$ rounds. Then, $$\rr(I,T) = \sum_{t=1}^{T/k} \frac{n}{T}= \frac{T+k}{2k^2}.$$
Then, we get
\begin{align*}
    \frac{\opt(I,T)}{\rr(I,T)} & \geq \frac{T+1}{T+k}\cdot k^2
  \geq \frac{T}{2T} \cdot k^2 \tag{For $T \geq k$}\\
    & = \frac{k^2}{2}
\end{align*}

Next, we show that $\compRatio_{\rr}(T)\leq  8k^2$. Consider an arbitrary time horizon $T$. For simplicity, we assume that $T$ is a multiple of $k$. From Proposition \ref{prop: single arm optimal policy} we know that the optimal policy consists of pulling a single arm for all time steps from $1$ to $T$. Without loss of generality assume that the $\instance$ is such that the optimal policy always pulls arm $1$. Let $\opt(I,T)$ denote the cumulative reward from $T$ pulls of arm $1$. 

In Figure \ref{fig: round robin}, \vis{the curve labeled $f_1$ denotes the reward function of arm 1.}
We know that $\rr$ will pull arm $1$ exactly $T/k$ times in $T$ time steps. Then,
\begin{align}
    \rr(I,T) & = \sum_{i=1}^{k} \sum_{n=1}^{T/k} f_i(n) \nonumber\\
    & \geq \sum_{n=1}^{T/k} f_1(n) \tag{Since the rewards are non-negative}\nonumber\\
    & \geq \Ar(OBEPO) \nonumber\\
    & = \frac{1}{2}\cdot \frac{T}{k} \cdot (f_1(T/k) - f_1(0)) + f_1(0)\cdot \frac{T}{k} \nonumber\\
    & = \Big(\frac{T}{2k}\Big){(f_1(0) + f_1(T/k))} \label{eq: alg lower bound}
\end{align}
where, $\Ar(OBEPO)$ is the area of the polygon $OBEP$ (area under the line segment $BE$), $f_1(0)$ corresponds to \vis{the y-coordinate of point} $B$ and $f_1(T/k)$ to \vis{the y-coordinate of point} $E$ in Figure \ref{fig: round robin}. 
Since there is no ambiguity, we henceforth use $f$ to denote $f_1$. 
Next, we will upper bound the value of $\opt(I,T)$. 
Since arm $1$ is the optimal arm at time $T$, from Figure \ref{fig: round robin}, we can conclude that $$\opt(I,T) \leq \Ar(OADQO)\, ,$$ 
where $\Ar(OADQO)$ is the area of the polygon $OADQ$.
To bound $\opt(I,T)$, we will use the following claim.
\begin{claim}
\label{claim: similar triangle claim}
In Figure \ref{fig: round robin}, $\Ar(OADQO) \leq \Ar(OBCQO)$.
\end{claim}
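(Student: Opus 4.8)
The claim is a fact of planar geometry about the two polygons drawn in Figure~\ref{fig: round robin}, so I would prove it by comparing the two areas directly. The plan is to observe that $OADQ$ and $OBCQ$ share a common sub-region, so that the inequality reduces to comparing the two ``overhang'' pieces by which each polygon exceeds the common part; then to show these two overhangs are \emph{similar} right triangles; and finally to use the concavity (decreasing marginal returns) of $f_1$ to show that the overhang belonging to $OBCQ$ is the larger of the two.

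For the similarity step, the source of the common angle is a collinearity: the segment $BC$ lies on the line through $B$ and $E$ extended out to abscissa $T$, and the overhang of $OADQ$ is cut off by this same line along the horizontal level $f_1(T)$. Concretely, if $F$ denotes the point where line $BE$ (extended) crosses the horizontal segment $AD$, then $OADQ$ exceeds the common region by the right triangle $\triangle BAF$ while $OBCQ$ exceeds it by the right triangle $\triangle FDC$; since the hypotenuses $BF$ and $FC$ lie on one and the same line, $\triangle BAF\sim\triangle FDC$. Thus the whole claim collapses to showing that the linear scale factor of $\triangle BAF$ relative to $\triangle FDC$ is at most $1$ --- equivalently, that $F$ lies in the left half $[0,T/2]$ of the horizon.

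The concavity of $f_1$ enters precisely here: because the marginal increments of $f_1$ are non-increasing, the secant slope of $f_1$ over $[0,T/k]$ (which is the slope of $BC$) is at least its secant slope over $[0,T]$, so the line $BC$ extended to $x=T$ sits weakly above the point $D=(T,f_1(T))$ on the curve; quantitatively this is what pins the abscissa of $F$ down to the required range. The same concavity (together with monotonicity) is what legitimizes the earlier inequality $\opt(I,T)\le\Ar(OADQO)$, so the two uses are of a piece.

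The main obstacle I anticipate is the bookkeeping around the constant: a naive use of concavity only locates $F$ somewhere in $[0,T]$, whereas we need $F\in[0,T/2]$, so the argument must squeeze out the extra factor, using that the rewards are bounded in $[0,1]$ and that $T\ge 2k$ (the regime in which the round-robin analysis is carried out), and it must control the sum-versus-integral discretization so that it does not erode this slack. Verifying the clean geometric picture --- that $F$ genuinely lies between $B$ and $C$, and that the two polygons really do coincide off the two named triangles --- is routine but should be checked carefully.
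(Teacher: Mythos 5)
Your overall strategy---write both polygons as a common region plus an ``overhang'' triangle apiece, observe that the two overhangs are similar because their hypotenuses lie on a single line, and finish by comparing the scale factors---is exactly the skeleton of the paper's proof. The problem is your identification of the polygon $OADQ$. You take $AD$ to be the horizontal segment at height $f_1(T)$, so that $OADQ$ is the rectangle of area $T\cdot f_1(T)$ and the two overhangs meet at the point you call $F$, where the line $BC$ crosses that horizontal level. With that choice the claim is simply false: take $f_1(n)=n/T$, which is bounded in $[0,1]$, increasing, has (weakly) decreasing marginal returns, and is admissible for every $T\ge 2k$. Then the secant slope of $f_1$ over $[0,T/k]$ equals its secant slope over $[0,T]$, so your $F$ lands at abscissa $x_F=T$, the right overhang $\triangle FDC$ degenerates to a point, and $\Ar(OADQO)=T\cdot f_1(T)$ is about twice $\Ar(OBCQO)\approx \tfrac{1}{2}\,T f_1(T)$. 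Hence the ``extra factor'' you flag as bookkeeping cannot be squeezed out of boundedness or $T\ge 2k$: your reduction genuinely requires $x_F\le T/2$, while concavity only ever yields $x_F\le T$, with equality exactly in the linear case.

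The paper's construction avoids this by taking $AD$ to be a line through the point $E=(T/k,\,f_1(T/k))$ itself that lies above the curve on all of $[0,T+1]$ (a support/tangent line at $E$, which exists by the decreasing-marginal-returns property and still dominates $\rew_1(T)$, so the step $\opt(I,T)\le \Ar(OADQO)$ survives). Then the two lines $AD$ and $BC$ intersect at $E$, whose abscissa is $T/k\le T/2$ \emph{by construction}; the left overhang $\triangle ABE$ and the right overhang $\triangle DCE$, now cut off by the parallel vertical sides $AB$ and $DC$, are similar with horizontal extents $T/k$ and $T-T/k$, and the comparison $\Ar(ABEA)\le\Ar(DCED)$ is immediate. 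In short, the crossing point of the two lines must be pinned at $T/k$ by the choice of the upper bounding line $AD$, not hunted for after the fact.
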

\begin{proof}
To prove the claim, it is sufficient to prove that $\Ar(ABEA) \leq \Ar(DCED)$, where $\Ar(ABEA)$ and $\Ar(DCED)$ denote the area of triangles $ABE$ and $DCE$ respectively. 
Note that $\angle AEB = \angle DEC$ as they are vertically opposite.
Similarly, $\angle BAE = \angle CDE$ as they are alternate interior angles.
This implies that triangles $ABE$ and $DCE$ are similar. 
Further, $\angle BEF = \angle CEG$ as they are vertically opposite, and $\angle BFE = \angle CGE$ as they are both $90\degree$.
This implies triangles $BEF$ and $CEG$ are similar, and hence,
$$\frac{FE}{GE} = \frac{BE}{CE}.$$

Note that $FE = \frac{T}{k} \leq T - \frac{T}{k} = GE$, which implies that 
\begin{equation}
    \label{eq: BE less than EC}
    BE \leq CE
\end{equation}

We know that triangles $ABE$ and $DCE$ are similar. This implies that $$\frac{AB}{DC} = \frac{BE}{CE} = \frac{AE}{DE}.$$
From Equation \eqref{eq: BE less than EC}, we get $AB \leq DC$, and $AE \leq DE$.
Since all three sides of triangle $ABE$ are at most the corresponding sides of triangle $DCE$, we get that $\Ar(ABEA) \leq \Ar(DCED)$.
\end{proof}
The equation of the line passing through points $B$ and $C$ is given by:
\begin{equation*}
    \label{eq: line equation}
    y = \frac{k(f(T/k) - f(0))}{T} x + f(0) 
\end{equation*}
We upper bound $\opt(I,T)$ by providing an upper bound on $\Ar(OBCQO)$.
We remark here that the x-coordinate of $OBCQO$ is actually $T+1$ and not $T$ (see Appendix \ref{app: approximation of sum by integral}).
We do not show both $T$ and $T+1$ in the figure to avoid confusion.
\begin{align}
    \opt(I,T) &\leq \Ar(OBCQO) \tag{From Claim \ref{claim: similar triangle claim}}\nonumber\\
    & = \frac{1}{2}\cdot(T+1)\cdot \text{length}(CH) + f(0)(T+1) \nonumber\\
    & = \frac{k}{2T}\cdot(T+1)^2\cdot [f(T/k) - f(0)] + f(0)(T+1) \nonumber\\
    & \leq \frac{k(T+1)^2}{2T}f(T/k) \label{eq: opt upper bound}
\end{align}
The last inequality holds since $k\geq 2 \implies (T+1) - \frac{k(T+1)^2}{2T} \leq 0$. From Equations \ref{eq: alg lower bound} and \ref{eq: opt upper bound}, we get
\begin{align*}
    \frac{\opt(I,T)}{\rr(I,T)} & \leq \frac{k(T+1)^2 f(T/k)}{2T} \cdot \frac{2k}{T(f(1) + f(T/k))} \\
    & = \frac{k^2(T+1)^2}{T^2} \cdot \frac{f(T/k)}{f(1) + f(T/k)} \\
    & \leq \frac{k^2(T+1)^2}{T^2}\\
    & \leq \frac{4T^2 \cdot k^2}{T^2} \tag{Since $T+1 \leq 2T$}\\
    & \leq 8k^2  \label{eq: CR RR UB}
\end{align*}
Since the above bound holds for an arbitrary instance $I$, $\compRatio_{\rr}(T) \leq 8k^2$.
\end{proof}

\section{Proof of Theorem \ref{theorem: k competitive algo proof}: Competitive Analysis of Algorithm \ref{algo: horizon unaware improving bandits}}
\label{app: missing proofs optimal algo}
Throughout without loss of generality, we assume $N_1 \geq N_2 \geq \ldots \geq N_k$, where $N_i$ denotes $N_i(T+1)$ for $i\in [k]$.
Also,  we refer to area under a curve with $n$ endpoints $\{A_1, \ldots, A_{n-1}, A_n\}$ as $\Ar(A_1 \ldots A_{n} A_1)$. 
Given a line segment $AB$, without loss of generality we use $AB$ to denote the length of the line segment $AB$, unless specified otherwise.
We use $\angle AEB$ to denote the angle between the edges $AE$ and $EB$ in a triangle $AEB$.

Theorem \ref{theorem: k competitive algo proof} was stated in Section \ref{subsec: algo and guarantee}, and we restate it here for completeness.
\OptAlgoCompetitiveRatio*
We begin by giving the proofs of Lemma \ref{lemms: first arm to cross N pulls}, Lemma \ref{lem: rew N over rew T for arm i} and its Corollary \ref{corr: opt N by opt T for arm i}, and Lemma \ref{lemma: rew N by opt T for arm i} respectively. Then we complete the proof of the theorem using them.
\begin{figure}[ht!]
\centering
    \begin{minipage}{0.5\textwidth}
    \centering
    \includegraphics[scale=0.62]{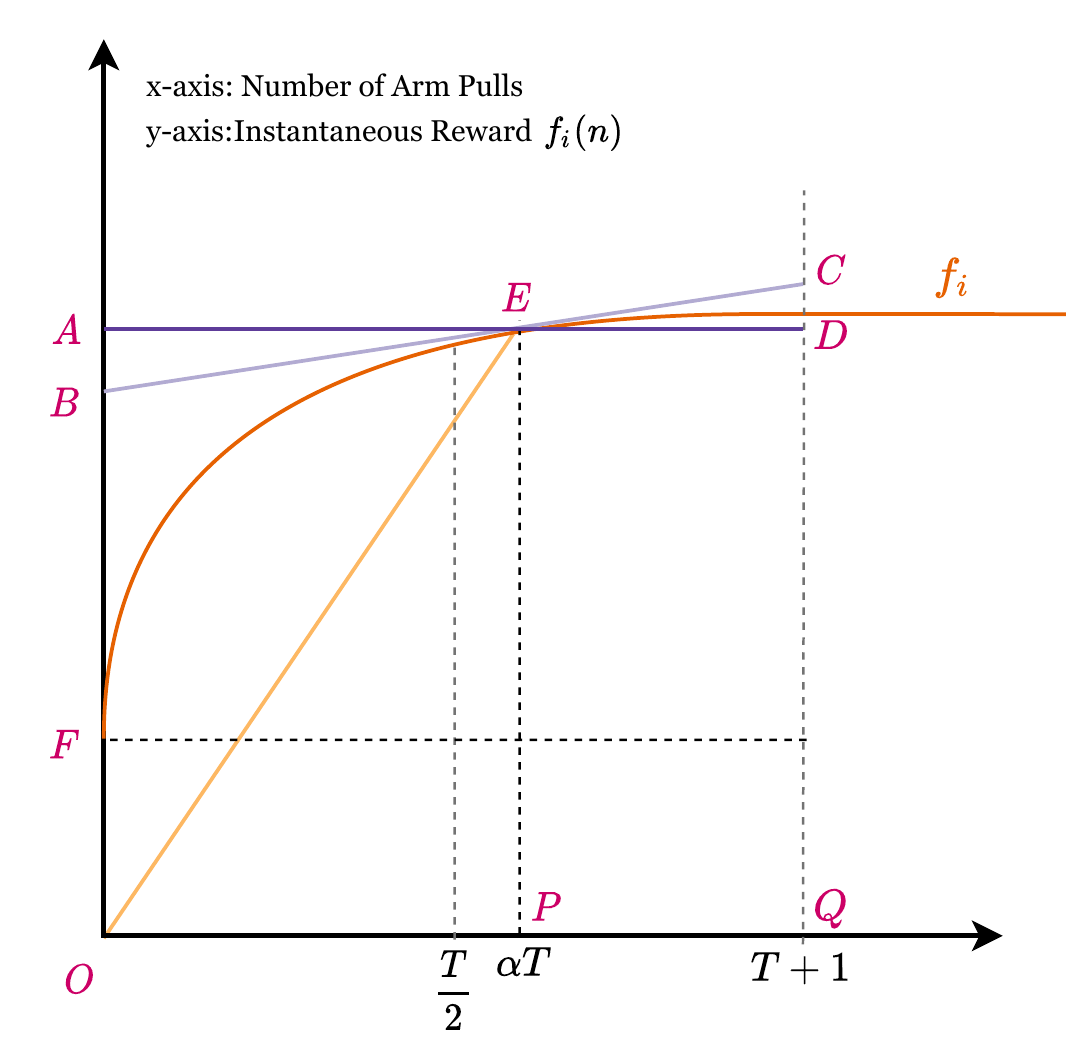}
    \captionof{figure}{$\alpha \geq 1/2$}
    \label{fig: Lemma Case A app}
    \end{minipage}%
    \begin{minipage}{0.5\textwidth}
    \centering
    \includegraphics[scale=0.53]{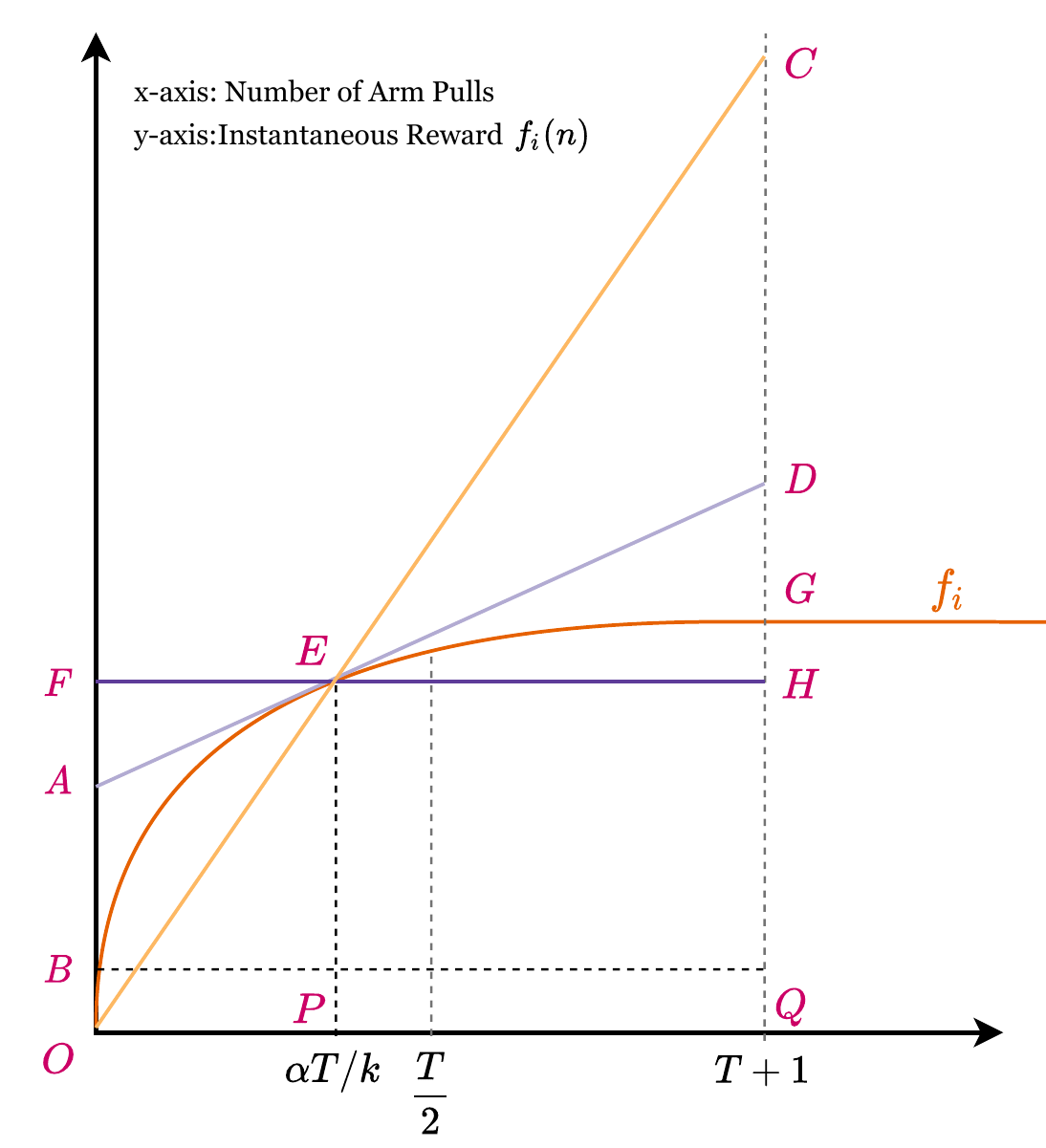}
    \captionof{figure}{$0 < \alpha \leq k/2$}
    \label{fig: Lemma Case B app}
    \end{minipage}
\end{figure}

\FirstArmToCrossNPulls*
\begin{proof}
Let arm $i$ be pulled $N$ times so far and let $t$ be the time step at which it is pulled for the $N+1$-th time.
Then, we know that
\begin{equation}
\label{eq: opt estimate of first arm to cross N}
    p_i(t) \geq p_j(t) \text{, for all arms $j \neq i$}
\end{equation}
By assumption, arm $i$ is the first arm to be pulled $N+1$ times. This implies that, at $t$, no arm has been pulled more than $N$ times. Hence, $p_i(t) = \rew_i(N)$.
This, along with Equation \ref{eq: opt estimate of first arm to cross N}, gives us
\begin{equation}
\label{eq: opt estimate compare}
    \rew_i(N) \geq p_j(t) \  \text{for all arms $j \neq i$.}
\end{equation}
Now, suppose for contradiction that $\rew_i(N) < \opt(I,N)$, and let $j^*$ denote the optimal arm for $N$ time steps, i.e., $\rew_{j^*}(N) = \opt(I,N)$ (the existence of such an arm $j^*$ follows from Proposition \ref{prop: single arm optimal policy}). 
Then, 
\begin{align*}
    \opt(I,N) & > \rew_i(N) \tag{From assumption}\\
    & \geq p_{j^*}(t) \tag{From Equation \ref{eq: opt estimate compare}}\\
    & \geq \rew_{j^*}(N) \tag{From the definition of $p_{j^*}(t)$}
\end{align*}
The last inequality in the above equation follows from the definition of $p_{j^*}(t)$ and the decreasing marginal returns property of the reward functions.
This implies that $\rew_{j^*}(N) < \opt(I,N)$, which is a contradiction.
Hence, $\rew_i(N) \geq \opt(I,N)$. Also, by definition of $\opt(I,N)$, we have $\rew_i(N) \leq \opt(I,N)$.
Hence, we get $\rew_i(N) = \opt(I,N)$. 
\end{proof}

\RewNOverRewTForArmi*
\begin{proof}

We first prove part (a). Fix an arm $i\in[k]$ with reward function $f_i(\cdot)$ (see Figure \ref{fig: Lemma Case A app}). 
Then,
\begin{align}
    \rew_i(\alpha T) & \geq \Ar(OFEPO)  \nonumber\\
    & \geq \Ar(OEPO)  \nonumber\\
    & = \dfrac{\alpha T}{2}f_i(\alpha T) \label{eq app: alpha greater than half, reward alpha T lower bound}
\end{align}
In the above equation, $\Ar(OFEPO)$ denotes the area under $f_i$ from $0$ to $\alpha T$, and $\Ar(OEPO)$ denotes the area under the triangle defined $OEP$. 
We now use the following claim, to complete the proof of part (a).
\begin{restatable}{claim}{RewTUpperBoundAlphaGreaterThanHalf}
\label{claim: alpha greater than half, reward in T upper bound}
For any arm $i\in [k]$ and $\alpha \geq 1/2$, $\rew_i(T) \leq \dfrac{5T}{4}f_i(\alpha T)$.
\end{restatable}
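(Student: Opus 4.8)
The plan is to bound $\rew_i(T)$ from above by the area under a two-piece piecewise-linear envelope of $f_i$ that is expressed using only the value $f_i(\alpha T)$, and then to absorb the resulting expression into the constant $\tfrac54$ via a one-variable quadratic inequality. Throughout I identify $\rew_i(T)$ with the area under $f_i$ on $[0,T]$; this is legitimate by Lemma~\ref{prop: approximation of sum by int} together with Appendix~\ref{app: approximation of sum by integral}, and the sub-unit slack it introduces does not affect the statement. I first dispose of the case $\alpha\ge 1$: there $\alpha T\ge T\ge n$ for every $n\in[T]$, so monotonicity of $f_i$ gives $f_i(n)\le f_i(\alpha T)$ and hence $\rew_i(T)\le T f_i(\alpha T)\le \tfrac{5T}{4}f_i(\alpha T)$. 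So from now on assume $\alpha\in[\tfrac12,1]$.

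Next, split $[0,T]$ at $\alpha T$ (see Figure~\ref{fig: Lemma Case A app}, with $O$ the origin and $E=(\alpha T,f_i(\alpha T))$). On $[0,\alpha T]$, monotonicity gives $f_i(t)\le f_i(\alpha T)$, so the area of that piece is at most the rectangle $[0,\alpha T]\times[0,f_i(\alpha T)]$, of area $\alpha T f_i(\alpha T)$. The crucial step is the piece on $[\alpha T,T]$: since $f_i$ has decreasing marginal returns (hence is concave) and $f_i(0)\ge 0$, for every $t\ge\alpha T$ the point $(t,f_i(t))$ lies on or below the line through $O$ and $E$, i.e.\ $f_i(t)\le \frac{f_i(\alpha T)}{\alpha T}\,t$. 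Indeed, concavity puts $f_i(t)$ below the extension of the secant through $(0,f_i(0))$ and $E$, namely $f_i(t)\le f_i(0)+\frac{f_i(\alpha T)-f_i(0)}{\alpha T}\,t$, and since $t>\alpha T$ and $f_i(0)\ge 0$ the right-hand side is at most $\frac{f_i(\alpha T)}{\alpha T}\,t$. Integrating this line over $[\alpha T,T]$ bounds the second piece by the trapezoid of area $\frac{1-\alpha^2}{2\alpha}\,T f_i(\alpha T)$.

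Adding the two pieces gives
\[
\rew_i(T)\ \le\ \Big(\alpha+\frac{1-\alpha^2}{2\alpha}\Big)T f_i(\alpha T)\ =\ \frac{1+\alpha^2}{2\alpha}\,T f_i(\alpha T),
\]
so it remains only to verify $\frac{1+\alpha^2}{2\alpha}\le\frac54$ for $\alpha\in[\tfrac12,1]$. This is equivalent to $2\alpha^2-5\alpha+2\le 0$, i.e.\ $(2\alpha-1)(\alpha-2)\le 0$, which holds for all $\alpha\in[\tfrac12,2]\supseteq[\tfrac12,1]$; the bound is tight at $\alpha=\tfrac12$. I expect the only delicate point to be the concavity step: one must see that past $\alpha T$ the graph of $f_i$ lies below the secant \emph{through the origin} (so that the bound involves $f_i(\alpha T)$ alone, with no slope term), and that a positive intercept $f_i(0)>0$ only strengthens the inequality. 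Everything else is elementary area arithmetic and a quadratic inequality, with the discrete-to-continuous passage handled routinely via Lemma~\ref{prop: approximation of sum by int}.
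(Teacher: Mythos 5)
Your proof is essentially correct but takes a genuinely different route from the paper's. The paper bounds $\rew_i(T)$ by the area under a single supporting line through $E=(\alpha T, f_i(\alpha T))$ over the whole domain, and then shows via a similar-triangles exchange (triangles $ABE$ and $DCE$ in Figure \ref{fig: Lemma Case A app}) that this area is at most the rectangle of height $f_i(\alpha T)$, i.e.\ $(T+1)f_i(\alpha T)$, finishing with $T+1\le \tfrac{5T}{4}$ for $T\ge 4$. You instead split the domain at $\alpha T$ and use two different envelopes --- the horizontal line on $[0,\alpha T]$ (monotonicity) and the origin--$E$ secant on $[\alpha T,T]$ (concavity plus $f_i(0)\ge 0$) --- reducing everything to the scalar inequality $\frac{1+\alpha^2}{2\alpha}\le\frac54$ on $[\tfrac12,1]$. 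Both concavity facts you invoke (the graph lies below the extended secant outside the interval, and a nonnegative intercept only lowers the origin-secant) are correct, and your version dispenses with the figure entirely; what the paper's version buys in exchange is a coefficient $\frac{T+1}{T}$ that sits well below $\frac54$, leaving room to absorb boundary terms.

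That last point is the one caveat you should not wave away. Your coefficient equals $\frac54$ \emph{exactly} at $\alpha=\tfrac12$, so there is zero slack left to absorb the discrete-to-continuous correction: Lemma \ref{prop: approximation of sum by int} only gives $\rew_i(T)\le\int_0^{T+1}f_i$, and running your two-piece bound on the discrete sum at $\alpha=\tfrac12$ yields $\bigl(\tfrac{5T}{4}+\tfrac12\bigr)f_i(\alpha T)$, which overshoots the claimed bound by an additive $\tfrac12 f_i(\alpha T)$. This is a boundary bookkeeping issue rather than a conceptual one (the two envelope pieces cannot both be tight for a concave function, and the downstream constant $\tfrac15$ in Lemma \ref{lem: rew N over rew T for arm i}(a) tolerates the extra $O(1)$ term), but as written the sentence ``the sub-unit slack does not affect the statement'' is false at the endpoint $\alpha=\tfrac12$ --- precisely the value at which the claim is applied in Case 1 of Theorem \ref{theorem: k competitive algo proof}. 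Either carry the surplus $\frac{1-\alpha}{2\alpha}f_i(\alpha T)$ explicitly and re-derive the constant, or tighten one of the two pieces, to make the argument airtight.
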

First, we complete the proof of part (a) using Claim \ref{claim: alpha greater than half, reward in T upper bound}, and then give the proof of the claim. From Claim \ref{claim: alpha greater than half, reward in T upper bound} and Equation \ref{eq app: alpha greater than half, reward alpha T lower bound}, we obtain
\begin{align*}
    \dfrac{\rew_i(\alpha T)}{\rew_i(T)} &\geq \dfrac{\dfrac{\alpha T}{2}f_i(\alpha T)}{\dfrac{5T}{4}f_i(\alpha T)}
     = \dfrac{2\alpha}{5} 
 \geq \dfrac{1}{5} \tag{Since $\alpha \geq 1/2$}
\end{align*}
\begin{proof}[Proof of Claim \ref{claim: alpha greater than half, reward in T upper bound}]
Note that, $\rew_i(T) \leq  Ar(OBCQO)$, where $\Ar(OBCQO)$ is the area under line segment $BC$. 
Further, $\Ar(OADQO) = f_i(\alpha T)(T+1)$, where $\Ar(OADQO)$ is the area under the line segment $AD$. 
Further, note that $f_i(\alpha T)(T+1) \leq \frac{5T}{4}f_i(\alpha T)$ for $T \geq 4$.
Hence, to prove the claim, it is sufficient to prove that $Ar(OBCQO) \leq \Ar(OADQO)$.
To show this, we first argue that triangles $ABE$ and $DCE$ are similar. 
Note that,
\begin{align*}
    \angle AEB &= \angle DEC \tag{Vertically Opposite Angles}\\
    \angle ABE &= \angle DCE \tag{Alternate Interior Angles}\\
\end{align*}
This implies triangle $ABE$ is similar to triangle $DCE$. 
Since $1/2 \leq \alpha \leq 1$, we get
\begin{equation}
\label{eq app: length of AE and ED}
    AE = \alpha (T+1) \geq (1-\alpha) (T+1) = DE
\end{equation}
Since triangle $ABE$ is similar to triangle $DCE$, we have that
$$\dfrac{AE}{DE} = \dfrac{BE}{CE} = \dfrac{AB}{DC}.$$
Along with Equation \ref{eq app: length of AE and ED}, this implies that $BE \geq CE \text{ and } AB \geq DC.$
Hence, $\Ar(ABEA) \geq \Ar(DCED)$, where $\Ar(ABEA)$ and $\Ar(DCED)$ are the areas of the triangles $ABE$ and $DCE$ repsectively.
Therefore, $\Ar(OBCQO) \leq \Ar(OADQO)$.
\end{proof}

Next, we prove part (b) of the lemma. Fix an arm $i \in [k]$ with reward function $f_i(\cdot)$ (see Figure \ref{fig: Lemma Case B app}). Let $\slopeOC$ denote the slope of line segment $OE$.\footnote{The slope of a line passing through two points $(x_1,y_1)$ and $(x_2,y_2)$ is given by $\frac{y_2-y_1}{x_2-x_1}$.} 
Then,
\begin{align}
    \rew_i(\alpha T/k) &\geq \Ar(OEPO) \nonumber\\
    & = \dfrac{1}{2}\cdot \dfrac{\alpha T}{k} \cdot \Big( \dfrac{\alpha T}{k} \slopeOC \Big) \nonumber\\
    & = \dfrac{\alpha^2 T^2 \slopeOC}{2k^2} \label{eq app: alpha less than k by 2, reward alpha T by k lower bound}
\end{align}
In the above equation, $\Ar(OEPO)$ denotes the area under the triangle $OEP$. 
Next, we use the following claim to proof of part (b) of the lemma.
\begin{restatable}{claim}{RewTUpperBoundAlphaGreaterThanKByTwo}
\label{claim: alpha less than k by 2, rew T upper bound}
For any arm $i\in [k]$ and $0 < \alpha \leq k/2$, $\rew_i(T) \leq \dfrac{25T^2}{32} \slopeOC$.
\end{restatable}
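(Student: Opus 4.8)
The plan is to sandwich the concave curve $f_i$ below a single straight line on the whole interval $[0,T+1]$ and read off the bound from that line's area. Write $N:=\alpha T/k$, so that $0<N\le T/2$ (using $\alpha\le k/2$) and, with $E=(N,f_i(N))$, we have $f_i(N)=\slopeOC\,N$ by definition of $\slopeOC$ (the slope of $OE$). Since $f_i$ has decreasing marginal returns, i.e. is concave, it admits a supporting line $\ell(x)=f_i(N)+m(x-N)$ at $E$ with $f_i(x)\le \ell(x)$ for every $x\ge 0$. The first thing I would establish is that $m\le \slopeOC$: because $\ell\ge f_i\ge 0$ we have $\ell(0)\ge 0$, and $\ell$ passes through $E$, so $m=(f_i(N)-\ell(0))/N\le f_i(N)/N=\slopeOC$. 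This is exactly the step where nonnegativity (boundedness in $[0,1]$) of $f_i$ enters, together with concavity.

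Next I would use the integral approximation $\rew_i(T)\le\int_{0}^{T+1}f_i(x)\,dx$ (Lemma \ref{prop: approximation of sum by int}) together with $f_i\le\ell$ to get $\rew_i(T)\le\int_{0}^{T+1}\ell(x)\,dx = f_i(N)(T+1)+m\bigl(\tfrac{(T+1)^2}{2}-N(T+1)\bigr)$. Since $N\le T/2<(T+1)/2$, the coefficient $\tfrac{(T+1)^2}{2}-N(T+1)=(T+1)\bigl(\tfrac{T+1}{2}-N\bigr)$ of $m$ is strictly positive, so substituting $m\le\slopeOC$ and $f_i(N)=\slopeOC N$ makes the expression collapse to $\slopeOC N(T+1)+\slopeOC\bigl(\tfrac{(T+1)^2}{2}-N(T+1)\bigr)=\slopeOC\tfrac{(T+1)^2}{2}$; geometrically this says $\rew_i(T)$ is at most the area of the triangle with vertices $O$, $(T+1,0)$, $(T+1,\slopeOC(T+1))$ in Figure \ref{fig: Lemma Case B app}. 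Finally, $4(T+1)\le 5T$ for $T\ge 4$, hence $\tfrac{(T+1)^2}{2}\le\tfrac{25T^2}{32}$, which gives $\rew_i(T)\le\tfrac{25T^2}{32}\slopeOC$; the leftover range $T<4$ can be dispatched separately, since there both $\rew_i(T)$ and the offline optimum are bounded by a constant and the $O(k)$ guarantee of Theorem \ref{theorem: k competitive algo proof} follows trivially.

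The step I expect to be the real obstacle is resisting the temptation to split $[0,T+1]$ at $N$: on $[0,N]$ the concave curve can lie strictly above the chord $OE$ (above $\slopeOC x$), so $\int_0^N f_i\le\int_0^N\slopeOC x$ is simply false, and the crude substitute $\int_0^N f_i\le N f_i(N)$ is too lossy --- it yields only the constant $29/32$, not $25/32$. The fix is to use the \emph{same} supporting line $\ell$ across the whole interval at once: on $[0,N]$ it sits above $\slopeOC x$ by precisely the amount by which it sits below $\slopeOC x$ on $[N,T+1]$ (both $\ell$ and $y=\slopeOC x$ pass through $E$, and $\ell$ has the smaller slope), so replacing $m$ by $\slopeOC$ in the integral is legitimate exactly because the net weight $\int_0^{T+1}(x-N)\,dx$ placed on the slope is nonnegative --- which is where $N\le T/2$ re-enters. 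Everything else is the routine arithmetic $16(T+1)^2\le 25T^2$.
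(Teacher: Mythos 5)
Your proof is correct and is essentially the paper's argument in algebraic form: both bound $\rew_i(T)$ by the area under the supporting line at $E$ over $[0,T+1]$ and then compare that to the triangle under the extended chord $OE$, whose area is $\slopeOC(T+1)^2/2\leq \tfrac{25}{32}T^2\slopeOC$ for $T\geq 4$. Where the paper carries out the comparison via similar triangles ($\Ar(AOEA)\leq\Ar(DCED)$, using $\alpha T/k\leq T/2$), you substitute $m\leq\slopeOC$ into the integral after checking the coefficient of $m$ is nonnegative --- the same fact, same hypotheses, cleaner bookkeeping.
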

First, we complete the proof of part (b) using the above claim, and then give the proof of the claim. 
From Claim \ref{claim: alpha less than k by 2, rew T upper bound} and Equation \ref{eq app: alpha less than k by 2, reward alpha T by k lower bound}, we have
\begin{align*}
    \dfrac{\rew_i(\alpha T/k)}{\rew_i(T)} \geq \dfrac{\alpha^2 T^2 \slopeOC}{2k^2} \cdot \dfrac{32}{25T^2 \slopeOC}
     = \dfrac{16\alpha^2}{25k^2}
\end{align*}

\begin{proof}[Proof of Claim \ref{claim: alpha less than k by 2, rew T upper bound}]
Observe in Figure \ref{fig: Lemma Case B app} that, 
$\rew_i(T) \leq \Ar(OADQO)$, where $\Ar(OADQO)$ is the area of the polygon $OADQ$.
Further, $\Ar(OCQO) \leq \dfrac{(T+1)^2\slopeOC}{2} \leq \dfrac{25T^2\slopeOC}{32}$ ($T \geq 4 \implies T+1 \leq 5T/4$), where $\Ar(OCQO)$ is the area of the triangle $OCQ$.
Hence, to prove the claim, it is sufficient to show that 
\begin{equation}
    \label{eq app: area of quad and traingle}
    \Ar(OADQO) \leq \Ar(OCQO)
\end{equation}
Further, to prove Equation \ref{eq app: area of quad and traingle}, it is sufficient to prove that
\begin{equation}
\label{eq app: area of quad and triangle 2}
    \Ar(AOEA) \leq \Ar(DCED)
\end{equation}
where, $\Ar(AOEA)$ and $\Ar(DCED)$ denotes the areas of triangles $AOE$ and $DCE$ respectively. 
We use the following two observations to prove this.
\begin{observation}
\label{obs: obs 1 similar triangle}
Triangle $AOE$ is similar to triangle $DCE$.
\end{observation}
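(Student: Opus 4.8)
The plan is to read the similarity directly off the elementary configuration in Figure~\ref{fig: Lemma Case B app}, exactly in the spirit of the similar‑triangle arguments already used in the proof of Claim~\ref{claim: similar triangle claim} and of Claim~\ref{claim: alpha greater than half, reward in T upper bound}. First I would name the relevant points in coordinates, with the horizontal axis counting pulls and the vertical axis the reward: $O$ is the origin, $E=(\alpha T/k,\,f_i(\alpha T/k))$ is the point of the curve at $\alpha T/k$ pulls, $Q=(T+1,0)$, $C$ is the point of abscissa $T+1$ on the line through $O$ and $E$ (whose slope is $\slopeOC$ by definition), and $A$, $D$ are the endpoints, at abscissas $0$ and $T+1$, of the straight segment $AD$ that passes through $E$ and bounds the polygon $OADQ$ from above. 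In particular $OA$ lies on the vertical line $x=0$ and $DC$ lies on the vertical line $x=T+1$, so $OA$ and $DC$ are parallel.

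The argument then rests on two collinearities. By construction $O$, $E$, $C$ lie on one line, and since $0<\alpha T/k\le T/2<T+1$ — here the hypothesis $\alpha\le k/2$ is used — the point $E$ lies strictly between $O$ and $C$, so the rays $EO$ and $EC$ are opposite. Similarly $A$, $E$, $D$ lie on the line $AD$, and since the abscissa of $E$ is strictly between $0$ and $T+1$, the point $E$ lies strictly between $A$ and $D$, so the rays $EA$ and $ED$ are opposite. Hence $\angle AEO$ and $\angle DEC$ are vertically opposite, giving $\angle AEO=\angle DEC$. For the second pair, regard the line $OEC$ as a transversal of the parallel lines $OA$ (the line $x=0$) and $DC$ (the line $x=T+1$): then $\angle AOE$ at $O$ and $\angle DCE$ at $C$ are alternate interior angles, so $\angle AOE=\angle DCE$. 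Two equal pairs of angles yield, by the AA criterion, that triangle $AOE$ is similar to triangle $DCE$, with the vertex correspondence $A\leftrightarrow D$, $O\leftrightarrow C$, $E\leftrightarrow E$; recording this correspondence is what will later let the claim compare the two triangles' areas (combining with the splitting of triangle $ECQ$ by the segment $ED$ into $EDQ$ and $EDC$).

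There is no real obstacle here — the statement is a one‑line consequence of vertically‑opposite and alternate‑interior angles. The only thing that must genuinely be checked is that $E$ lies strictly between $O$ and $C$ and strictly between $A$ and $D$, so that the two pairs of angles are honestly vertically opposite / alternate interior rather than coincidentally equal; this is precisely where $0<\alpha\le k/2$ enters. For the subsequent area bookkeeping it is also worth noting (though not needed for the similarity itself) that $D$ lies on the segment $CQ$, i.e.\ the top‑right corner of $OADQ$ does not rise above the line $OC$, which follows from concavity of $f_i$.
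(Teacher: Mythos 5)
Your proof is correct and is essentially identical to the paper's: both establish $\angle AEO=\angle DEC$ as vertically opposite angles and $\angle AOE=\angle DCE$ as alternate interior angles between the parallel verticals $OA$ and $DC$ cut by the transversal $OEC$, then conclude by AA similarity. The extra care you take in verifying that $E$ lies strictly between $O,C$ and between $A,D$ is a harmless (and slightly more rigorous) addition to what the paper leaves implicit from the figure.
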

\begin{observation}
\label{obs: obs 2 similar triangle}
Triangle $EFO$ is similar to triangle $EHC$.
\end{observation}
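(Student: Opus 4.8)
The plan is to finish the proof of Claim~\ref{claim: alpha less than k by 2, rew T upper bound} by establishing the two stated observations and chaining them into the area inequality $\Ar(AOEA)\leq\Ar(DCED)$, which (as the excerpt already notes) is all that remains.

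First I would dispatch Observation~\ref{obs: obs 2 similar triangle}. Both $\triangle EFO$ and $\triangle EHC$ have a right angle, at $F$ and at $H$ respectively, since $EF$ and $EH$ are horizontal while $FO$ and $HC$ are vertical. Because $F,E,H$ are collinear (the horizontal line through $E$) and $O,E,C$ are collinear ($OE$ produced to $C$), the angles $\angle FEO$ and $\angle HEC$ are vertically opposite, hence equal; two equal angles give the similarity, so $\frac{EF}{EH}=\frac{EO}{EC}=\frac{FO}{HC}$. Now $EF$ is just the abscissa $\alpha T/k$ of $E$ and $EH=(T+1)-\alpha T/k$, and the hypothesis $\alpha\leq k/2$ forces $2\alpha T/k\leq T<T+1$, hence $EF\leq EH$ and therefore $OE/CE\leq 1$ (and likewise $FO/HC\leq 1$).

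Next I would prove Observation~\ref{obs: obs 1 similar triangle} by the same kind of angle chase: $\angle AEO=\angle DEC$ are vertically opposite (using that $A,E,D$ lie on the line $AD$ and $O,E,C$ are collinear), and $\angle OAE=\angle CDE$ are alternate interior angles for the parallel vertical segments $OA$ and $DC$ cut by the transversal $AD$. Hence $\triangle AOE\sim\triangle DCE$ with correspondence $O\leftrightarrow C$, $A\leftrightarrow D$, and the common ratio of corresponding sides equals $OE/CE$, which is $\leq 1$ by the previous paragraph. Similar triangles have areas in the ratio of the square of this factor, so $\Ar(AOEA)\leq\Ar(DCED)$. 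This is exactly the reduction flagged in the excerpt, so it yields $\Ar(OADQO)\leq\Ar(OCQO)$ — the two quadrilaterals differ, after deleting the common piece cut off by the vertical line through $E$, by precisely $\triangle AOE$ and $\triangle DCE$. Feeding in the two facts already recorded there, namely $\rew_i(T)\leq\Ar(OADQO)$ and $\Ar(OCQO)\leq\frac{(T+1)^2\slopeOC}{2}\leq\frac{25T^2\slopeOC}{32}$ for $T\geq 4$, gives $\rew_i(T)\leq\frac{25T^2}{32}\slopeOC$, which is the claim.

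I do not expect a deep obstacle — this is elementary plane geometry once the figure is set up — but the step to watch is the bookkeeping: one must be certain that the common similarity ratio in Observation~\ref{obs: obs 1 similar triangle} is genuinely $OE/CE$ and that $OE/CE$ is exactly the quantity Observation~\ref{obs: obs 2 similar triangle} bounds by $1$ using $\alpha\leq k/2$. I would also re-check the one place where the reward-function structure actually enters, namely that concavity (decreasing marginal returns) of $f_i$ is what puts the segment $AD$ above the curve on $[0,T+1]$ and the segment $OC$ where it needs to be, so that $\rew_i(T)\leq\Ar(OADQO)$ and the side comparisons remain valid.
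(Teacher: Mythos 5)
Your argument for this observation is exactly the paper's: both triangles have a right angle (at $F$ and $H$, since $EF$, $EH$ are horizontal and $FO$, $HC$ vertical) and $\angle FEO = \angle HEC$ are vertically opposite because $F,E,H$ and $O,E,C$ are each collinear, so AA gives the similarity. The surrounding material on Observation \ref{obs: obs 1 similar triangle} and the chaining of side ratios also matches the paper's proof of Claim \ref{claim: alpha less than k by 2, rew T upper bound}.
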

Observation \ref{obs: obs 1 similar triangle} follows from
\begin{align*}
    \angle AEO & = \angle DEC \tag{Vertically opposite angles}\\
    \angle AOE & = \angle DCE \tag{Alternate interior angles}
\end{align*}
Similarly, Observation \ref{obs: obs 2 similar triangle} follows from 
\begin{align*}
    \angle EFO & = \angle EHC \tag{Both equal to $90\degree$}\\
    \angle FEO & = \angle HEC \tag{Vertically opposite angles}
\end{align*}

We now prove the inequality in Eq. \ref{eq app: area of quad and triangle 2}.
First, note that $\alpha \leq k/2 \implies \alpha T/k \leq T/2$, and hence
\begin{equation}
\label{eq app: obs3 eq1}
    FE \leq HE
\end{equation}
From Observation \ref{obs: obs 2 similar triangle}, we have
\begin{equation}
\label{eq app: obs3 eq2}
    \dfrac{FE}{HE} = \dfrac{OE}{CE} = \dfrac{FO}{HC}
\end{equation}
From Equations \ref{eq app: obs3 eq1} and \ref{eq app: obs3 eq2}, we get
\begin{align}
    OE &\leq CE \label{eq app: obs3 eq5}\\
    FO &\leq HC \nonumber
\end{align}
Next, from Observation \ref{obs: obs 1 similar triangle}, we have
\begin{equation}
\label{eq: obs3 eq4}
    \dfrac{OE}{CE} = \dfrac{AE}{DE} = \dfrac{AO}{DC}
\end{equation}
From Equations \ref{eq app: obs3 eq5} and \ref{eq: obs3 eq4}, we get
\begin{align}
    AE &\leq DE \label{eq: obs3 eq6}\\
    AO &\leq DC \label{eq: obs3 eq7}
\end{align}
Finally, from Equations \ref{eq app: obs3 eq5}, \ref{eq: obs3 eq6}, and \ref{eq: obs3 eq7}, we get
\begin{equation*}
    \Ar(AOEA) \leq \Ar(DCED)
\end{equation*}
This concludes the proof of Claim \ref{claim: alpha less than k by 2, rew T upper bound}.
\end{proof}

This concludes the proof of Lemma \ref{lem: rew N over rew T for arm i}.
\end{proof}

\OptNoverOptTforArmi*
\begin{proof}
\textit{Part (a): }
Fix a time horizon $T$ and let $\alpha \in [1/2,1]$.
By definition of $\opt(I,T)$, we know that $\opt(I,\alpha T) \geq \rew_i(\alpha T)$ for all arms $i\in [k]$.
Let arm $i^* \in [k]$ be the optimal arm for horizon $T$.
Then, $\rew_{i^*}(T) = \opt(I,T)$.
Hence, we have
\begin{align*}
    \dfrac{\opt(I,\alpha T)}{\opt(I,T)} \geq \dfrac{\rew_{i^*}(\alpha T)}{\rew_{i^*}(T)} 
     \geq \dfrac{1}{5} \tag{From Lemma \ref{lem: rew N over rew T for arm i}a}
\end{align*}

\textit{Part (b): }
As before, 
Fix a time horizon $T$ and let $\alpha \in (0,k/2]$.
By definition of $\opt(I,T)$, we know that $\opt(I,\alpha T/k) \geq \rew_i(\alpha T/k)$ for all arms $i\in [k]$.
Let arm $i^* \in [k]$ be the optimal arm for horizon $T$.
Then, $\rew_{i^*}(T) = \opt(I,T)$.
Hence, we have
\begin{align*}
    \dfrac{\opt(I,\alpha T/k)}{\opt(I,T)}  \geq \dfrac{\rew_{i^*}(\alpha T/k)}{\rew_{i^*}(T)} 
     \geq \dfrac{16\alpha^2}{25k^2} \tag{From Lemma \ref{lem: rew N over rew T for arm i}b}
\end{align*}

\end{proof}
%
\RewNOverOPTTForArmi*
\begin{proof}
We prove the lemma separately for the following two cases:
\begin{enumerate}
    \item Arm $i$ is the first arm to cross $N_i - 1$ pulls
    \item Arm $i$ is not the first arm to cross $N_i - 1$ pulls
\end{enumerate}
\textbf{Case 1}: Arm $i$ is the first arm to cross $N_i - 1$ pulls. Since arm $i$ is the first arm to cross $N_i-1$ pulls, it is the first arm to be pulled for $N_i$-th time. Hence, we have
\begin{align*}
    \rew_i(N_i) &\geq \rew_i(N_i - 1) \tag{$f_i$'s are non-negative implies $\rew_i$ is a non-decreasing function}\\
    & = \opt(I, N_i - 1) \tag{From Lemma \ref{lemms: first arm to cross N pulls}}\\
    & \geq \opt(I, N_i/2) 
\end{align*}
The last inequality follows from the fact that $N_i \geq 2$ implies $N_i-1\geq N_i/2$ and $f_i$'s are non-negative implies that $\opt(I,T)$ is a non-decreasing function. From this, we obtain
\begin{equation}
    \label{eq: case 1 rew N over opt T}
    \dfrac{\rew_i(N_i)}{\opt(I,T)} \geq \dfrac{\opt(I,N_i/2)}{\opt(I,T)}
\end{equation}
Now, let $\dfrac{N_i}{2} = \dfrac{\alpha T}{k}$. Since, $N_i \leq N_1 \leq T/2$, we have $\alpha \leq k/2.$ Then,
\begin{align*}
    \dfrac{\rew_i(N_i)}{\opt(I,T)} &\geq \dfrac{\opt(I,N_i/2)}{\opt(I,T)} \tag{From Equation \ref{eq: case 1 rew N over opt T}}\\
    & = \dfrac{\opt(\alpha T/k)}{\opt(I,T)} \tag{Substituting $N_i = \alpha T/k$}\\
    & \geq \dfrac{\alpha^2}{4k^2} \tag{From Corollary \ref{corr: opt N by opt T for arm i}b}\\
    & = \dfrac{N_i^2}{16T^2} \tag{Substituting $\alpha = \dfrac{kN_i}{2T}$}\\
    & \geq \dfrac{N_i^2}{32T^2}
\end{align*}
%
\textbf{Case 2}: Arm $i$ is not the first arm to cross $N_i - 1$ pulls. Let $t$ be the time step at which arm $i$ is pulled for the $N_i$-th time by our algorithm. At $t$, let $N = \max_{j\in[k]} N_j(t)$, i.e., $N$ is the maximum number of pulls any arm has received until time step $t$. Further, among all arms that have been pulled $N$ number of times at time step $t$, let $\ell \in [k]$ denote the arm index of the first arm to cross $N-1$ pulls, i.e., arm $\ell$ is the first arm to be pulled for the $N$-th time. Next, define $N'_\ell = N - 1$. In the remainder of the proof, we work with $N'_\ell$ instead of $N$.


Let $a = N_i -1$ and $b = N'_\ell + 1 - (N_i - 1) = N'_\ell -N_i + 2$. Further, let $\Delta_i = \Delta_i(N_i - 1) = f_i(N_i-1) - f_i(N_i-2)$.
From Lemma \ref{lemms: first arm to cross N pulls}, we have $\rew_\ell(N'_\ell) = \opt_\ell(I,N'_\ell)$. Further, we know that Algorithm \ref{algo: horizon unaware improving bandits} pulled arm $i$ at time step $t$. This implies
\begin{align}
    \rew_i(a) + 2bf_i(a) + 2b^2\Delta_i & \geq \rew_i(a) + (b+1)f_i(a) + \frac{1}{2} (b+1)^2 \Delta_i \tag{Since $b \geq 1$}\\
    & \geq p_i(t) \tag{Using left Riemann sum}\\
    &\geq \rew_\ell(N'_\ell + 1) \nonumber \tag{Since this equal $p_\ell(t)$ and arm $i$ was pulled}\\
    & \geq \rew_\ell(N'_\ell) \nonumber\\
    & = \opt(I,N'_\ell) \label{eq: optimistic estimate inequality}
\end{align}

\begin{figure}
\centering
    \includegraphics[scale=0.6]{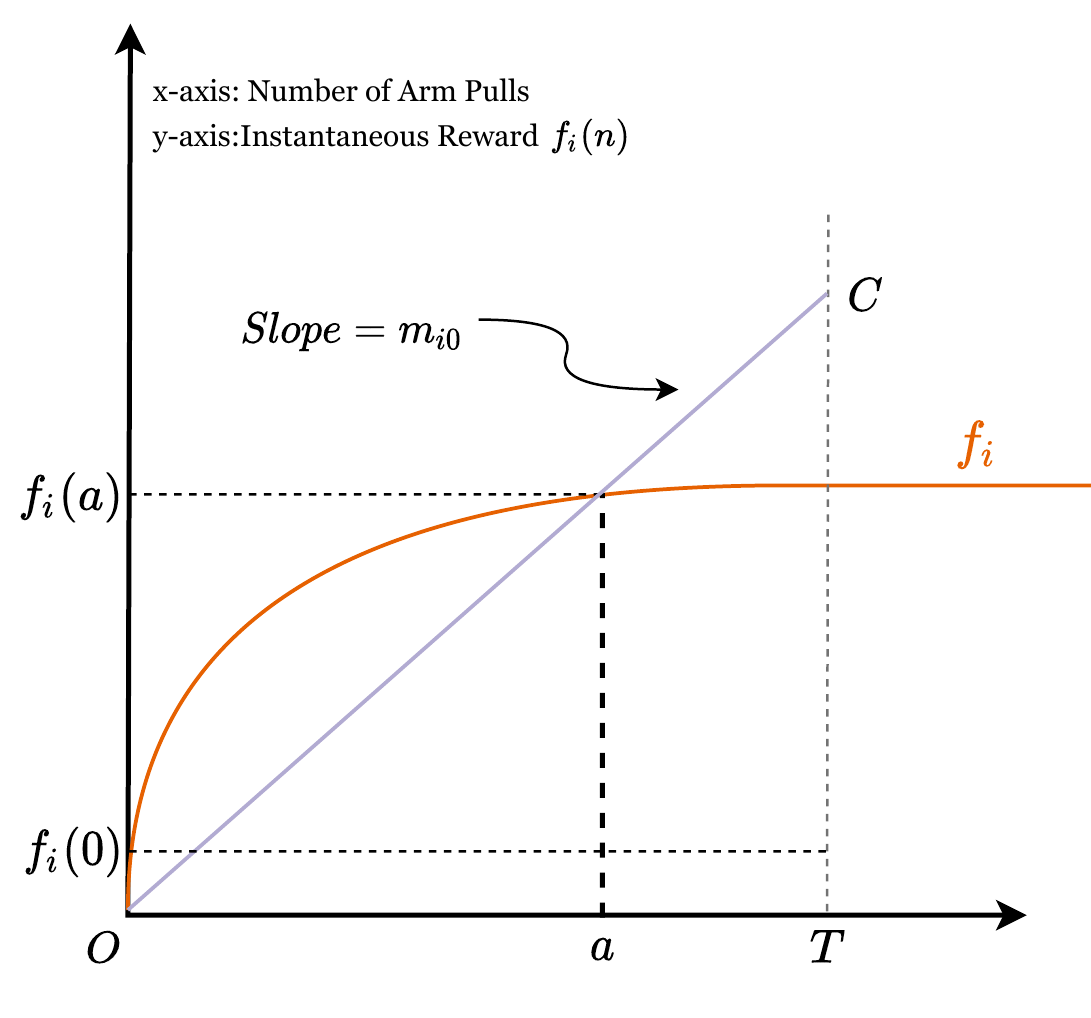}
    \captionof{figure}{Illustration for Case 2 of Lemma \ref{lemma: rew N by opt T for arm i} }
    \label{fig: Lemma Proof app}
\end{figure}

We now consider two sub-cases of Case 2.

\noindent\textbf{Case 2a:} {$f_i(a) \geq b\Delta_i$}. In this case, from Equation \ref{eq: optimistic estimate inequality} we obtain
\begin{equation*}
   \rew_i(a) + 4bf_i(a) \geq \rew_i(a) + 2b(f_i(a) + b\Delta_i) \geq \opt(I,N'_\ell)
\end{equation*}
which implies that,
\begin{equation}
\label{eq: case 2a}
    f_i(a) \geq \dfrac{\opt(I,N'_\ell) - \rew_i(a)}{4b}
\end{equation}
From the non-decreasing and decreasing marginal returns property of $f_i$, we get
\begin{align*}
    \rew_i(a) &\geq \dfrac{a}{2} f_i(a) \tag{From the properties of $f_i$}\\
    & \geq \dfrac{a}{2} \dfrac{\opt(I, N'_\ell) - \rew_i(a)}{4b} \tag{From Eq. \ref{eq: case 2a}}\\
    \implies \rew_i(a)\Big(1 + \dfrac{a}{8b}\Big) &\geq \dfrac{a}{8b} \opt(I, N'_\ell)\\
    \implies \rew_i(N_i) \geq \rew_i(a) &\geq \dfrac{a}{a+8b} \opt(I, N'_\ell)\\
    &\geq \dfrac{a}{8(a+b)} \opt(I, N'_\ell)\\
    &\geq \dfrac{N_i}{32 N'_\ell}\opt(I, N'_\ell)
\end{align*}
The last inequality is obtained as follows: $a = N_i - 1 \geq N_i/2$ for $N_i \geq 2$. Further, $a+b = N'_\ell + 1 \leq 2N'_\ell$ for $N'_\ell \geq 1$. Next, dividing by $\opt(I, T)$ on both sides,
\begin{equation}
\label{eq: rew over opt case 2a}
    \dfrac{\rew_i(N_i)}{\opt(I, T)} \geq \dfrac{N_i}{32N'_\ell}\cdot\dfrac{\opt(I, N'_\ell)}{\opt(I, T)}
\end{equation}

Let $N'_\ell = \alpha T/k$. Since $N_1 \leq T/2$, we know $\alpha \leq k/2$. Then,
\begin{align*}
    \dfrac{\rew_i(N_i)}{\opt(I, T)} & \geq \dfrac{kN_i}{32\alpha T} \cdot \dfrac{16\alpha^2}{25k^2} \tag{Substituting for $N'_\ell$ and from Cor. \ref{corr: opt N by opt T for arm i}b}\\
    & = \dfrac{\alpha N_i}{50kT}\\
    & = \dfrac{N_i}{50T^2}\cdot \dfrac{\alpha T}{k}\\
    & = \dfrac{N_i}{50T^2}\cdot N'_\ell\\
    & \geq \dfrac{N_i}{50T^2}\cdot\dfrac{N_i}{2} \tag{$N'_\ell + 1 \geq N_i \implies N'_\ell \geq N_i/2$ for $N_i\geq 2$}\\
    & = \dfrac{N_i^2}{100T^2} 
\end{align*}

\noindent\textbf{Case 2b}: $f_i(a) < b\Delta_i$.
In this case, from Equation \ref{eq: optimistic estimate inequality} we obtain
\begin{equation*}
    \rew_i(a) + 4b^2\Delta_i \geq \rew_i(a) + 2b(f_i(a) + b\Delta_i) \geq \opt(I, N'_\ell)
\end{equation*}
which implies that,
\begin{equation}
\label{eq: case 2b}
    \Delta_i \geq \dfrac{\opt(I, N'_\ell) - \rew_i(a)}{4b^2}
\end{equation}
Let $\slopeIO = \dfrac{f_i(a) - 0}{a - 0} = \dfrac{f_i(a)}{a}$. From the decreasing marginal returns property of $f_i$, we get $\slopeIO \geq \Delta_i.$
Further,
\begin{align}
    \rew_i(a) &\geq \dfrac{1}{2}a f_i(a) \tag{See Figure \ref{fig: Lemma Proof app}} \nonumber\\
    & = \dfrac{1}{2}a^2 \slopeIO \tag{From the definition of $\slopeIO$ above} \nonumber\\
    & \geq \dfrac{1}{2}a^2 \Delta_i  \nonumber\\
    & \geq \dfrac{1}{2}\cdot \dfrac{a^2}{4b^2} [\opt(I, N'_\ell) - \rew_i(a)] \tag{From Eq. \ref{eq: case 2b}} \nonumber\\
    \implies \Big(1 + \dfrac{a^2}{8b^2}\Big) \cdot \rew_i(a) &\geq \dfrac{a^2}{8b^2}\opt(I, N'_\ell) \label{eq: case 2b rew N lowerB}
\end{align}
Then, we get
\begin{align*}
    \rew_i(N_i) &\geq \rew_i(a) \tag{$N_i > a = N_i-1$ and $\rew_i$ is non-decreasing}\\
    &\geq \dfrac{a^2}{a^2+8b^2}\opt(I, N'_\ell) \tag{From Equation \ref{eq: case 2b rew N lowerB}}\\
    &\geq \dfrac{a^2}{8(a^2 + b^2)}\opt(I, N'_\ell)\\
    & \geq \dfrac{N_i^2}{128{N'_\ell}^2}\opt(I, N'_\ell)
\end{align*}
The last inequality is obtained as follows: $a = N_i - 1 \geq N_i/2$ for $N_i \geq 2$ which implies $(N_i - 1)^2 \geq N_i^2/4$ for $N_i \geq 2$. Further, $a^2 + b^2 \leq (a+b)^2 = (N'_\ell + 1)^2 \leq 4{N'_\ell}^2$ for $N'_\ell \geq 1$. Dividing by $\opt(I,T)$ on both sides,
\begin{equation}
\label{eq: rew over opt case 2b}
    \dfrac{\rew_i(N_i)}{\opt(I, T)} \geq \dfrac{N_i^2}{128{N'_\ell}^2}\cdot\dfrac{\opt(I, N'_\ell)}{\opt(I, T)}
\end{equation}
Let $N'_\ell = \dfrac{\alpha T}{k}$. As before, $N_1 \leq T/2 \implies \alpha \leq k/2.$
\begin{align*}
    \dfrac{\rew_i(N_i)}{\opt(I, T)} &\geq \dfrac{k^2N_i^2}{128\alpha^2T^2}\cdot \dfrac{16\alpha^2}{25k^2}\tag{Substituting for $N'_\ell$ and from Lemma \ref{lem: rew N over rew T for arm i}b}\\
    &= \dfrac{N_i^2}{200T^2}.
\end{align*}

\end{proof}

Now we complete the proof of Theorem \ref{theorem: k competitive algo proof} using the above lemmas and corollary.
\begin{proof}[Proof of Theorem \ref{theorem: k competitive algo proof}]
We look at the following two cases:
\begin{enumerate}
    \item $N_1 > T/2$
    \item $N_1 \leq T/2$
\end{enumerate}
\noindent\textbf{Case 1: } $N_1 > T/2$ 

$N_1 > T/2$ implies that arm $1$ is the only arm to cross $T/2$ pulls, and hence, the first arm to cross $T/2$ pulls. From Lemma \ref{lemms: first arm to cross N pulls}, we get $$\rew_1(T/2) = \opt(I, T/2).$$
From Corollary \ref{corr: opt N by opt T for arm i}(a), we get $$\dfrac{\opt(I, T/2)}{\opt(I, T)} \geq \dfrac{1}{5}.$$
Further, we know that $\alg(I, T) \geq \rew_1(T/2) = \opt(I, T/2)$.
From this, we get $$\dfrac{\opt(I, T)}{\alg(I, T)} \leq \dfrac{\opt(I, T)}{\opt(I, T/2)} \leq 5 \leq 200k.$$

\noindent\textbf{Case 2: } $N_1 \leq T/2$ \\
From Lemma \ref{lemma: rew N by opt T for arm i}, we have $$\dfrac{\rew_i(N_i)}{\opt(I,T)} \geq \dfrac{N_i^2}{200T^2}.$$
Further, $\alg(I,T) = \sum_{i\in [k]}\rew_i(N_i)$.
From this we get 
\begin{align*}
    \dfrac{\alg(I,T)}{\opt(I,T)} &= \dfrac{\sum_{i\in [k]}\rew_i(N_i)}{\opt(I,T)}
     = \sum_{i\in [k]} \dfrac{\rew_i(N_i)}{\opt(I,T)}\\
    & \geq \sum_{i\in[k]} \dfrac{N_i^2}{200T^2}\\
    & \geq \dfrac{1}{200k} \tag{From Observation \ref{obs: cauchy schwarz}}
\end{align*}
The last inequality in the above equation follows from Observation \ref{obs: cauchy schwarz} stated next.
\begin{observation}
\label{obs: cauchy schwarz}
Let $N_1, N_2, \ldots, N_k \in \mathbb{N}$ such that $N_1 + N_2 + \ldots + N_k = T$. Then $\sum_{i\in[k]} N_i^2 \geq \dfrac{T^2}{k}$.
\end{observation}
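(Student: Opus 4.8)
\textbf{Proof proposal for Observation \ref{obs: cauchy schwarz}.} This is a standard convexity/Cauchy--Schwarz inequality, so the plan is short. The cleanest route is to apply the Cauchy--Schwarz inequality to the vectors $(N_1,\dots,N_k)$ and $(1,\dots,1)$ in $\mathbb{R}^k$. This gives
\[
\Big(\sum_{i\in[k]} N_i\Big)^2 = \Big(\sum_{i\in[k]} N_i\cdot 1\Big)^2 \leq \Big(\sum_{i\in[k]} N_i^2\Big)\Big(\sum_{i\in[k]} 1^2\Big) = k\sum_{i\in[k]} N_i^2.
\]
Since $\sum_{i\in[k]} N_i = T$ by hypothesis, the left-hand side equals $T^2$, and rearranging yields $\sum_{i\in[k]} N_i^2 \geq T^2/k$, as required.

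An equivalent argument I could use instead is Jensen's inequality applied to the convex function $x\mapsto x^2$: the average of the $N_i^2$ is at least the square of the average, i.e. $\frac1k\sum_{i\in[k]} N_i^2 \geq \big(\frac1k\sum_{i\in[k]} N_i\big)^2 = (T/k)^2$, which gives the same bound after multiplying by $k$. Either way the statement is purely about real numbers; the fact that the $N_i$ lie in $\mathbb{N}$ and sum to an integer $T$ is not needed beyond the stated constraint $\sum N_i = T$.

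There is essentially no obstacle here — the only thing to be careful about is that the observation is stated with $N_i \in \mathbb{N}$ but the inequality holds for arbitrary nonnegative (indeed arbitrary) reals with the given sum, so I would simply note that no integrality is used. In the surrounding proof of Theorem \ref{theorem: k competitive algo proof}, this observation is invoked with $N_i = N_i(T+1)$ and $\sum_i N_i = T$, which matches the hypothesis exactly, so nothing further is needed to close Case 2.
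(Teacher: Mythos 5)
Your proof is correct and is essentially identical to the paper's: both apply Cauchy--Schwarz to the vectors $(N_1,\dots,N_k)$ and $(1,\dots,1)$ and rearrange using $\sum_i N_i = T$. Nothing further is needed.
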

\begin{proof}
Let $\mathbf{u} = (N_1, N_2,\ldots,N_k)$ and $\mathbf{v} = (1,1,\ldots,1)$. Then,
\begin{align*}
    \lVert \mathbf{u} \rVert^2 \cdot \lVert \mathbf{v} \rVert^2 & \geq |\langle \mathbf{u},\mathbf{v} \rangle|^2 \tag{From Cauchy-Schwarz Inequality}\\
    \implies \sum_{i\in[k]} N_i^2 \cdot k & \geq T^2 \tag{From the definition of $\mathbf{u}$ and $\mathbf{v}$}\\
    \implies \sum_{i\in[k]} N_i^2 &\geq \dfrac{T^2}{k}
\end{align*}
This completes the proof of Observation \ref{obs: cauchy schwarz}.
\end{proof}
From the arguments above it follows that $\dfrac{\opt(I,T)}{\alg(I,T)} \leq 200k$. Since this holds for an arbitrary instance $I$, we have that $\compRatio(\alg) \leq 200k$.
\end{proof}

\section{Proof of Theorem \ref{theorem: fairness of optimal algo}}
\label{app: proof of fairness}

\OptAlgoFairness*
\begin{proof}
The proof of the theorem is completed using Lemma \ref{lemma: fairness of optimal algo} stated after the proof of the theorem. 
Suppose $L_i$ as defined in Lemma \ref{lemma: fairness of optimal algo} is finite for an arm $i\in [k]$. 
Then from Lemma \ref{lemma: fairness of optimal algo} we have $\Delta(L_i) = 0$ and the marginal decreasing property of the reward functions ensures that arm $i$ has reached its true potential,
i.e, $f_i(L_i) = a_i$. Further, if $L_i$ is not finite then again from Lemma \ref{lemma: fairness of optimal algo} the arm is pulled infinitely many times. Hence,
from the properties of the reward functions we have that for every $\varepsilon \in (0,a_i]$, there exists $T\in \mathbb{N}$ such that $\alg$ ensures the following:
$a_i - f_i(N_{i}(T)) \leq \varepsilon \,.$
\end{proof}

\FairnessLemma*
\begin{proof}
First, we note that there exists an arm $i \in [k]$ such that $L_i$ is infinite. 
If not, then the algorithm will not pull any arm beyond $\sum_{i\in[k]} L_i$ time steps which gives a contradiction.
Hence, there exists an arm that is pulled infinitely many times.
Without loss of generality, let this be arm $1$.
Hence, we conclude that the lemma holds for arm $1$.
We prove the lemma by showing that the property in the lemma cannot hold for just $\ell \in [1,k-1]$ arms.
For the sake of contradiction, assume that the lemma holds for exactly $1\leq \ell < k$ arms and does not hold for the remaining $k-\ell$ arms.
Without loss of generality let arms $C = \{1,\ldots,\ell\}$ be the set of arms for which the lemma holds and $\overline{C} = \{\ell+1,\ldots,k\}$ be the set of arms for which the lemma does not hold.
Therefore, for all arms $j\in \overline{C}$, $L_j$ is finite and $\Delta_{j}(L_j) \neq 0$.
For clarity of writing, we use $m$ to denote $\ell+1$, i.e., $m=\ell+1$.
Let $\Delta = \Delta_m(L_m)$.
We will show that there exists an arm $j$ in $\overline{C}$ that will be pulled more than $L_j$ times, thus leading to a contradiction.

Let $T_i = \arg\max_{t\in \mathbb{N}}\{N_{i,t}\}$ for all $i\in [k]$. 
Note that if $L_i$ is finite then $T_i$ is the time step at which arm $i$ is pulled for the $L_i$-th time.
Next, we choose $T\in \mathbb{N}$ such that:
\begin{enumerate}
    \item $T > \max_{j\in \overline{C}} T_j$
    \item $\Delta_i(N_i(T)) < \frac{\Delta}{2}$ for all $i\in C$
\end{enumerate}

Note that $L_j$ if finite for all $i\in \overline{C}$. Hence, it is easy to find $T$ satisfying (1). Further, for any $i\in C$, one of the following holds:
\begin{enumerate}
    \item $L_i$ is infinite: In this case, since the reward functions are bounded in $(0,1)$ and they have decreasing marginal returns,
    we can conclude that such a $T$ exists. 
    \item $L_i$ is finite and $\Delta_i(L_i) = 0$: In this case, for some $T \leq L_i$, $\Delta_i(N_i(T)) < \frac{\Delta}{2}$.
\end{enumerate}

To show that there exists an arm $j\in \overline{C}$ that is pulled more than $L_j$ times, we show that there exists $T'\geq T$ such that the optimistic estimate of
arm $m$, $p_m(T')$, is more than the optimistic estimate, $p_i(T')$ of any arm $i\in C$. This would imply at time $T'$ either of the following
choices are made by the algorithm: a) if $p_m(T') > p_j(T')$ for all $j\in \overline{C}\setminus \{m\}$ then arm $m$ is pulled by the algorithm,
or b) if there exists $j\in C$ such that $p_m(T') < p_j(T')$ then an arm from $\overline{C}\setminus \{m\}$ is pulled. Notice that in both cases an arm from
set $\overline{C}$ is pulled leading to a contradiction. Hence, to complete the proof of the lemma,
we show that there exists $T' \geq T$ such that $p_m(T') > p_i(T')$ for all $i\in C$.

Let arm $i$ be some arm in the set $C$. 
At $T'$ let $N^*$ denote the number of arm pulls of the arm that has been pulled the maximum number of times in $T'$ time steps.
We first provide an upper bound on $p_i(T')$.
Note that 
\begin{align}
    p_i(T') & = \rew_i(N_i(T')) + \sum_{n=1}^{N^*-N_i(T')} [f_i(N_i(T')) + n\cdot\Delta_i(N_i(T'))] \nonumber\\
    & \substack{\leq \\ (i)} \rew_i(N_i(T)) + \sum_{n=1}^{N^* - N_i(T)} [f_i(N_i(T)) + n\cdot\Delta_i(N_i(T))] \nonumber\\
    & \leq \rew_i(N_i(T)) + \sum_{n=1}^{N^* - N_i(T)} \Big[f_i(N_i(T)) + n\cdot\frac{\Delta}{2}\Big] \tag{Since $T$ is such that $\Delta_i(N_i(T)) < \Delta/2$ for $i\in C$}
    \nonumber\\
    & = \rew_i(N_i(T)) + (N^* - N_i(T))f_i(N_i(T)) + (N^* - N_i(T))(N^* - N_i(T) + 1)\frac{\Delta}{4} \nonumber\\
    &\leq \rew_i(N_i(T)) + (N^* - N_i(T))f_i(N_i(T)) + (N^* - N_i(T) + 1)^2 \frac{\Delta}{4}  \label{eq: bounding piT'}
\end{align}
Inequality (i) in the above above holds because, $T \leq T'$ implies that $N_i(T) \leq N_i(T')$.
Hence, note that the optimistic estimate computed with $N_i(T)$ is at least as much as that computed with $N_i(T')$ from the concavity and decreasing marginal returns property of $f_i$.
Next observe that, 
$$p_m(T') \geq \rew_m(L_m) + (N^* - L_m)f_m(L_m) + (N^* - L_m)^2\frac{\Delta}{2}.$$
To complete our proof, we need to show that $p_m(T') - p_i(T') > 0$.
From Eqn. \ref{eq: bounding piT'}, it is sufficient to show that
$$p_m(T') - \rew_i(N_i(T)) - (N^* - N_i(T))f_i(N_i(T)) - (N^* - N_i(T) + 1)^2 \frac{\Delta}{2} > 0.$$
We now analyze this quantity. Note that,
\begin{align}
    p_m(T') - p_i(T') &\geq \rew_m(L_m) + (N^* - L_m)f_m(L_m) + (N^* - L_m)^2\frac{\Delta}{2} \nonumber\\
    &~~~~ - \rew_i(N_i(T)) - (N^* - N_i(T))f_i(N_i(T)) - (N^* - N_i(T) + 1)^2 \frac{\Delta}{4} \nonumber\\
    & = \rew_m(L_m) - \rew_i(N_i(T)) + (N^* - L_m)f_m(L_m) - (N^* - N_i(T))f_i(N_i(T)) \nonumber\\
    &~~~~ + (N^* - L_m)^2\frac{\Delta}{2} - (N^* - N_i(T) + 1)^2 \frac{\Delta}{4} \label{eq: N large enough}
\end{align}
Note that all other terms on the RHS above, except $N^*$, are constant with respect to $T'$. 
Further, $N^*$ increases as $T'$ increases.
For large enough $N^*$, the ${(N^*)}^2$ terms (Eq. \ref{eq: N large enough}) will dominate.
Since ${(N^*)}^2 (\Delta/2 - \Delta/4) = \frac{{(N^*)}^2\Delta}{4}$ is a positive quantity, we can find $T'$ large enough such that the quantity on the RHS in Eq. \ref{eq: N large enough} is positive.

We have shown that the property in the lemma cannot be true for only $\ell\in [1,k-1]$ arms.
Hence, we conclude that the lemma holds for all arms $i\in [k]$. 
\end{proof}

\end{document}